\documentclass{tlp}
\usepackage[english]{babel}
\usepackage{amsmath,amssymb}

\submitted{20 March 2012}
\revised{19 August 2013}
\accepted{5 November 2013}

\hyphenation{ge-ne-ral un-foun-ded}

\newcommand{\resp}{\emph{resp.}\ }
\newcommand{\condPi}[2]{\cond{\Pi}{#1}{#2}}
\newcommand{\condN}[2]{\cond{N}{#1}{#2}}
\newcommand{\cond}[3]{\ensuremath{#1\left(#2\mid#3\right)}}
\newcommand{\complexity}[1]{\ensuremath{\mathsf{#1}}}
\newcommand{\cP}[0] {\complexity{P}}
\newcommand{\cC}[0] {\complexity{C}}
\newcommand{\cNP}[0]{\complexity{NP}}
\newcommand{\cco}[0]{\complexity{co}}
\newcommand{\ccoNP}[0]{\cco\cNP}
\newcommand{\hisig}[2]{\ensuremath{\Sigma_{#1}^{#2}}}
\newcommand{\hipi}[2]{\ensuremath{\Pi_{#1}^{#2}}}

\newcommand{\spolpi}[1]{\hipi{#1}{\mathbf{\cP}}}

\newcommand{\spolsig}[1]{\hisig{#1}{\mathbf{\cP}}}
\newcommand{\tlabel}[1]{\ensuremath{{}^{(#1)}}}
\newcommand{\tref}[2]{\tlabel{#1} #2}
\newcommand{\aset}[1] {\ensuremath{\left\{#1\right\}}}
\providecommand{\set}[1] {\aset{#1}}
\newcommand{\condset}[2]{\aset{{#1}\mid{#2}}}
\newcommand{\range}[2] {\rangec{#1}{#2}{,}}
\newcommand{\rangec}[3] {\ensuremath{#1#3...#3#2}}
\newcommand{\prange}[3] {\range{#1 #2}{#1 #3}}

\newcommand{\srange}[3] {\range{#1_{#2}}{#1_{#3}}}
\newcommand{\psrange}[4] {\prange{#1}{{#2}_{#3}}{{#2}_{#4}}}
\newcommand{\srangec}[4] {\rangec{#1_{#2}}{#1_{#3}}{#4}}
\newcommand{\psrangec}[5] {\rangec{#1 #2_{#3}}{#1 #2_{#4}}{#5}}
\providecommand{\suchthat}{\ensuremath{ \cdot }}
\newcommand{\sForall}[2]{\ensuremath{\forall #1 \suchthat #2}}

\newcommand{\function}[3]{\ensuremath{#1 : #2\lothen#3}}
\newcommand{\eg}[0] {e.g.~}
\newcommand{\ie}[0] {i.e.~}
\newcommand{\wrt}[0] {w.r.t.~}
\newcommand{\concept}[1]{\emph{#1}}

\newcommand{\pointtosec}[1]{Section~\ref{sec:#1}}
\newcommand{\pointtoex}[1]{Example~\ref{ex:#1}}
\newcommand{\pointtodef}[1]{Definition~\ref{def:#1}}

\newcommand{\pointtoprop}[1]{Proposition~\ref{prop:#1}}
\newcommand{\pointtocor}[1]{Corollary~\ref{cor:#1}}
\newcommand{\pointtotbl}[1]{Table~\ref{tbl:#1}}

\newtheorem{corollary}{Corollary}
\newtheorem{proposition}{Proposition}
\newtheorem{lemma}{Lemma}
\newtheorem{definition}{Definition}
\newtheorem{example}{Example}
\newcommand{\naf}[0]{not~}
\newcommand{\snaf}[0]{not}
\newcommand{\arule}[2]{\ensuremath{#1 \leftarrow #2}}

\newcommand{\hbase}[1]{\ensuremath{\mathcal{B}_{#1}}}
\newcommand{\hlit}[1]{\ensuremath{Lit_{#1}}}
\newcommand{\hclause}[1]{\ensuremath{\mathit{Clause}_{#1}}}
\newcommand{\srule}[1]{\arule{#1_0}{\sbody{#1}{1}{m}}}
\newcommand{\grule}[1]{\arule{#1_0}{\gbody{#1}{1}{m}{n}}}
\newcommand{\sbodyi}[3]{\range{#1_{#2}}{#1_{#3}}}
\newcommand{\gbodyi}[4]{\sbodyi{#1}{#2}{#3}, \prange{\naf}{#1_{#3+1}}{#1_{#4}}}
\newcommand{\sbody}[3]{\sbodyi{#1}{#2}{#3}}
\newcommand{\gbody}[4]{\gbodyi{#1}{#2}{#3}{#4}}

\newcommand{\drule}[1]{\arule{\rangec{#1_0}{#1_k}{;}}{\gbody{#1}{k+1}{m}{n}}}

\newcommand{\prule}[3]{\ensuremath{\textnormal{\textbf{#1: }}\arule{\mathit{#2}}{\mathit{#3}}}}
\newcommand{\parule}[3]{\ensuremath{\textnormal{\textbf{#1: }}&\arule{\mathit{#2}}{\mathit{#3}}}}
\newcommand{\pdrule}[1]{\ensuremath{\mathbf{\boldsymbol\lambda:\ }\drule{#1}}}
\newcommand{\plit}[2]{{#1}^{\mathit{#2}}}
\newcommand{\loand}[0]{\ensuremath{\wedge}}
\newcommand{\lothen}[0]{\ensuremath{\rightarrow}}

\newcommand{\prulealign}[3]{\ensuremath{\mathbf{#1\!:\!\ }&\arule{#2}{#3}}}

\newcommand{\proofat}[1]{The proof is given in the online appendix of the paper, pp. #1.}

 \title[Characterizing and Extending ASP using Possibility Theory]{Characterizing and Extending Answer Set Semantics using Possibility Theory}
 
  \author[Kim Bauters et al.]
         {KIM BAUTERS\\
         Department of Applied Mathematics, Computer Science and Statistics, Universiteit Gent\\ Krijgslaan 281 (WE02), 9000 Gent, Belgium\\
         \email{kim.bauters@gmail.com}
         \and
         STEVEN SCHOCKAERT\\
         School of Computer Science \& Informatics, Cardiff University\\ 5 The Parade, Cardiff CF24 3AA, United Kingdom\\
         \email{s.schockaert@cs.cardiff.ac.uk}
         \and
         MARTINE DE COCK\\
         Department of Applied Mathematics, Computer Science and Statistics, Universiteit Gent\\ Krijgslaan 281 (WE02), 9000 Gent, Belgium\\
         \email{martine.decock@ugent.be}
         \and
         DIRK VERMEIR\\
         Department of Computer Science, Vrije Universiteit Brussel,\\ Pleinlaan 2, 1050 Brussel, Belgium\\
         \email{dirk.vermeir@vub.ac.be}}

\begin{document}
\maketitle

\begin{abstract}
Answer Set Programming (ASP) is a popular framework for modeling combinatorial problems. However, ASP cannot easily be used for reasoning about uncertain information. Possibilistic ASP (PASP) is an extension of ASP that combines possibilistic logic and ASP. In PASP a weight is associated with each rule, where this weight is interpreted as the certainty with which the conclusion can be established when the body is known to hold. As such, it allows us to model and reason about uncertain information in an intuitive way. In this paper we present new semantics for PASP, in which rules are interpreted as constraints on possibility distributions. 
Special models of these constraints are then identified as possibilistic answer sets. In addition, since ASP is a special case of PASP in which all the rules are entirely certain, we obtain a new characterization of ASP in terms of constraints on possibility distributions.
This allows us to uncover a new form of disjunction, called weak disjunction, that has not been previously considered in the literature. In addition to introducing and motivating the semantics of weak disjunction, we also pinpoint its computational complexity. In~particular, while the complexity of most reasoning tasks coincides with standard disjunctive ASP, we find that brave reasoning for programs with weak disjunctions is easier.
\end{abstract}

\begin{keywords}
logic programming, answer set programming, possibility theory
\end{keywords}


\section{Introduction}\label{sec:introduction}
Answer set programming (ASP) is a form of logic programming with a fully declarative semantics, centered around the notion of a stable model. Syntactically, an ASP program is a set of rules of the form $(\arule{head}{body})$ where $head$ is true whenever $body$ is true. Possibilistic ASP (PASP) extends upon ASP by associating a weight with every rule, which is interpreted as the necessity with which we can derive the head of the rule when the body is known to hold.
Semantics for PASP have been introduced in~\cite{nicolas:possibilistic} for possibilistic normal programs and later extended to possibilistic disjunctive programs in~\cite{nieves:semantics}. Under these semantics, a~possibilistic rule with certainty~$\lambda$ allows us to derive $head$ with certainty $\min(\lambda, N(body))$ where $N(body)$ denotes the necessity of the body, \ie the certainty of $head$ is restricted by the least certain piece of information in the derivation chain. 
Specifically, to deal with PASP rules without negation-as-failure, the semantics from~\cite{nicolas:possibilistic} treat such rules as implications in possibilistic logic~\cite{dubois:possibilistic}. 
When faced with negation-as-failure, the semantics from~\cite{nicolas:possibilistic} rely on the reduct operation from classical ASP. Essentially, this means that the weights associated with the rules are initially ignored, the classical reduct is determined and the weights are then reassociated with the corresponding rules in the reduct.
Given this particular treatment of negation-as-failure, the underlying intuition of `$\naf l$' is ``$\emph{`$l$' cannot be derived with a strictly positive certainty}$''. Indeed, as soon as `$l$' can be derived with a certainty $\lambda > 0$, `$l$' is treated as true when determining the reduct. However, this particular understanding of negation-as-failure is not always the most intuitive one.

Consider the following example. You want to go to the airport, but you notice that your passport will expire in less than three months. Some countries require that the passport is at least valid for an additional three months on the date of entry. As~such, you have some certainty that your passport might be invalid ($\mathit{invalid}$). 
When you are not entirely certain that your passport is invalid, you should still go to the airport ($\mathit{airport}$) and check-in nonetheless. Indeed, since you are not absolutely certain that you will not be allowed to board, you might still get lucky. We have the possibilistic program:
  \begin{align*}
    \parule{0.1}{\mathit{invalid}}{}\\
    \parule{1}{\mathit{airport}}{\naf \mathit{invalid}}
  \end{align*}
where $0.1$ and $1$ are the weights associated with the rules (\arule{\mathit{invalid}}{}) and $\arule{\mathit{airport}}{\mathit{invalid}}$, respectively. Clearly, what we would like to be able to conclude with a high certainty is that you need to go to the airport to check-in. However, as the semantics from~\cite{nicolas:possibilistic} adhere to a different intuition of negation-as-failure, the conclusion is that you need to go to the airport with a necessity of $0$. Or, in other words, you should not go to the airport at all. 

As a first contribution in this paper, we present new semantics for PASP by interpreting possibilistic rules as constraints on possibility distributions. These semantics do not correspond with the semantics from \cite{nicolas:possibilistic} when considering programs with negation-as-failure. Specifically, the semantics presented in this paper can be used in settings in which the possibilistic answer sets according to~\cite{nicolas:possibilistic} do not correspond with the intuitively acceptable results. For the example mentioned above, the conclusion under the new semantics is that you need to go to the airport with a necessity of $0.9$. 

In addition, the new semantics allow us to uncover a new characterization of ASP in terms of possibility theory. Over the years, many equivalent approaches have been proposed to define the notion of an answer set. One of the most popular characterizations is in terms of the Gelfond-Lifschitz reduct~\cite{gelfond:stablemodel} in which an answer set is guessed and verified to be stable. This characterization is used in the semantics for PASP as presented in \cite{nicolas:possibilistic}. Alternatively, the answer set semantics of normal programs can be defined in terms of autoepistemic logic \cite{marek:autoepistemic}, a well-known non-monotonic modal logic.  An~important advantage of the latter approach is that autoepistemic logic enjoys more syntactic freedom, which opens the door to more expressive forms of logic programming.  However, as has been shown early on in~\cite{lifschitz:extended}, the characterization in terms of autoepistemic logic does not allow us to treat classical negation or disjunctive rules in a natural way, which weakens its position as a candidate for generalizing ASP from normal programs to \eg disjunctive programs.  Equilibrium logic~\cite{pearce:equilibrium} offers yet another way for characterizing and extending ASP, but does not feature modalities which limits its potential for epistemic reasoning as it does not allow us to reason over the established knowledge of an agent. The new characterization of ASP, as presented in this paper, is a characterization in terms of necessary and contingent truths, where possibility theory is used to express our certainty in logical propositions. Such a characterization is unearthed by looking at ASP as a special case of PASP in which the rules are certain and no uncertainty is allowed in the answer sets. 
It highlights the intuition of ASP that the head of a rule is certain when the information encoded in its body is certain. Furthermore, this characterization stays close to the intuition of the Gelfond-Lifschitz reduct, while sharing the explicit reference to modalities with autoepistemic logic. 

As a second contribution, we show in this paper how this new characterization of ASP in terms of possibility theory can be used to uncover a new form of disjunction in both ASP and PASP. As indicated, we have that the new semantics offer us an explicit reference to modalities, \ie operators with which we can qualify a statement. Epistemic logic is an example of a modal logic in which we use the modal operator~$K$ to reason about knowledge, where $K$ is intuitively understood as ``\emph{we know that}''. A~statement such as $a \lor b \lor c$ can then be treated in two distinct ways. On the one hand, we can interpret this statement as $Ka \lor Kb \lor Kc$, which makes it explicit that we know that one of the disjuncts is true. This treatment corresponds with the understanding of disjunction in disjunctive ASP and will be referred to as \concept{strong disjunction}. Alternatively, we can interpret $a \lor b \lor c$ as $K(a \lor b \lor c)$ which only states that we know that the disjunction is true, \ie we do not know which of the disjuncts is true. We will refer to this form of disjunction as \concept{weak disjunction}. This is the new form of disjunction that we will discuss in this paper, as it allows us to reason in settings where a choice cannot or should not be made. Still, such a framework allows for non-trivial forms of reasoning. 

Consider the following example. 
A~SCADA (supervisory control and data acquisition) system is used to monitor the brewing of beer in an industrialised setting. To control the fermentation, the system regularly verifies an air-lock for the presence of bubbles. An absence of bubbles may be due to a number of possible causes. On the one hand there may be a production problem such as a low yeast count or low temperature. Adding yeast when the temperature is low results in a beer with a strong yeast flavour, which should be avoided. Raising the temperature when there is too little yeast present will kill off the remaining yeast and will ruin the entire batch. On the other hand, there may be technical problems. There may be a malfunction in the SCADA system, which can be verified by running a diagnostic. The operator runs a diagnostic ($\mathit{diagnostic}$), which reports back that there is no malfunction ($\neg\mathit{malfunction}$). Or, alternatively, the air-lock may not be sealed correctly ($\mathit{noseal}$). The operator furthermore checks the temperature because he suspects that the temperature is the problem ($\mathit{verifytemp}$), but the defective temperature sensor returns no temperature when checked ($\mathit{notemp}$). These three technical problems require physical maintenance and the operator should send someone out to fix them. Technical problems do not affect the brewing. As such, the brewing process should not be interrupted for such problems as this will ruin the current batch. If there is a production problem, however, the brewing process needs to be interrupted as soon as possible (in addition, evidently, to interrupting the brewing  process when the brewing is done). This prevents the current batch from being ruined due to over-brewing but also allows the interaction with the contents of the kettle. In particular, when the problem is diagnosed to be low yeast the solution is to add a new batch of yeast and restart the process. Similarly, low temperature can be solved by raising the kettle temperature and restarting the fermentation process. Obviously, the goal is to avoid ruining the current batch. An employer radios in that the seal is okay. We have the following program:
{
\allowdisplaybreaks
\begin{align*}
  \arule{\mathit{lowyeast} \lor \mathit{lowtemp} \lor \mathit{noseal} \lor \mathit{malfunction}&}{\naf bubbles}\\
  \arule{\mathit{diagnostic}&}{}\\
  \arule{\neg \mathit{malfunction}&}{\mathit{diagnostic}}\\
  \arule{\mathit{verifytemp}&}{}\\
  \arule{\mathit{\mathit{notemp}}&}{\mathit{verifytemp}}\\
  \arule{\mathit{maintenance}&}{\mathit{noseal} \lor \mathit{malfunction} \lor \mathit{notemp}}\\
  \arule{\mathit{brew}&}{\naf (\mathit{lowyeast} \lor \mathit{lowtemp} \lor \mathit{done})}\\
  \arule{\mathit{addyeast}&}{\mathit{lowyeast}}\\
  \arule{\mathit{raisetemp}&}{\mathit{lowtemp}}\\
  \arule{\mathit{ruin}&}{\mathit{raisetemp}, \naf \mathit{lowtemp}}\\
  \arule{\mathit{ruin}&}{\mathit{addyeast}, \naf \mathit{lowyeast}}\\
  \arule{\mathit{ruin}&}{\naf \mathit{brew}, \naf (\mathit{lowtemp} \lor \mathit{lowyeast})}\\
  \arule{&}{\mathit{ruin}}\\
  \arule{\neg \mathit{noseal}&}{}
\end{align*}
}
The program above does not use the standard ASP syntax since we allow for disjunction in the body. Furthermore, the disjunction used in the head and the body is weak disjunction. The only information that we can therefore deduce from \eg the first rule is ($\mathit{lowyeast} \lor \mathit{lowtemp} \lor \mathit{noseal} \lor \mathit{malfunction}$). 
At first, this new form of disjunction may indeed appear weaker that strong disjunction since it does not induce a choice. Still, even without inducing a choice, conclusions obtained from other rules may allow us to refine our knowledge. In particular, note that from $\mathit{lowyeast} \lor \mathit{lowtemp} \lor \mathit{noseal} \lor \mathit{malfunction}$ together with $\neg \mathit{malfunction}$ and $\neg \mathit{noseal}$ we can entail $\mathit{lowyeast} \lor \mathit{lowtemp}$. Similarly, conclusions can also have prerequisites that are disjunctions. For example, we can no longer deduce $\mathit{brew}$ since $\mathit{lowyeast} \lor \mathit{lowtemp}$ entails $\mathit{lowyeast} \lor \mathit{lowtemp} \lor \mathit{done}$. From $\arule{\mathit{maintenance}}{\mathit{noseal} \lor \mathit{malfunction} \lor \mathit{notemp}}$ and $\mathit{notemp}$ we can deduce that we should call maintenance. However, we do not yet have enough information to diagnose whether yeast should be added or whether the temperature should be raised.
The unique answer set of this program, according to the semantics of weak disjunction which we present in \pointtosec{disjunction}, is given by
\begin{align*}
  \{&\mathit{lowyeast} \lor \mathit{lowtemp}, \mathit{maintenance}, \\
  &\mathit{diagnostic}, \neg \mathit{malfunction}, \mathit{verifytemp}, \mathit{notemp}, \neg \mathit{noseal}\}
\end{align*}

The expressiveness of weak disjunction becomes clear when we study its complexity. In~particular, we show that while most complexity results coincide with the strong disjunctive semantics, the complexity of brave reasoning (deciding whether a literal `$l$' is entailed by a consistent answer set of program~$P$) in absence of negation-as-failure is lower for weak disjunction. Still, the expressiveness is higher than for normal programs. The complexity results are summarized in \pointtotbl{results-extra} in \pointtosec{complexity}.

The remainder of this paper is organized as follows. In \pointtosec{background} we provide the reader with some important notions from answer set programming and possibilistic logic. In \pointtosec{characterization} we introduce new semantics for PASP which can furthermore be used to characterize normal ASP programs using possibility theory. In \pointtosec{disjunction} we characterize disjunctive ASP in terms of constraints on possibility distributions and we discuss the complexity results of the new semantics for PASP in detail in \pointtosec{complexity}. Related work is discussed in \pointtosec{related} and we formulate our conclusions in \pointtosec{conclusions}.

This paper aggregates and extends parts of our work from~\cite{bauters:weak} and substantially extends a previous conference paper~\cite{bauters:possibilistic}, which did not consider classical negation nor computational complexity. In addition, rather than limiting ourselves to atoms in this paper, we extend our work to cover the case of literals, which offer interesting and unexpected results in the face of weak disjunction. Complexity results are added for all reasoning tasks and full proofs are provided in appendix.


\section{Background}\label{sec:background}
We start by reviewing the definitions from both answer set programming and possibilistic logic that will be used in the remainder of the paper. We then review the semantics of PASP from~\cite{nicolas:possibilistic}, a framework that combines possibilistic logic and ASP. Finally, we recall some notions from complexity theory.

\subsection{Answer Set Programming}\label{sec:background:asp}
To define ASP programs, we start from a finite set of atoms $\mathcal{A}$. A \concept{literal} is defined as an atom $a$ or its classical negation $\neg a$. For $L$ a set of literals, we use $\neg L$ to denote the set $\condset{\neg l}{l \in L}$ where, by definition, $\neg \neg a = a$. A set of literals $L$ is \concept{consistent} if $L \cap \neg L = \emptyset$. We write the set of all literals as $\mathcal{L} = (\mathcal{A} \cup \neg\mathcal{A})$. A~\concept{naf-literal} is either a literal `$l$' or a literal `$l$' preceded by $\snaf$, which we call the \concept{negation-as-failure operator}. Intuitively, `$\naf l$' is true when we cannot prove `$l$'. An expression of the form 
$$
\drule{l}
$$
with $l_i$ a literal for every $0 \leq i \leq n$, is called a \concept{disjunctive rule}. We call $\srangec{l}{0}{k}{;}$ the \concept{head} of the rule (interpreted as a disjunction) and $\gbody{l}{k+1}{m}{n}$ the \concept{body} of the rule (interpreted as a conjunction). For a rule $r$ we use $head(r)$ and $body(r)$ to denote the set of literals in the head, \resp the body. Specifically, we use $body_+(r)$ to denote the set of literals in the body that are not preceded by the negation-as-failure operator `$\snaf$' and $body_-(r)$ for those literals that are preceded by `$\snaf$'. Whenever a disjunctive rule does not contain negation-as-failure, \ie when $n=m$, we say that it is a \concept{positive} disjunctive rule. A rule with an empty body, \ie a rule of the form ($\arule{\srangec{l}{0}{k}{;}}{}$), is called a \concept{fact} and is used as a shorthand for ($\arule{\srangec{l}{0}{k}{;}}{\top}$) with $\top$ a special language construct that denotes tautology. A rule with an empty head, \ie a rule of the form ${(\arule{}{\srange{l}{k+1}{m}}, \psrange{\naf}{l}{m+1}{n})}$, is called a \concept{constraint rule} and is used as a shorthand for the rule of the form (${\arule{\bot}{\srange{l}{k+1}{m}}, \psrange{\naf}{l}{m+1}{n}}$) with $\bot$ a special language construct that denotes contradiction. 

A~(positive) disjunctive program $P$ is a set of (positive) disjunctive rules. A~\concept{normal rule} is a disjunctive rule with at most one literal in the head. A~\concept{simple rule} is a normal rule with no negation-as-failure. A~\concept{definite rule} is a simple rule with no classical negation, \ie  in which all literals are atoms. A \concept{normal (\resp simple, definite) program} $P$ is a set of normal (\resp simple, definite) rules.

The  \concept{Herbrand base} \hbase{P} of a disjunctive program $P$ is the set of atoms appearing in~$P$. We define the set of literals that are relevant for a disjunctive program $P$ as $\hlit{P}= (\hbase{P} \cup \neg\hbase{P})$. An~\concept{interpretation} $I$ of a disjunctive program $P$ is any set of literals $I \subseteq \hlit{P}$. A \concept{consistent interpretation} $I$ is an interpretation $I$ that does not contain both $a$ and $\neg a$ for some $a \in I$.

A \emph{consistent interpretation} $I$ is said to be a \concept{model} of a positive disjunctive rule $r$ if $head(r) \cap I \neq \emptyset$ or $body(r) \not\subseteq I$, \ie the body is false or the head is true. In particular, a consistent interpretation $I$ is a \concept{model} of a constraint rule $r$ if $body(r) \not \subseteq I$. If for an interpretation $I$ and a constraint rule $r$ we have that $body(r) \subseteq I$, then we say that the interpretation $I$ \concept{violates} the constraint rule $r$. Notice that for a fact rule we require that $head(r) \cap I \neq \emptyset$, \ie at least one of the literals in the head must be true. Indeed, otherwise $I$ would not be a model of $r$. An~interpretation $I$ of a positive disjunctive program $P$ is a model of $P$ either if $I$ is consistent and for every rule $r\in P$ we have that $I$ is a model of $r$, or if $I = \hlit{P}$. It follows from this definition that $\hlit{P}$ is always a model of $P$, and that all other models of $P$ (if any) are consistent interpretations, which we will further on also refer to as \concept{consistent models}. We say that $I$ is an \emph{answer set} of the positive disjunctive program $P$ when~$I$ is a minimal model of $P$ \wrt set inclusion.

The semantics of an ASP program with negation-as-failure is based on the idea of a stable model~\cite{gelfond:stablemodel}. The reduct $P^I$ of a disjunctive program $P$ \wrt the interpretation $I$ is defined as:
\begin{align*}
  P^I  = &\{\arule{l_0; \ldots; l_k}{\srange{l}{k+1}{m}}~\mid~(\set{\srange{l}{m+1}{n}} \cap I = \emptyset)\\
  &~\wedge (\drule{l})\in P\}.
\end{align*}
An interpretation $I$ is said to be an answer set of the disjunctive program $P$ when $I$ is an answer set of the positive disjunctive program $P^I$ (hence the notion of stable model). Note that we can also write the disjunctive program $P$ as $P = P' \cup C$ where $C$ is the set of constraint rules in $P$. An interpretation $I$ then is an answer set of the disjunctive program $P$ when $I$ is an answer set of $P'$ and $I$ is a model of $C$, \ie $I$ does not violate any constraints in $C$.
Whenever $P$ has \concept{consistent answer sets}, \ie answer sets that are consistent interpretations, we say that $P$ is a \concept{consistent program}. When $P$ has the answer set $\hlit{P}$, then this is the unique~\cite{baral:knowledge} inconsistent answer set and we say that $P$ is an \concept{inconsistent program}.

Answer sets of simple programs can also be defined in a more procedural way. By using the \concept{immediate consequence operator} $T_P$, which is defined for a simple program $P$ without constraint rules and \wrt an interpretation $I$ as:
  \begin{align*}
    T_P(I) = \condset{l_0}{(\srule{l}) \in P \loand \set{\srange{l}{1}{m}} \subseteq I }.
  \end{align*}
  
We use $P^{\star}$ to denote the fixpoint which is obtained by repeatedly applying $T_P$ starting from the empty interpretation $\emptyset$, \ie it is the least fixpoint of $T_P$ \wrt set inclusion. When the interpretation $P^{\star}$ is consistent, $P^{\star}$ is the (unique and consistent) answer set of the simple program P without constraint rules. When we allow constraint rules, an interpretation is a (consistent) answer set of $P = P' \cup C$ iff $I$ is a (consistent) answer set of $P$ and $I$ is a model of $C$. For both simple and normal programs, with or without constraint rules, we have that $\hlit{P}$ is the (unique and inconsistent) answer set of $P$ if $P$ has no consistent answer set(s).

\subsection{Possibilistic Logic}\label{sec:background:pl}
An interpretation in possibilistic logic corresponds with the notion of an interpretation in propositional logic. We represent such an interpretation as a set of atoms $\omega$, where $\omega \models a$ if $a \in \omega$ and $\omega \models \neg a$ otherwise with $\models$ the satisfaction relation from classical logic.
The set of all interpretations is defined as $\Omega = 2^{\mathcal{A}}$, with $\mathcal{A}$ a finite set of atoms.
At the semantic level, possibilistic logic~\cite{dubois:possibilistic} is defined in terms of a \concept{possibility distribution} $\pi$ on the universe of interpretations. A possibility distribution, which is an $\Omega \lothen [0,1]$ mapping, encodes for each interpretation (or world) $\omega$ to what extent it is plausible that $\omega$ is the actual world. By~convention, $\pi(\omega) = 0$ means that~$\omega$ is impossible and $\pi(\omega) = 1$ means that no available information prevents~$\omega$ from being the actual world. A possibility distribution $\pi$ is said to be \concept{normalized} if $\exists \omega \in \Omega \cdot \pi(\omega) = 1$, \ie at least one interpretation is entirely plausible. We say that a possibility distribution $\pi$ is \concept{vacuous} when $\forall \omega \in \Omega \cdot \pi(\omega) = 0$. Note that possibility degrees are mainly interpreted qualitatively:  when $\pi(\omega) > \pi(\omega')$, $\omega$~is~considered more plausible than~$\omega'$. For two possibility distributions $\pi_1$ and $\pi_2$ with the same domain $\Omega$ we write $\pi_1 \geq \pi_2$ when $\forall \omega \in \Omega \cdot \pi_1(\omega) \geq \pi_2(\omega)$ and we write $\pi_1 > \pi_2$ when $\pi_1 \geq \pi_2$ and $\pi_1 \neq \pi_2$. 

A possibility distribution $\pi$ induces two uncertainty measures that allow us to rank propositions. The \concept{possibility measure} $\Pi$ is defined by~\cite{dubois:possibilistic}:
\begin{align*}
  \Pi(p) &= \max\condset{\pi(\omega)}{\omega \models p}
\end{align*}
and evaluates the extent to which a proposition $p$ is consistent with the beliefs expressed by $\pi$. The dual \emph{necessity measure} $N$ is defined by:
\begin{align*}
  N(p) &= 1 - \Pi(\neg p)
\end{align*}
and evaluates the extent to which a proposition $p$ is entailed by the available beliefs~\cite{dubois:possibilistic}. Note that we always have $N(\top) = 1$ for any possibility distribution, while $\Pi(\top) = 1$ (and, related, $N(\bot) = 0$) only holds when the possibility distribution is normalized (\ie only normalized possibility distributions can express consistent beliefs)~\cite{dubois:possibilistic}. To identify the possibility/necessity measure associated with a specific possibility distribution $\pi_\mathrm{X}$, we will use a subscript notation, \ie $\Pi_\mathrm{X}$ and $N_\mathrm{X}$ are the corresponding possibility and necessity measure, respectively. We omit the subscript when the possibility distribution is clear from the context.

 An important property of necessity measures is the min-decomposability property \wrt conjunction: $N(p\wedge q) = \min (N(p),N(q))$ for all propositions $p$ and $q$. However, for disjunction only the inequality $N(p \vee q) \geq \max (N(p), N(q))$ holds. As possibility measures are the dual measures of necessity measures, they have the property of max-decomposability \wrt disjunction, whereas for the conjunction only the inequality $\Pi(p \wedge q) \leq \min \left(\Pi(p), \Pi(q)\right)$ holds.

At the syntactic level, a \concept{possibilistic knowledge base} consists of pairs $(p,c)$ where $p$ is a propositional formula and $c\in\ ]0,1]$ expresses the certainty that $p$ is the case. Formulas of the form $(p,0)$ are not explicitly represented in the knowledge base since they encode trivial information. A~formula $(p,c)$ is interpreted as the constraint $N(p)\geq c$, \ie a possibilistic knowledge base $\Sigma$ corresponds to a set of constraints on possibility distributions. 
Typically, there can be many possibility distributions that satisfy these constraints. In practice, we are usually only interested in the \concept{least specific possibility distribution}, which is the possibility distribution that makes minimal commitments, \ie the greatest possibility distribution \wrt the ordering~$>$ defined above. Such a least specific possibility distribution always exists and is unique~\cite{dubois:possibilistic}.  

In \pointtosec{disjunction} we will also consider constraints that deviate from the form of constraints we just discussed. As a result, there can be multiple minimally specific possibility distributions rather than a unique least specific possibility distribution. To increase the uniformity throughout the paper we immediately start using the concept of a \concept{minimally specific possibility distribution}, which is a maximal possibility distribution \wrt the ordering $>$, even though the distinction between the least specific possibility distribution and minimally specific possibility distributions only becomes relevant once we discuss the characterization of disjunctive programs.

\subsection{Possibilistic Answer Set Programming}
Possibilistic ASP (PASP)~\cite{nicolas:possibilistic} combines ASP and possibility theory by associating a weight with each rule, where the weight denotes the necessity with which the head of the rule can be concluded given that the body is known to hold. If it is uncertain whether the body holds, the necessity with which the head can be derived is the minimum of the weight associated with the rule and the degree to which the body is necessarily true. 

Syntactically, a possibilistic disjunctive (\resp normal, simple, definite) program is a set of pairs $p = (r,\lambda)$ with~$r$ a disjunctive (\resp normal, simple, definite) rule and $\lambda \in\ ]0,1]$ a certainty associated with $r$. Possibilistic rules with $\lambda = 0$ are generally omitted as only trivial information can be derived from them.
We will also write a possibilistic rule $p = (r,\lambda)$ with $r$ a disjunctive rule of the form $(\drule{l})$ as: \[ \pdrule{l}. \]
For a possibilistic rule $p = (r,\lambda)$ we use $p^*$ to denote $r$, \ie the classical rule obtained by ignoring the certainty. Similarly, for a possibilistic program $P$ we use $P^*$ to denote the set of rules $\condset{p^*}{p \in P}$. The set of all weights found in a possibilistic program $P$ is denoted by $cert(P) = \condset{\lambda}{p = (r,\lambda) \in P}$. We will also use the extended set of weights $cert^+(P)$, defined as
$cert^+(P) = \condset{\lambda}{\lambda \in cert(P)} \cup \condset{1-\lambda}{\lambda \in cert(P)} \cup \set{0, \frac{1}{2}, 1}$.

Semantically, PASP is based on a generalization of the concept of an interpretation. In~classical ASP, an interpretation can be seen as a mapping $\function{I}{\hlit{P}}{\set{0,1}}$, \ie a literal $l \in \hlit{P}$ is either true or false. This notion is generalized in PASP to a \concept{valuation}, which is a function ${\function{V}{\hlit{P}}{[0,1]}}$. The underlying intuition of $V(l) = \lambda$ is that the literal `$l$' is true with certainty `$\lambda$', which we will also write in set notation as $\plit{l}{\lambda} \in V$. As such, a valuation corresponds with the set of constraints $\condset{N(l) \geq \lambda}{\plit{l}{\lambda} \in V}$.
 Note that, like interpretations in ASP, these valuations are of an epistemic nature, \ie they reflect what we know about the truth of atoms. For notational convenience, we often also use the set notation $V = \set{\plit{l}{\lambda}, \ldots}$. In~accordance with this set notation, we write $V = \emptyset$ to denote the valuation in which each literal is mapped to $0$. For $\lambda \in [0,1]$ a certainty and $V$ a valuation, we use $V^\lambda$ to denote the classical projection $\condset{l}{l \in \hlit{P}, V(l) \geq \lambda}$. We also use $V^{\underline{\lambda}} = \condset{l}{l \in \hlit{P}, V(l) > \lambda}$, \ie those literals that can be derived to be true with certainty strictly greater than `$\lambda$'. A valuation is said to be \emph{consistent} when $V^{\underline{0}}$ is consistent. In such a case, there always exists a normalized possibility distribution $\pi_V$ such that $N_V(l) = V(l)$.

We now present a straightforward extension of the semantics for PASP introduced in~\cite{nicolas:possibilistic}. Let the $\lambda$-cut $P_\lambda$ of a possibilistic program $P$, with $\lambda \in [0,1]$, be defined as: \[ P_\lambda = \condset{r}{(r,\lambda') \in P \text{ and } \lambda' \geq \lambda}, \]
\ie the rules in $P$ with an associated certainty higher than or equal to `$\lambda$'.
\begin{definition}\label{def:nicolas:immediate}
  Let $P$ be a possibilistic simple program and $V$ a valuation. The immediate consequence operator $T_P$ is defined as: \[ T_P(V)(l_0) = \max\condset{\lambda \in [0,1]}{V^\lambda \models \srange{l}{1}{m} \text{ and } (\srule{l}) \in P_\lambda}. \]
\end{definition}

\noindent The intuition of \pointtodef{nicolas:immediate} is that we can derive the head only with the certainty of the weakest piece of information, \ie the necessity of the conclusion is restricted either by the certainty of the rule itself or the lowest certainty of the literals used in the body of the rule. Note that the immediate consequence operator defined in \pointtodef{nicolas:immediate} is equivalent to the one proposed in~\cite{nicolas:possibilistic}, although we formulate it somewhat differently. Also, the work from~\cite{nicolas:possibilistic} only considered definite programs, even though adding classical negation does not impose any problems.

As before, we use $P^\star$ to denote the fixpoint obtained by repeatedly applying $T_P$ starting from the minimal valuation $V = \emptyset$, \ie the least fixpoint of $T_P$ \wrt set inclusion. A valuation $V$ is said to be the answer set of a possibilistic simple program if $V = P^\star$ and $V$ is consistent. Answer sets of possibilistic normal programs are defined using a reduct. Let $L$ be a set of literals. The reduct $P^L$ of a possibilistic normal program is defined as~\cite{nicolas:possibilistic}: \[ P^L = \condset{ (\arule{head(r)}{body_+(r)}, \lambda)}{ (r,\lambda) \in P \text{ and } body_-(r) \cap L = \emptyset}. \] A consistent valuation $V$ is said to be a possibilistic answer set of the possibilistic normal program $P$ iff $\left( P^{(V^{\underline{0}})} \right)^{\star} = V$, \ie if $V$ is the answer set of the reduct $P^{(V^{\underline{0}})}$. 

\begin{example}\label{ex:abnormal}Consider the possibilistic normal program $P$ from the introduction:
  \begin{align*}
    \parule{0.1}{\mathit{invalid}}{}\\
    \parule{1}{\mathit{airport}}{\naf \mathit{invalid}}
  \end{align*}
  It is easy to verify that \set{\plit{\mathit{invalid}}{0.1}} is a possibilistic answer set of $P$. Indeed, $P^{\set{\mathit{invalid}}}$ is the set of rules:
  \begin{align*}
    \parule{0.1}{\mathit{invalid}}{}
  \end{align*}
  from which it trivially follows that ${(P^{\set{\mathit{invalid}}})}^\star = \set{\plit{\mathit{invalid}}{0.1}}$. The conclusion is thus that we do not need to go to the airport, which differs from our intuition of the problem. We will revisit this example in \pointtoex{airport:again} in \pointtosec{characterizing:normal}.
\end{example}

The semantics we presented allow for classical negation, even though this was not considered in~\cite{nicolas:possibilistic}. However, adding classical negation does not impose any problems and could, as an alternative, easily be simulated in ASP~\cite{baral:knowledge}.

\subsection{Complexity Theory}
Finally, we recall some notions from complexity theory. The complexity classes \spolsig{2} and \spolpi{2} are defined as follows~\cite{papadimitriou:computational}:
\begin{align*}
  \spolsig{0} &= \spolpi{0} = \cP& \\
  \spolsig{1} &= \cNP&\spolsig{2} &= \cNP^\cNP\\
   \spolpi{1} &= \ccoNP & \spolpi{2} &= \cco\spolsig{2}
\end{align*}
where $\cNP^{\cNP}$ is the class of problems that can be solved in polynomial time on a non-deterministic machine with an \cNP\ oracle, \ie assuming a procedure that can solve \cNP\ problems in constant time.
We also consider the complexity class $\complexity{BH}_2$~\cite{cai:boolean}, which is the class of all languages $L$ such that $L = L_1 \cap L_2$, where $L_1$ is in \cNP\ and $L_2$ is in $\ccoNP$.
For a general complexity class $\cC$, a problem is $\cC$-hard if any problem in \cC\ can be polynomially reduced to this problem. 
A problem is said to be \cC-complete if the problem is in \cC\ and the problem is \cC-hard.
Deciding the validity of a Quantified Boolean Formula (QBF) $\phi = \exists X_1 \forall X_2 \cdot p(X_1, X_2)$ with $p(X_1, X_2)$ in disjunctive normal form (DNF) is the canonical $\Sigma_2^P$-complete problem.
 The decision problems we consider in this paper are brave reasoning (deciding whether a literal `$l$' (clause `$e$') is entailed by a consistent answer set of program $P$), cautious reasoning (deciding whether a literal `$l$' (clause `$e$') is entailed by every consistent answer set of a program $P$) and answer set existence (deciding whether a program $P$ has a consistent answer set). Brave reasoning as well as answer set existence for simple, normal and disjunctive programs is $\cP$-complete, $\cNP$-complete and $\spolsig{2}$-complete, respectively~\cite{baral:knowledge}. Cautious reasoning for simple, normal and disjunctive programs is $\cP$-complete, $\ccoNP$-complete and $\spolpi{2}$-complete~\cite{baral:knowledge}.


\section{Characterizing (P)ASP}\label{sec:characterization}
ASP lends itself well to being characterized in terms of modalities. For instance, ASP can be characterized in autoepistemic logic by interpreting `$\naf a$' as the epistemic formula $\neg \mathsf{L}a$ (``$a$ is not believed'')~\cite{gelfond:stratified}. In this paper, as an alternative, we show how ASP can be characterized within possibility theory. 
To arrive at this characterization, we first note that ASP is essentially a special case of PASP in which every rule is certain. As such, we will show how PASP can be characterized within possibility theory. This characterization does not coincide with the semantics proposed in~\cite{nicolas:possibilistic} for PASP, as the semantics from~\cite{nicolas:possibilistic} rely on the classical Gelfond-Lifschitz reduct. Rather, the semantics that we propose for PASP adhere to a different intuition of negation-as-failure. A~characterization of ASP is then obtained from these new semantics by considering the special case in which all rules are entirely certain.

 This characterization of ASP, while still in terms of modalities, stays close in spirit to the Gelfond-Lifschitz reduct. In contrast to the characterization in terms of autoepistemic logic it does not require a special translation of literals to deal with classical negation and disjunction.
 The core idea of our characterization is to encode the meaning of each rule as a constraint on possibility distributions.  Particular minimally specific possibility distributions that satisfy all the constraints imposed by the rules of a program will then correspond to the answer sets of that program.

\noindent In this section, we first limit our scope to possibilistic simple programs (\pointtosec{characterizing:simple}). Afterwards we will broaden the scope and also consider possibilistic normal programs (\pointtosec{characterizing:normal}). The most general case, in which we also consider possibilistic disjunctive programs, will be discussed in \pointtosec{disjunction}.

\subsection{Characterizing Possibilistic Simple Programs}\label{sec:characterizing:simple}
When considering a fact, \ie a rule of the form ${r=(\arule{l_0}{\top})}$, we know by definition that this rule encodes that the literal in the head is necessarily true, \ie $N(l_0) = 1$. If we attach a weight to a fact, then this expresses the knowledge that we are not entirely certain of the conclusion in the head, \ie for a possibilistic rule $p=(r,\lambda)$ we have that $N(l_0) \geq N(\top)$. Note that the constraint uses $\geq$, as there may be other rules in the program that allow us to deduce $l_0$ with a greater certainty.

  In a similar fashion we can characterize a rule of the form (\arule{l_0}{\srange{l}{1}{m}}) as the constraint $N(l_0)\geq N(\srangec{l}{1}{m}{\land})$ which is equivalent to the constraint $N(l_0)\geq \min(\range{N(l_1)}{N(l_m)})$ due to the min-decomposability property of the necessity measure. Indeed, the intuition of such a rule is that the head is only necessarily true when every part of the body is true. When associating a weight with a rule, we obtain the constraint ${N(l_0) \geq \min(\range{N(l_1)}{N(l_m)}, \lambda)}$ for a possibilistic rule $p=(r,\lambda)$ with ${r = (\srule{l})}$.
  Similarly, to characterize a constraint rule, \ie a rule of the form ${r=(\arule{\bot}{\srange{l}{1}{m}})}$, we use the constraint $N(\bot)\geq \min(\range{N(l_1)}{N(l_m)})$, or, in the possibilistic case with $p=(r,\lambda)$, the constraint $N(\bot)\geq \min(\range{N(l_1)}{N(l_m)},\lambda)$.

\begin{definition}\label{def:constraintssimple}
  Let $P$ be a possibilistic simple program and $\function{\pi}{\Omega}{[0,1]}$ a possibility distribution. For every $p \in P$, the constraint $\gamma(p)$ imposed by $p = (r,\lambda)$ with $\lambda \in\ ]0,1]$, ${r = (\srule{l})}$ and $m \geq 0$ is given by 
  \begin{equation}\label{eq:constraints:simple}
    N(l_0) \geq \min(\range{N(l_1)}{N(l_m)}, \lambda).
  \end{equation}
$C_P = \condset{\gamma(p)}{p\in P}$ is the set of constraints imposed by program $P$.  If $\pi$ satisfies the constraints in $C_P$, $\pi$ is said to be a possibilistic model of $C_P$, written $\pi \models C_P$.  A possibilistic model of $C_P$ will also be called a possibilistic model of $P$. We write $S_P$ for the set of all minimally specific possibilistic models of $P$. \end{definition}

\begin{definition}
Let $P$ be a possibilistic simple program. Let $\pi$ be a minimally specific model~of~$P$, \ie $\pi \in S_P$. Then ${V = \condset{\plit{l}{N(l)}}{l \in \hlit{P}}}$ is called a \concept{possibilistic answer set}~of~$P$.
\end{definition}

\begin{example}
Consider the possibilistic simple program $P$ with the rules:
\begin{align*}
\prulealign{0.8}{a}{}&\prulealign{0.6}{\neg b}{a}\\
\prulealign{0.7}{c}{a, \neg b}&\prulealign{0.9}{d}{d}.
\end{align*}
The set $C_P$ consists of the constraints:
\begin{align*}
  N(a) &\geq 0.8 & N(\neg b) &\geq \min(N(a), 0.6)\\
  N(c) &\geq \min(N(a), N(\neg b), 0.7) & N(d) &\geq \min(N(d), 0.9).
\end{align*}
It is easy to see that the last constraint is trivial and can be omitted and that the other constraints can be simplified to ${\Pi(\neg a) \leq 0.2}$, ${\Pi(b) \leq 0.4}$ and ${\Pi(\neg c) \leq 0.4}$. The least specific possibility distribution that satisfies these constraints is given by:
\begin{align*}
  \pi(\set{a,b,c,d}) = 0.4&&\pi(\set{a,c,d}) = 1&&\pi(\set{b,c,d}) = 0.2&&\pi(\set{c,d}) = 0.2\\
  \pi(\set{a,b,c}) = 0.4&&\pi(\set{a,c}) = 1&&\pi(\set{b,c}) = 0.2&&\pi(\set{c}) = 0.2\\
  \pi(\set{a,b,d}) = 0.4&&\pi(\set{a,d}) = 0.4&&\pi(\set{b,d}) = 0.2&&\pi(\set{d}) = 0.2\\
  \pi(\set{a,b}) = 0.4&&\pi(\set{a}) = 0.4&&\pi(\set{b}) = 0.2&&\pi(\set{}) = 0.2.
\end{align*}
By definition, since the possibility distribution $\pi$ satisfies the given constraints, is a possibilistic model. Furthermore, it is easy to see that $\pi$ is the unique minimally specific possibilistic model (due to least specificity). We can verify that ${N(\neg a) = N(b) = N(\neg c) = N(\neg d) = 0}$ since we have that $\pi(\set{a,c,d}) = 1$ and that $N(d) = 0$ since $\pi(\set{a,c}) = 1$. Furthermore it is easy to verify that $N(a) = 0.8$, $N(\neg b) = 0.6$ and $N(c) = 0.6$. Hence we find that $V = \set{\plit{a}{0.8}, \plit{\neg b}{0.6}, \plit{c}{0.6}}$ is a possibilistic answer set of $P$.
\end{example}

\noindent In particular, when we consider all the rules to be entirely certain, \ie $\lambda=1$, the results are compatible with the semantics of classical ASP.

\begin{example}Consider the program $P = \set{(\arule{b}{a}),(\arule{\neg a}{})}$. The set of constraints $C_P$ is given by $N(b) \geq N(a)$ and $N(\neg a) \geq N(\top)$. The first constraint can be rewritten as $1-\Pi(\neg b) \geq 1 - \Pi(\neg a)$, \ie as $\Pi(\neg a) \geq \Pi(\neg b)$. The last constraint can be rewritten as $1-\Pi(a) \geq 1$, \ie as $\Pi(a) = \max\condset{\pi(\omega)}{\omega \models a} = 0$. Given these two constraints, we find that $S_P$ contains exactly one element, which is defined by
\begin{align*}
  \pi(\set{a,b}) = 0&&\pi(\set{a}) = 0\\
  \pi(\set{b}) = 1&&\pi(\set{}) = 1.
\end{align*}
Notice how the first constraint turned out to be of no relevance for this particular example. Indeed, due to the principle of minimal specificity and since there is nothing that prevents $\Pi(\neg a) = 1$, we find that $N(a) = 1 - \Pi(\neg a) = 0$. Therefore the first constraint simplifies to $N(b) \geq 0$. Once more, due to the principle of minimal specificity we thus find that $N(b) = 0$ as there is no information that prevents $\Pi(\neg b) = 1$.
 To find out whether $a, b$, $\neg a$ and $\neg b$ are necessarily true \wrt the least specific possibility distribution $\pi \in S_P$ arising from the program, we verify whether $N(a)=1$, $N(b)=1$, $N(\neg a) = 1$ and $N(\neg b) = 1$, respectively, with $N$ the necessity measure induced by the unique least specific possibility distribution $\pi \in S_P$. As desired, we find that $N(\neg a) = 1 - \Pi(a) = 1$ whereas $N(a) = N(b) = N(\neg b) = 0$. The unique possibilistic answer set is therefore $\set{\plit{\neg a}{1}}$. As we will see, it then follows from \pointtoprop{possiblitydefiniteanswersetA} that the unique classical answer set of $P$ is $\set{\neg a}$.
\end{example}

\noindent In Propositions~\ref{prop:possiblitydefiniteanswersetA}~and~\ref{prop:possiblitydefiniteanswersetB}, below, we prove that this is indeed a correct characterization of simple programs. First, we present a technical lemma.

\begin{lemma}
\label{lem:constraints:equivalent}
Let $L$ be a set of literals, $M \subseteq L$ a consistent set of literals and let the possibility distribution $\pi$ be defined as $\pi(\omega) = 1$ if $\omega \models M$ and $\pi(\omega) =0$ otherwise. Then $M = \condset{l}{N(l) = 1, l \in L}$.
\end{lemma}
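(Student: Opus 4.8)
The plan is to unfold the definitions of the necessity and possibility measures induced by $\pi$ and to exploit the fact that $\pi$ is two-valued, taking only the values $0$ and $1$. This reduces the semantic condition $N(l) = 1$ to a purely classical entailment condition $M \models l$, after which a short combinatorial argument (relying on the consistency of $M$) shows that, among literals in $L$, entailment from $M$ coincides with membership in $M$.

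First I would record the basic chain of equivalences. By definition $N(l) = 1 - \Pi(\neg l)$, so $N(l) = 1$ iff $\Pi(\neg l) = 0$. Since $\Pi(\neg l) = \max\condset{\pi(\omega)}{\omega \models \neg l}$ and $\pi$ takes only the values $0$ and $1$, we have $\Pi(\neg l) = 0$ iff there is no interpretation $\omega$ with $\pi(\omega) = 1$ and $\omega \models \neg l$. Because $\pi(\omega) = 1$ holds exactly when $\omega \models M$, and $\omega \models \neg l$ is the negation of $\omega \models l$, this says precisely that every model of $M$ is a model of $l$, \ie $M \models l$. Hence $\condset{l}{N(l) = 1, l \in L} = \condset{l \in L}{M \models l}$. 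Here I would note that the consistency of $M$ guarantees at least one $\omega \models M$, so $\pi$ is normalized and the reasoning is not vacuous.

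It then remains to establish the combinatorial identity $\condset{l \in L}{M \models l} = M$, which I would derive from the stronger fact that for \emph{any} literal $l$ we have $M \models l$ iff $l \in M$; intersecting with $L$ and using $M \subseteq L$ yields the claim. The direction $l \in M \Rightarrow M \models l$ is immediate. For the converse I would argue contrapositively: assuming $l \notin M$, I construct an explicit interpretation satisfying $M$ but falsifying $l$. The natural candidate is $\omega_0 = \condset{a}{a \in M, a \text{ an atom}}$, the set of atoms occurring positively in $M$; consistency of $M$ ensures $\omega_0 \models M$. If $l$ is an atom not in $M$, then $l \notin \omega_0$, so $\omega_0 \not\models l$; if $l = \neg a$ with $\neg a \notin M$, then adding $a$ to $\omega_0$ still yields a model of $M$ (again by consistency, since $\neg a \notin M$) that now satisfies $a$ and hence falsifies $\neg a$.

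The only step requiring genuine care is this last construction, namely checking that the witnessing interpretation satisfies every literal of $M$ while falsifying $l$. Everything there hinges on the hypothesis $M \cap \neg M = \emptyset$, which is exactly what prevents the truth value of $l$ from being forced by $M$; the remainder is routine unwinding of the definitions of $N$, $\Pi$, and $\models$.
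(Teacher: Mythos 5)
Your proof is correct: the reduction of $N(l)=1$ to classical entailment $M \models l$ via the two-valued distribution, followed by the witness constructions (the interpretation $\omega_0$ of positive atoms of $M$, and $\omega_0 \cup \{a\}$ for the negative-literal case) using consistency of $M$, establishes both inclusions without gaps. This is essentially the same definitional-unfolding argument as the paper's proof, so nothing further is needed.
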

\proofat{1--2}

\begin{proposition}
\label{prop:possiblitydefiniteanswersetA}
Let $P$ be a simple program.  If $\pi \in S_P$ then either the unique consistent answer set of $P$ is given by $M = \condset{l}{N(l) = 1, l \in \hlit{P}}$ or $\pi$ is the vacuous distribution, in which case $P$ does not have any consistent answer sets.
\end{proposition}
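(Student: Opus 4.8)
The plan is to connect the constraint semantics $S_P$ directly to the classical least fixpoint, viewing the simple program $P$ as the possibilistic program all of whose rules carry weight $\lambda = 1$, so that every constraint in $C_P$ has the form $N(l_0) \geq \min(N(l_1), \ldots, N(l_m))$. Writing $P = P' \cup C$ with $C$ the constraint rules, let $D = (P')^{\star}$ be the classical least fixpoint; by the background, $D$ is the unique consistent answer set of $P$ exactly when $D$ is consistent and is a model of $C$, and otherwise $P$ has no consistent answer set. It therefore suffices to determine $S_P$ in each of these situations and read off $M = \condset{l}{N(l) = 1, l \in \hlit{P}}$.

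The first and central step is a propagation claim: \emph{every} possibilistic model $\pi \models C_P$ satisfies $N(l) = 1$ for all $l \in D$. I would prove this by induction along the fixpoint stages $D_0 = \emptyset$, $D_{i+1} = T_{P'}(D_i)$, since a fact forces $N(l_0) \geq 1$ outright, and once all body literals of a firing rule have necessity $1$ the constraint $N(l_0) \geq \min(\ldots) = 1$ forces $N(l_0) = 1$. Rephrased on the distribution, $N(l) = 1$ means $\Pi(\neg l) = 0$, i.e.\ $\pi(\omega) = 0$ for every $\omega$ falsifying some $l \in D$; hence the support of \emph{any} model lies inside $\condset{\omega}{\omega \models D}$.

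If $D$ is consistent and a model of $C$, I would exhibit the characteristic distribution $\pi_D$ with $\pi_D(\omega) = 1$ iff $\omega \models D$ and $0$ otherwise. By \pointtolem{constraints:equivalent}, $N_{\pi_D}(l) = 1$ iff $l \in D$; combined with the fact that $D$ is a model of $C$ and that $\pi_D$ is normalized (so $N(\bot) = 1 - \Pi(\top) = 0$), this verifies $\pi_D \models C_P$. The support bound of the previous step then gives $\pi_D \geq \pi$ pointwise for every model $\pi$, so $\pi_D$ is the greatest model and the unique minimally specific one; it is non-vacuous and delivers $M = D$, the unique consistent answer set. In the complementary case, where $D$ is inconsistent or not a model of $C$, I would show every $\pi \in S_P$ is vacuous: if $\set{a, \neg a} \subseteq D$ the propagation claim forces $N(a) = N(\neg a) = 1$, hence $\Pi(a) = \Pi(\neg a) = 0$ and $\pi \equiv 0$; if some constraint rule $(\arule{\bot}{l_1, \ldots, l_m})$ has $\set{l_1, \ldots, l_m} \subseteq D$, then $N(\bot) \geq \min(N(l_1), \ldots, N(l_m)) = 1$ forces $\Pi(\top) = 0$, again vacuous. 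As the vacuous distribution trivially satisfies $C_P$ (all necessities are $1$), it is then the unique model.

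The main obstacle is the propagation claim together with its pointwise-dominance consequence: everything downstream --- uniqueness of the minimally specific model and the collapse to vacuity in the inconsistent cases --- rests on first showing that any model must pin $N(l) = 1$ on all of $D$ and therefore be supported on $\condset{\omega}{\omega \models D}$. The other point requiring care is the bookkeeping that separates proper rules from constraint rules, whose heads $\bot \notin \hlit{P}$ do not contribute to $D$ yet still impose the decisive $N(\bot)$ constraints.
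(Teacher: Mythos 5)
Your proposal is correct and takes essentially the same route as the paper's own argument: it reduces the semantic question to the classical least fixpoint $(P')^{\star}$, uses \pointtolem{constraints:equivalent} to read the answer set off the characteristic distribution, and settles uniqueness of the minimally specific model by the propagation-plus-dominance argument, with the same case split between a consistent fixpoint satisfying the constraint rules and the inconsistent/violating case where every possibilistic model collapses to the vacuous distribution. No gaps of substance: the compressed verification that $\pi_D \models C_P$ (two-valuedness of $N$ under $\pi_D$ plus the fixpoint property of $D$) is exactly the bookkeeping the paper also carries out.
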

\proofat{2--4}

\begin{proposition}
\label{prop:possiblitydefiniteanswersetB}
Let $P$ be a simple program.  If $M$ is an answer set of $P$ then the possibility distribution $\pi$ defined by $\pi(\omega) = 1$ iff $\omega \models M$ and $\pi(\omega)=0$ otherwise belongs~to~$S_P$.
\end{proposition}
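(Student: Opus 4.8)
The plan is to verify the two requirements for membership in $S_P$ in turn: that the proposed distribution $\pi$ is a possibilistic model of $P$, and that it is minimally specific. Viewing the simple program $P$ as a possibilistic simple program in which every rule carries weight $1$, each constraint in $C_P$ reduces to $N(l_0) \geq \min(\range{N(l_1)}{N(l_m)})$. I would first record the necessity values induced by $\pi$. As $M$ is a (consistent) answer set, $\pi$ is normalized, so $N(\bot)=0$; by \pointtolem{constraints:equivalent} applied with $L = \hlit{P}$ we have $N(l)=1$ precisely for $l\in M$; and for $l\notin M$ I would note directly that $N(l)=0$, since whether $\neg l\in M$ or neither $l$ nor $\neg l$ is in $M$, there is a world $\omega\models M$ (hence $\pi(\omega)=1$) with $\omega\models\neg l$, forcing $\Pi(\neg l)=1$.

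Next I would check $\pi\models C_P$ rule by rule, writing $P = P'\cup C$ with $C$ the constraint rules and $M$ the least fixpoint of $T_{P'}$. For an ordinary rule $\srule{l}$: if $\set{\srange{l}{1}{m}}\subseteq M$ then, since $M$ is closed under the rules of $P'$, its head lies in $M$, so $N(l_0)=1$ and the constraint holds; otherwise some body literal $l_i\notin M$ gives $N(l_i)=0$, so the right-hand side is $0\leq N(l_0)$. For a constraint rule $\arule{\bot}{\srange{l}{1}{m}}$, the assumption that $M$ does not violate it means some $l_i\notin M$, so the right-hand side is $0 = N(\bot)$. Hence every constraint is satisfied and $\pi$ is a possibilistic model of $P$.

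The substantive part, and the step I expect to be the main obstacle, is minimal specificity: I must show that any possibilistic model $\pi'$ with $\pi'\geq\pi$ actually equals $\pi$. The tool is a derivation order of $M$: letting $s(l)$ be the first iteration of $T_{P'}$ at which $l$ appears and enumerating $M$ as $\range{l^{(1)}}{l^{(k)}}$ by nondecreasing stage, the body of the rule deriving each $l^{(j)}$ lies in $\set{\range{l^{(1)}}{l^{(j-1)}}}$. Let $N'$ denote the necessity measure of $\pi'$. On every $\omega\models M$ we already have $\pi'(\omega)=\pi(\omega)=1$, so it remains to rule out $\pi'(\omega)>0$ for some $\omega\not\models M$. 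Supposing such worlds exist, I would pick $\omega_0$ with $\pi'(\omega_0)>0$ and $\omega_0\not\models M$ whose earliest violated literal $l^{(j)}$ has minimal index $j$. By minimality of $j$, no positive-$\pi'$ world falsifies any $l^{(i)}$ with $i<j$ (such a world would itself violate some literal of index at most $i<j$), so $N'(l^{(i)})=1$ for all $i<j$; in particular the minimum of $N'$ over the body of the rule deriving $l^{(j)}$ equals $1$. But $\omega_0\models\neg l^{(j)}$ with $\pi'(\omega_0)>0$ gives $N'(l^{(j)})\leq 1-\pi'(\omega_0)<1$, contradicting the constraint for that rule. Hence no such $\omega_0$ exists, $\pi'=\pi$, so $\pi$ is minimally specific and $\pi\in S_P$. (If $M$ is instead the inconsistent answer set $\hlit{P}$, then $\pi$ is the vacuous distribution, a degenerate case consistent with \pointtoprop{possiblitydefiniteanswersetA}.)
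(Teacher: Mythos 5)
Your handling of the consistent case is correct and complete: you compute the necessity values induced by $\pi$ via \pointtolem{constraints:equivalent}, verify the constraints of $C_P$ rule by rule (including constraint rules, using $N(\bot)=0$ for the normalized $\pi$), and establish minimal specificity by a minimal-counterexample argument along a derivation order of the fixpoint. That last argument is sound, and in fact proves more than you state: the contradiction step never uses the assumption $\pi'\geq\pi$, so it actually shows that \emph{every} possibilistic model of $C_P$ assigns possibility $0$ to each world $\omega\not\models M$, \ie that $\pi$ is the unique least specific model of $C_P$.

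The genuine gap is the inconsistent case, which the proposition does cover: by the paper's definitions, $\hlit{P}$ is the (unique, inconsistent) answer set of $P$ whenever $P$ has no consistent answer set, so in that case the claim to be proved is that the vacuous distribution belongs to $S_P$. Your parenthetical does not establish this: \pointtoprop{possiblitydefiniteanswersetA} gives the implication in the wrong direction (if a minimally specific model is vacuous, then $P$ has no consistent answer sets), whereas here you need the converse. The vacuous distribution is always a possibilistic model of $C_P$ (all its necessity values equal $1$), and since every possibility distribution dominates it pointwise, it is minimally specific if and only if it is the \emph{only} model of $C_P$ --- a statement that genuinely requires proof. It can be closed as follows: write $P = P'\cup C$ with $C$ the constraint rules, and let $\pi'$ be any model of $C_P$ with induced measures $N'$ and $\Pi'$. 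By induction on the stages of $T_{P'}$, $N'(l)=1$ for every $l\in (P')^{\star}$. Since $P$ has no consistent answer set, either $(P')^{\star}$ contains complementary literals $a$ and $\neg a$, in which case $\Pi'(a)=\Pi'(\neg a)=0$ and hence $\pi'(\omega)=0$ for every world $\omega$; or $(P')^{\star}$ is consistent but violates some constraint rule in $C$, in which case $N'(\bot)\geq 1$, \ie $\Pi'(\top)=0$, and again $\pi'$ is vacuous. Without this (or an equivalent) argument, your proof only covers consistent answer sets.
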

\proofat{4}

\subsection{Characterizing Possibilistic Normal Programs}\label{sec:characterizing:normal}
To deal with negation-as-failure, 
we rely on a reduct-style approach in which a valuation is guessed and it is verified whether this guess is indeed stable. The approach taken in~\cite{gelfond:stablemodel} to deal with negation-as-failure is to guess an interpretation and verify whether this guess is stable. 
We propose to treat a rule of the form $r=(\grule{l})$ as the constraint \[N(l_0) \geq \min\left(\range{N(l_1)}{N(l_m)}, \range{1- V(l_{m+1})}{1- V(l_n)}\right)\] 
where $V$ is the guess for the valuation and where we assume $min(\set{})=1$. Or, when we consider a possibilistic rule $p=(r,\lambda)$, we treat it as the constraint \[N(l_0) \geq \min\left(\range{N(l_1)}{N(l_m)}, \range{1- V(l_{m+1})}{1- V(l_n)},\lambda\right).\]

We like to make it clear to the reader that the characterization of normal programs in terms of constraints on possibility distributions in its basic form is little more than a reformulation of the Gelfond-Lifschitz approach. The key difference is that this characterization can be used to guess the certainty with which we can derive particular literals from the available rules, rather than guessing what may or may not be derived from it. Nevertheless, this difference plays a crucial role when dealing with uncertain rules. In particular, this characterization of PASP does not coincide with the semantics of~\cite{nicolas:possibilistic} and adheres to a different intuition  for negation-as-failure.

\begin{definition}\label{def:semantics}
Let $P$ be a possibilistic normal program and let $V$ be a valuation. For every $p~\in~P$, the constraint $\gamma_{{}_V}(p)$ induced by $p=(r, \lambda)$ with $\lambda\in\ ]0,1]$, $r = (\grule{l})$ and $V$ is given by
\begin{equation}
  N(l_0) \geq \min\left(\range{N(l_1)}{N(l_m)}, \range{1- V(l_{m+1})}{1- V(l_n)}, \lambda\right)\label{eq:constraint}.
\end{equation}
$C_{(P,V)} = \condset{\gamma_{{}_V}(p)}{p\in P}$ is the set of constraints imposed by program $P$ and valuation $V$, and $S_{(P,V)}$ is the set of all minimally specific possibilistic models of $C_{(P,V)}$.
\end{definition}

\begin{definition}\label{def:possibilisticanswerset}
  Let $P$ be a possibilistic normal program and let $V$ be a valuation. Let $\pi \in S_{(P,V)}$ be such that
  \begin{equation*}
  \forall l \in \hlit{P} \cdot N(l) = V(l)
    \end{equation*}
  then $V = \condset{\plit{l}{N(l)}}{l \in \hlit{P}}$ is called a possibilistic answer set of $P$.
\end{definition}

\begin{example}\label{ex:airport:again}
Consider the possibilistic normal program $P$ from \pointtoex{abnormal}. The constraints $C_P$ induced by $P$ are:
\begin{align*}
N(\mathit{invalid}) &\geq 0.1\\
N(\mathit{airport}) &\geq \min(1-V(\mathit{invalid}), 1)
\end{align*}
From the first constraint it readily follows that we need to choose ${V(\mathit{invalid}) = 0.1}$ to comply with the principle of minimal specificity. The other constraint can then readily be simplified to:
\begin{align*}
N(\mathit{airport}) &\geq 0.9
\end{align*}
Hence it follows that $V = \set{\plit{invalid}{0.1}, \plit{airport}{0.9}}$ is the unique possibilistic answer set of $P$.
\end{example}

It is easy to see that the proposed semantics remain closer to the intuition of the possibilistic normal program discussed in the introduction. Indeed, we conclude with a high certainty that we need to go to the airport.

Still, it is interesting to further investigate the particular relationship between the semantics for PASP as proposed in~\cite{nicolas:possibilistic} and the semantics presented in this section. Let the possibilistic rule $r$ be of the form: \[ \pdrule{l}. \]
When we determine the reduct \wrt a~valuation~$V$ of the possibilistic program containing $r$, then the certainty of the rule  in the reduct that corresponds with $r$ can be verified to be: \[\min(\range{F_N(V(l_{m+1}))}{F_N(V(l_n))},\lambda)\] with $F_N$ a fuzzy negator, \ie where $F_N$ is a decreasing function with ${F_N(0) = 1}$ and ${F_N(1) = 0}$. In particular, for the semantics of~\cite{nicolas:possibilistic} we have that $F_N$ is the G\"{o}del negator~$F_\text{G}$, defined as $F_\text{G}(0) = 1$ and $F_\text{G}(c) = 0$ with $0 < c \leq 1$. In~the~semantics for PASP presented in this section, $F_N$ is the \L ukasiewicz negator $F_{\text{\L}}(c) = 1-c$ with $0 \leq c \leq 1$. Thus, for a rule such as:
\[
  \prule{0.9}{b}{\naf a}
\]
and a valuation $V=\set{\plit{a}{0.2}}$ we obtain under the approach from~\cite{nicolas:possibilistic} the reduct $(\prule{0}{b}{})$, whereas under our approach we obtain the constraint $N(b) \geq \min(0.9, 1-0.2)$, which can be encoded by the rule $(\prule{0.8}{b}{})$. Essentially, the difference between both semantics can thus be reduced to a difference in the choice of negator. However, even though the semantics share similarities, there is a notable difference in the underlying intuition of both approaches. Specifically, in the semantics presented in this paper, we have that `$\naf l$' is understood as ``\emph{the degree to which `$\neg l$' is possible}'', or, equivalently, ``\emph{the degree to which it is not the case that we can derive `$l$' with certainty}''. This contrasts with the intuition of `$\naf l$' in~\cite{nicolas:possibilistic} as a Boolean condition and understood as ``\emph{we cannot derive `$l$' with a strictly positive certainty}''.

Interestingly, we find that the complexity of the main reasoning tasks for possibilistic normal programs remains at the same level of the polynomial hierarchy as the corresponding normal ASP programs.

While we will see in \pointtosec{complexity} that the complexity of possibilistic normal programs remains unchanged compared to classical normal programs, it is important to note that under the semantics proposed in this section there is no longer a 1-on-1 mapping between the classical answer sets of a normal program and the possibilistic answer sets. Indeed, if we consider a possibilistic normal program constructed from a classical normal program where we attach certainty $\lambda=1$ to each rule, then we can sometimes obtain additional intermediary answer sets. 
Consider the next example:

\begin{example}\label{ex:extras}
  Consider the normal program with the single rule $\arule{a}{\naf a}$. This program has no classical answer sets. Now consider the possibilistic normal program $P$ with the rule
  \begin{equation*}
    \prule{1}{a}{\naf a}.
  \end{equation*}
  The set of constraints $C_{(P, V)}$ is given by
  \begin{equation*}
    N(a) \geq \min(1-V(a), 1).
  \end{equation*}
This constraint can be rewritten as 
\begin{align*}
 &N(a) \geq \min(1-V(a), 1)\\
 &\equiv N(a) \geq 1-V(a)\\
 & \equiv 1-\Pi(\neg a) \geq 1-V(a)\\
 &\equiv \Pi(\neg a) \leq V(a).
\end{align*}
We thus find that the set $S_{(P, V)}$ is a singleton with $\pi \in S_{(P, V)}$ defined by $\pi(\set{a}) = 1$  and ${\pi(\set{}) = V(a)}$.
We can now establish for which choices of $V(a)$ it holds that $V(a) = N(a)$:
\begin{align*}
  V(a) &= N(a)\\
  \Pi(\neg a) &= 1-\Pi(\neg a)\\
  2\cdot\Pi(\neg a) &= 1
\end{align*}
and thus, since $\Pi(\neg a) \leq V(a)$, we have $\pi(\set{}) = 0.5$. The unique possibilistic answer set of $P$ is therefore \set{\plit{a}{0.5}}. In the same way, one may verify that the program
\begin{align*}
  \parule{1}{a}{\naf b}&\parule{1}{b}{\naf a}
\end{align*}
has an infinite number of possibilistic answer sets, i.e. $\set{a^c, b^{1-c}}$ for every ${c \in [0,1]}$. For practical purposes, however, this behavior has a limited impact as we only need to consider a finite number of certainty levels to perform brave/cautious reasoning. 
Indeed, we only need to consider the  certainties used in the program, their complement to account for negation-as-failure and $\frac{1}{2}$ to account for the intermediary value as in \pointtoex{extras}. Thus, for the main reasoning tasks it suffices to limit our attention to the certainties from the set $cert^+(P)$.
\end{example}

We now show that when we consider rules with an absolute certainty, \ie classical normal programs, we obtain a correct characterization of classical ASP, provided that we restrict ourselves to absolutely certain conclusions, \ie valuations $V$ for which it holds that $\forall l \cdot V(l) \in \set{0,1}$.

\begin{example}\label{ex:constraints}
Consider the program $P$ with the rules
\begin{align*}
  \arule{a&}{} & \arule{b&}{b} & \arule{c&}{a, \naf b}.
\intertext{The set of constraints $C_{(P,V)}$ is then given by}
  N(a) &\geq 1&
  N(b) &\geq N(b)&
  N(c) &\geq \min\left(N(a), 1-V(b)\right).
\intertext{
We can rewrite the first constraint as $1-\Pi(\neg a) \geq 1$ and thus $\Pi(\neg a) = 0$. The second constraint is trivially satisfied and, since it does not entail any new information, can be dropped. The last constraint can be rewritten as $\Pi(\neg c) \leq 1 - \min(1-\Pi(\neg a), 1-V(b))$, which imposes an upper bound on the value that $\Pi(\neg c)$ can assume. Since we already know that $\Pi(\neg a) =0$ we can further simplify this inequality to $\Pi(\neg c) \leq 1 - \min(1-0, 1-V(b)) = 1- (1-V(b)) = V(b)$. In conclusion, the program imposes the constraints
}
  \Pi(\neg a)&=0 &&& \Pi(\neg c)&\leq V(b).
\end{align*}
The set $S_{(P,V)}$ then contains exactly one element, which is defined by
\begin{align*}
  \pi(\set{a,b,c}) &= 1 & \pi(\set{b,c}) &= 0\\
  \pi(\set{a,b}) &= V(b) & \pi(\set{b}) &= 0\\
  \pi(\set{a,c}) &= 1 & \pi(\set{c}) &= 0\\
  \pi(\set{a}) &= V(b) & \pi(\set{}) &= 0.
\end{align*}
Note that this possibility distribution is independent of the choice for $V(a)$ and $V(c)$ since there are no occurrences of `$\naf a$' and `$\naf c$' in $P$. It remains then to determine for which choices of $V(b)$ it holds that $V(b) = N(b)$, \ie for which the guess $V(b)$ is stable. We have:
$$
V(b) = N(b) = 1 - \Pi(\neg b) = 1 - \max\condset{\pi(\omega)}{\omega \models \neg b} = 0
$$
and thus we find that $\pi(\set{a,b}) = \pi(\set{a}) = 0$. We have $N(a) = 1 - \Pi(\neg a) = 1$, $N(c) = 1 - \Pi(\neg c) = 1$ and $N(b) = 1 - \Pi(\neg b) = 0$. As we will see in the next propositions, the unique answer set of $P$ is therefore $\set{a,c}$.
\end{example}

\begin{proposition}
\label{prop:possiblitynormalanswerset}
Let $P$ be a normal program and $V$ a valuation. Let $\pi \in S_{(P,V)}$ be such that 
\begin{align}
  &\sForall{l \in \hlit{P}}{V(l) = N(l)}\label{rule:stable}
  \text{ ; and}\\
  &\sForall{l \in \hlit{P}}{N(l) \in \set{0,1}}\label{rule:consistent}
\end{align}
then  $M = \condset{l}{N(l) = 1, l \in \hlit{P}}$ is an answer set of the normal program $P$. 
\end{proposition}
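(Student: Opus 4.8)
The plan is to show that, under the two hypotheses, the reduct-style constraints $C_{(P,V)}$ collapse \emph{exactly} onto the constraints $C_{P^M}$ of the classical Gelfond--Lifschitz reduct $P^M$, read as a possibilistic simple program in which every rule carries weight~$1$. Once this is in place the claim follows by feeding $\pi$ into \pointtoprop{possiblitydefiniteanswersetA} applied to $P^M$, since answer sets of a normal program are by definition the answer sets of its reduct.

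First I would record two consequences of the hypotheses. Because $V(l) = N(l)$ and $N(l) \in \set{0,1}$ for every $l \in \hlit{P}$, the valuation $V$ is in fact a classical interpretation, and $M = \condset{l}{N(l) = 1, l \in \hlit{P}}$ coincides with $V^1 = V^{\underline{0}}$, \ie with the set of literals that $V$ makes true. A short argument then separates the two cases of \pointtoprop{possiblitydefiniteanswersetA}: if $\pi$ is not vacuous then $M$ is consistent, since having both $l$ and $\neg l$ in $M$ would force $\Pi(l) = \Pi(\neg l) = 0$ and hence a vacuous $\pi$; whereas if $\pi$ is vacuous then $N(l) = 1$ for every $l$ and $M = \hlit{P}$.

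The core step is the constraint collapse. For a classical normal program every rule has $\lambda = 1$, so the constraint $\gamma_{{}_V}(p)$ induced by a rule $r = (\grule{l})$ is $N(l_0) \geq \min(\range{N(l_1)}{N(l_m)}, \range{1-V(l_{m+1})}{1-V(l_n)})$. Since each $V(l_j) \in \set{0,1}$, I split on the negation-as-failure literals. If some $l_j$ with $m < j \leq n$ lies in $M$, then $1 - V(l_j) = 0$, the whole minimum is $0$, and the constraint degenerates to the trivial $N(l_0) \geq 0$; this is precisely the case in which $r$ is deleted from $P^M$. If no such $l_j$ lies in $M$, then every $1 - V(l_j) = 1$, the negation-as-failure terms drop out, and the constraint becomes $N(l_0) \geq \min(\range{N(l_1)}{N(l_m)})$, which is exactly what \pointtodef{constraintssimple} attaches (with weight~$1$) to the retained rule $(\arule{l_0}{\srange{l}{1}{m}})$ of $P^M$. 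Hence the non-trivial members of $C_{(P,V)}$ are literally the members of $C_{P^M}$, the two constraint sets have the same models, and therefore $S_{(P,V)} = S_{P^M}$.

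Finally I would invoke \pointtoprop{possiblitydefiniteanswersetA} on $\pi \in S_{P^M}$. In the non-vacuous case it yields that the unique consistent answer set of the simple program $P^M$ is $\condset{l}{N(l) = 1, l \in \hlit{P}} = M$, making $M$ an answer set of the reduct and hence of $P$. In the vacuous case the proposition says $P^M$ has no consistent answer set, while $M = \hlit{P}$. The step I expect to require the most care is this boundary case: one must reconcile the inconsistent answer set of the reduct with the convention that an inconsistent normal program has answer set $\hlit{P}$, checking that the reduct genuinely delivers $\hlit{P}$ rather than the possibly smaller $\hlit{P^M}$, so that $M = \hlit{P}$ is indeed an answer set of $P$.
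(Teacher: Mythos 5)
Your argument is sound in its core and takes a genuinely different route from the paper's. The paper does not prove \pointtoprop{possiblitynormalanswerset} directly at all: it treats a normal program as a disjunctive program with at most one head literal and obtains the statement as a special case of \pointtoprop{disjunctivepossibilistic}, whose proof (for the strong disjunctive semantics) is deferred to the appendix. You instead reduce downwards to the already-established simple-program results: when $V$ is $\set{0,1}$-valued, each constraint $\gamma_{{}_V}(p)$ either trivializes to $N(l_0)\geq 0$ --- precisely when the Gelfond--Lifschitz reduct $P^M$ deletes the rule --- or collapses to the constraint that \pointtodef{constraintssimple} attaches to the retained reduct rule, so $C_{(P,V)}$ and $C_{P^M}$ have the same possibilistic models, $S_{(P,V)}=S_{P^M}$, and \pointtoprop{possiblitydefiniteanswersetA} can finish the job. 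Your route gives a self-contained proof of the normal case that does not rest on the later disjunctive machinery, and it isolates exactly where hypothesis \eqref{rule:consistent} enters: without $N(l_j)\in\set{0,1}$ the terms $1-V(l_j)$ are not Boolean and the collapse onto $C_{P^M}$ fails (consistent with the $\set{a^{0.5}}$ counterexample the paper discusses after the proposition). The paper's route buys economy instead, since one appendix proof covers the normal and disjunctive cases at once.

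Two boundary points still need to be closed, both stemming from the fact that $\hlit{P^M}$ may be a proper subset of $\hlit{P}$. First, in the non-vacuous case, \pointtoprop{possiblitydefiniteanswersetA} applied to $P^M$ yields the answer set $\condset{l}{N(l)=1, l\in\hlit{P^M}}$, whereas you silently wrote $\condset{l}{N(l)=1, l\in\hlit{P}}$; you must still rule out $N(l)=1$ for $l\in\hlit{P}\setminus\hlit{P^M}$. This is fixable: model sets of constraints of the form \eqref{eq:constraints:simple} are closed under pointwise suprema, so $S_{P^M}$ is a singleton (cf.\ \pointtolem{singleclausal}); by \pointtoprop{possiblitydefiniteanswersetB} that singleton must be the two-valued distribution associated with the reduct's consistent answer set $A$, and \pointtolem{constraints:equivalent} applied with $L=\hlit{P}$ then gives $A=\condset{l}{N(l)=1, l\in\hlit{P}}=M$, which is what your stability argument needs. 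Second, in the vacuous case, which you flag but leave open: under the paper's convention, $M=\hlit{P}$ is an answer set of $P$ only if $P$ has no consistent answer set whatsoever, while your hypothesis speaks only about the one reduct $P^M=P^{\hlit{P}}$, i.e.\ the rules of $P$ without negation-as-failure. The bridge is that any consistent answer set $A$ of $P$ is a consistent model of $P^A\supseteq P^{\hlit{P}}$, and a consistent model of the simple program $P^{\hlit{P}}$ yields a consistent answer set of it (the least fixpoint of its non-constraint part, which is contained in $A$ and violates no constraint rule), contradicting the vacuous-case conclusion of \pointtoprop{possiblitydefiniteanswersetA}. With these two patches your proof is complete.
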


\begin{proof}
 This proposition is a special case of \pointtoprop{disjunctivepossibilistic} presented below.
\end{proof}

Note that the requirement stated in \eqref{rule:consistent} cannot be omitted.  Let us consider \pointtoex{extras}, in which we considered the normal program $P=\set{\arule{a}{\naf a}}$. This normal program $P$ has no classical answer sets. The constraint that corresponds with the rule $(\arule{a}{\naf a})$ is $N(a) \geq 1- V(a)$. For a choice of $V = \set{a^{0.5}}$, however, we would find that $V(a) = N(a)$ and thus that $V$ is an answer set of $P$ if we were to omit this requirement.

\begin{proposition}
\label{prop:normalanswersetpossibility}
Let $P$ be a normal program. If $M$ is an answer set of $P$, there is a valuation $V$, defined by $V(l) = 1$ if $l \in M$ and $V(l) = 0$ otherwise, and a possibility distribution $\pi\in S_{(P,V)}$ such that for every $l \in \hlit{P}$ we have $V(l) = N(l)$ (\ie $N(l) = 1$ if $l \in M$ and $N(l) = 0$ otherwise).
\end{proposition}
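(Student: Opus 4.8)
The plan is to reduce the normal case to the already-settled simple case (\pointtoprop{possiblitydefiniteanswersetB}) by exploiting two facts: the classical Gelfond--Lifschitz reduct $P^M$ of a normal program is a simple program, and the constraint set $C_{(P,V)}$ collapses onto $C_{P^M}$ once $V$ is the two-valued valuation induced by $M$. Concretely, I would first fix the witnesses: set $V(l)=1$ for $l\in M$ and $V(l)=0$ otherwise (as prescribed by the statement), and define $\pi$ by $\pi(\omega)=1$ iff $\omega\models M$ and $\pi(\omega)=0$ otherwise. Assuming $M$ consistent (the degenerate case $M=\hlit{P}$ yields the vacuous $\pi$ and can be treated directly), \pointtolem{constraints:equivalent} applied with $L=\hlit{P}$ gives $M=\condset{l}{N(l)=1,\ l\in\hlit{P}}$. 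Since $\pi$ takes only the values $0$ and $1$, each $N(l)=1-\Pi(\neg l)$ is itself in $\set{0,1}$, so $N(l)=1$ exactly when $l\in M$ and $N(l)=0$ otherwise, i.e.\ $N(l)=V(l)$ for all $l$. This discharges the required matching condition.

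It then remains to show $\pi\in S_{(P,V)}$, and the key observation is that $C_{(P,V)}$ agrees with $C_{P^M}$ up to trivial constraints. I would argue rule by rule. Take $p=(r,1)\in P$ with $r=(\grule{l})$ (recall $\lambda=1$, as $P$ is a classical normal program). If $r$ survives in the reduct, i.e.\ $\set{\srange{l}{m+1}{n}}\cap M=\emptyset$, then $V(l_j)=0$ and hence $1-V(l_j)=1$ for every $j>m$, so $\gamma_{{}_V}(p)$ simplifies to $N(l_0)\geq\min(\range{N(l_1)}{N(l_m)})$ --- precisely the constraint that the reduced rule $(\arule{l_0}{\srange{l}{1}{m}})$ contributes to $C_{P^M}$. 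If instead $r$ is deleted from the reduct, some $l_j\in M$ with $j>m$, so $1-V(l_j)=0$ and $\gamma_{{}_V}(p)$ becomes the vacuous constraint $N(l_0)\geq 0$. Thus $C_{(P,V)}$ and $C_{P^M}$ have exactly the same non-trivial constraints, hence the same possibilistic models, and therefore the same minimally specific models: $S_{(P,V)}=S_{P^M}$.

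Finally I would close the loop. Because $P$ is normal, $P^M$ is a simple program, and by the definition of an answer set of a normal program $M$ is an answer set of $P^M$. \pointtoprop{possiblitydefiniteanswersetB} then yields that the distribution $\pi$ constructed above lies in $S_{P^M}$, and by the identity $S_{(P,V)}=S_{P^M}$ just established we obtain $\pi\in S_{(P,V)}$. Together with $N(l)=V(l)$ from the first step, this is exactly what the proposition demands. I expect the main obstacle to be the middle step: carefully verifying that the guessed reduct in the constraint formulation (the $1-V(l_j)$ terms) lines up \emph{on the nose} with the syntactic reduct $P^M$, distinguishing surviving from deleted rules and confirming that deleted rules contribute only trivial constraints, so that the sets of minimally specific models are genuinely preserved.
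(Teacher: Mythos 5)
Your proof is correct, but it follows a genuinely different route from the paper's. The paper disposes of this proposition in one line: it is a special case of \pointtoprop{possibilisticdisjunctive}, since for a normal program the strong-disjunction constraint \eqref{disjunction:strong} degenerates to \eqref{eq:constraint} (the head is a single literal), so the disjunctive result, proved in the appendix, subsumes this one. You instead reduce \emph{downward} to the simple case: you show that for the two-valued valuation $V$ induced by $M$ the constraint set $C_{(P,V)}$ agrees with $C_{P^M}$ up to trivially satisfied constraints $N(l_0)\geq 0$ (rules deleted by the Gelfond--Lifschitz reduct contribute exactly these), so that $S_{(P,V)}=S_{P^M}$; since $M$ is by definition an answer set of the simple program $P^M$, \pointtoprop{possiblitydefiniteanswersetB} gives $\pi\in S_{P^M}=S_{(P,V)}$, and \pointtolem{constraints:equivalent} together with the two-valuedness of $\pi$ gives $N(l)=V(l)$ for all $l\in\hlit{P}$. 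Both arguments are sound, and each buys something: the paper's specialization avoids duplicating work, since a single appendix proof covers the normal and disjunctive statements at once and the two cannot drift apart; your reduction is self-contained at this point of the paper (it forward-references nothing, relying only on \pointtolem{constraints:equivalent} and \pointtoprop{possiblitydefiniteanswersetB}, which precede the proposition) and makes explicit the conceptually useful fact that, under a stable two-valued guess, the constraint-based semantics collapses on the nose onto the constraint encoding of the classical reduct. One remark on your terse treatment of the inconsistent answer set $M=\hlit{P}$: the dismissal is justified, but it is worth noting why --- there $V\equiv 1$, the vacuous $\pi$ yields $N\equiv 1$, and your identity $S_{(P,V)}=S_{P^M}$ together with \pointtoprop{possiblitydefiniteanswersetB} (whose statement does not assume $M$ consistent) still delivers $\pi\in S_{(P,V)}$; only the appeal to \pointtolem{constraints:equivalent} required consistency of $M$.
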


\begin{proof}
 This proposition is a special case of \pointtoprop{possibilisticdisjunctive} presented below.
\end{proof}

We like to point out to the reader that we could try to encode the information in a rule in such a way that we interpret `$\naf a$' as $\Pi(\neg a)$, which closely corresponds to the intuition of negation-as-failure. Indeed, when it is completely possible to assume that `$\neg a$' is true, then surely `$\naf a$' is true. Under this encoding, however, we run into a significant problem. Consider the rules ($\arule{b}{\naf c}$) and ($\arule{c}{\naf b}$). These rules would then correspond with the constraints $N(b) \geq \Pi(\neg c)$ and $N(c) \geq \Pi(\neg b)$, respectively. Notice though that both constraints can be rewritten as the constraint $1 - \Pi(\neg b) \geq \Pi(\neg c)$. This would imply that both rules are semantically equivalent in ASP, which is clearly not the case. Hence we cannot directly encode `$\naf a$' as $\Pi(\neg a)$ and guessing a valuation is indeed necessary since without the guess $V$ we would not be able to obtain a unique set of constraints. As we have shown this only affects literals preceded by negation-as-failure and we can continue to interpret a literal `$b$' as $N(b)$.


\section{Possibilistic Semantics of Disjunctive ASP Programs}\label{sec:disjunction}

We now turn our attention to how we can characterize disjunctive rules. We found in \pointtosec{characterization} that we can characterize a rule of the form $r=(\arule{head}{body})$ as the constraint $N(head) \geq N(body)$, or, similarly, that we can characterize a possibilistic  rule $p=(r,\lambda)$ as the constraint $N(head) \geq \min(N(body),\lambda)$. 
Such a characterization works particularly well due the min-decomposability \wrt conjunction. Indeed, since the body of \eg a simple rule $r = (\srule{l})$ is a conjunction of literals we can write ${body = \srangec{l}{1}{m}{\land}}$. Then $N(body)$ can be rewritten as $\min(\range{N(l_1)}{N(l_m)})$, which allows for a straightforward simplification. In a similar fashion, for a positive disjunctive rule ${r = (\arule{\srangec{l}{0}{k}{;}}{\srange{l}{k+1}{m}})}$ we can readily write $N(body)$ as $\min(\range{N(l_{k+1})}{N(l_m)})$. We would furthermore like to simplify $N(head)$ with $head = \srangec{l}{0}{k}{\lor}$. However, we do not have that $N(head) = \max(\range{N(l_0)}{N(l_k)})$. Indeed, in general we only have that $N(head) \geq \max(\range{N(l_0)}{N(l_k)})$.
This means that we can either choose to interpret the head as $\max(\range{N(l_0)}{N(l_k)})$ or $N(\srangec{l}{0}{k}{\vee})$. In particular, a \concept{possibilistic disjunctive rule} $p=(r,\lambda)$ with 
\[r=(\drule{l})\]
can either be interpreted as the constraint 
\begin{align}
  \max(\range{N(l_0)}{N(l_k)}) \geq \min(\range{N(l_{k+1})}{N(l_m)}, \prange{1-}{V(l_{m+1})}{V(l_n)},\lambda)\label{eq:strong}
\end{align}
which we will call the \concept{strong} interpretation of disjunction, or as the constraint
\begin{align}
  N(\srangec{l}{0}{k}{\vee}) \geq \min(\range{N(l_{k+1})}{N(l_m)}, \prange{1-}{V(l_{m+1})}{V(l_n)},\lambda)\label{eq:weak}
\end{align}
which we will call the \concept{weak} interpretation of disjunction. In the remainder of this paper, we syntactically differentiate between both approaches by using the notation $\srangec{l}{0}{k}{;}$ and $\srangec{l}{0}{k}{\lor}$ to denote the strong and the weak interpretation of disjunction, respectively.

The choice of how to treat disjunction is an important one that crucially impacts the nature of the resulting answer sets.
For example, the non-deterministic nature of strong disjunction provides a useful way to generate different (candidate) solutions, whereas weak disjunction is oftentimes better suited when we are interested in modelling the epistemic state of an agent since it amounts to accepting the disjunction as being true rather than making a choice of which disjunct to accept. In this section we consider both characterizations; the characterization of disjunction as \eqref{eq:strong} is discussed in \pointtosec{disjunction:strong} and in \pointtosec{disjunction:weak} we discuss the characterization as \eqref{eq:weak}. In particular we will show that the first characterization of disjunction corresponds to the semantics of disjunction found in ASP whereas the Boolean counterpart of the second characterization has, to the best of our knowledge, not yet been studied in the literature.


\subsection{Strong Possibilistic Semantics of Disjunctive Rules}\label{sec:disjunction:strong}

We first consider the characterization of disjunction in which we treat a disjunction of the form `$\srangec{l}{0}{k}{;}$' as $\max(N(l_0), \ldots, N(l_k))$. As it turns out, under these strong possibilistic semantics the disjunction behaves as in classical ASP.

\begin{definition}\label{def:strong}
  Let $P$ be a possibilistic disjunctive program and let $V$ be a valuation. For every possibilistic disjunctive rule $p=(r,\lambda)$ with $\lambda \in\ ]0,1]$ and $r = (\drule{l})$ the constraint $\gamma^\mathrm{s}_{{}_V}(p)$ induced by $p$ and $V$ is given by
  \begin{equation}
    \max(\range{N(l_0)}{N(l_k)}) \geq \min(\range{N(l_{k+1})}{N(l_m)}, \range{1-V(l_{m+1})}{1-V(l_n)},\lambda) \label{disjunction:strong}
  \end{equation}
  $C^\mathrm{s}_{(P,V)} = \condset{\gamma^\mathrm{s}_{{}_V}(p)}{p \in P}$ is the set of constraints imposed by program $P$ and $V$, and $S^\mathrm{s}_{(P,V)}$ is the set of all minimally specific possibilistic models of $C^\mathrm{s}_{(P,V)}$.\footnote{We use the superscript `s' to highlight that we employ the semantics of \emph{s}trong disjunction.}
\end{definition}
Whenever $P$ is a positive disjunctive program, \ie whenever $P$ is a disjunctive program without negation-as-failure, \eqref{disjunction:strong} is independent of $V$ and we simplify the notation to $\gamma^\mathrm{s}, C^\mathrm{s}_P$ and $S^\mathrm{s}_P$.

Notice that, unlike in possibilistic logic where a unique least specific possibility distribution exists because of the specific form of the considered constraints, the constraint of the form \eqref{disjunction:strong} can give rise to multiple minimally specific possibility distributions of which some will correspond with answer sets. Indeed, the program $P = \set{\arule{a;b}{}}$ induces the constraint $\max(N(a),N(b)) \geq 1$, which has two minimally specific possibility distributions, yet no least specific possibility distribution. Indeed, we have the minimally specific possibility distributions $\pi_1, \pi_2$ defined by
\begin{align*}
  \pi_1(\set{a,b}) &= 1 & \pi_1(\set{b}) &= 0&&&\pi_2(\set{a,b}) &= 1 & \pi_2(\set{b}) &= 1\\
  \pi_1(\set{a}) &= 1 & \pi_1(\set{}) &= 0&&&\pi_2(\set{a}) &= 0 & \pi_2(\set{}) &= 0
\end{align*}

\begin{definition}\label{def:possibilisticanswersetdisjunctive}
  Let $P$ be a possibilistic disjunctive program and let $V$ be a valuation. Let $\pi \in S^\mathrm{s}_{(P,V)}$ be such that
  \begin{equation*}
  \forall l \in \hlit{P} \cdot N(l) = V(l)
    \end{equation*}
  then $V = \condset{\plit{l}{N(l)}}{l \in \hlit{P}}$ is called a possibilistic answer set of $P$.
\end{definition}

We now further illustrate the semantics and the underlying intuition by considering a possibilistic disjunctive program in detail.

\begin{example}
Consider the possibilistic (positive) disjunctive program $P$ with the following rules:
\begin{align*}
\parule{0.8}{a;b}{}\\
\parule{0.6}{c}{a}\\
\parule{0.4}{c}{b}.
\end{align*}
The constraints $C^\mathrm{s}_P$ induced by this program are:
\begin{align*}
\max(N(a), N(b)) &\geq 0.8\\
N(c) &\geq \min(N(a), 0.6)\\
N(c) &\geq \min(N(b), 0.4).
\end{align*}
From the first constraint it follows that we either need to choose $V(a) = 0.8$ or $V(b) = 0.8$, in accordance with the principal of minimal specificity. Hence, we either obtain $V(c) = 0.6$ or $V(c) = 0.4$. As such we find that the two unique possibilistic answer sets of $P$ are $\set{\plit{a}{0.8}, \plit{c}{0.6}}$ and $\set{\plit{b}{0.8}, \plit{c}{0.4}}$.
\end{example}

\noindent As before, if we restrict ourselves to rules that are entirely certain we obtain a characterization of disjunctive programs in classical ASP.

\begin{example}\label{ex:strong}
Consider the program $P$ with the rules
\begin{align*}
  \arule{a;b&}{} & \arule{a&}{b}
\intertext{The set of constraints $C^\mathrm{s}_{P}$ is given by}
  \max(N(a), N(b)) &\geq N(\top) = 1&
  N(a) &\geq N(b).
\end{align*}
Intuitively, the first constraint induces a choice. To satisfy this constraint, we need to take either $N(a) = 1$ or $N(b) = 1$. Depending on our choice, we can consider two possibility distributions. The possibility distribution $\pi_1$ is the least specific possibility distribution that satisfies the constraints $N(a) = 1$ and $N(a) \geq N(b)$, whereas $\pi_2$ is the least specific possibility distribution satisfying the constraints $N(b) = 1$ and $N(a) \geq N(b)$:
\begin{align*}
  \pi_1(\set{a,b}) &= 1 & \pi_1(\set{b}) &= 0\\
  \pi_1(\set{a}) &= 1 & \pi_1(\set{}) &= 0
\end{align*}
and
\begin{align*}
  \pi_2(\set{a,b}) &= 1 & \pi_2(\set{b}) &= 0\\
  \pi_2(\set{a}) &= 0 & \pi_2(\set{}) &= 0.
\end{align*}
It is clear that the possibility distribution $\pi_2$ cannot be minimally specific \wrt the constraints ${\max(N(a), N(b)) = 1}$ and $N(a) \geq N(b)$ since $\pi_1(\set{a}) > \pi_2(\set{a})$ and ${\pi_1(\omega) \geq \pi_2(\omega)}$ for all other interpretations $\omega$. We thus have that $S^\mathrm{s}_P$ only contains a single element, namely $\pi_1$. With $N$ the necessity measure induced by $\pi_1$ we obtain $N(a) = 1$ and $N(b) = 0$. As will follow from \pointtoprop{disjunctivepossibilistic} and \ref{prop:possibilisticdisjunctive} the unique answer set of $P$ is therefore $\set{a}$.

\noindent Let us now add the rule (\arule{b}{\naf b}) to $P$. Notice that in classical ASP this extended program has no answer sets. The set of constraints $C^\mathrm{s}_{(P,V)}$ is given by:
\[
C^\mathrm{s}_P \cup \set{N(b) \geq 1-V(b)}.
\]
This new constraint, intuitively, tells us that `$b$' must necessarily be true, since we force it to be true whenever it is not true. Note, however, that the act of making `$b$' true effectively removes the motivation for making it true in the first place. As~expected, we cannot find any minimally specific possibilistic model that agrees with the constraints imposed by $P$ and $V$ such that $\forall l \in \hlit{P} \cdot N(l) \in \set{0,1}$. The problem has to do with our choice of $V(b)$. If we take $V(b) = 1$ then the constraint imposed by the first rule still forces us to choose either $N(a) = 1$ or $N(b) = N(a) = 1$ due to the interplay with the constraint imposed by the second rule. However, $S^\mathrm{s}_{(P,V)}$ contains only one minimally specific possibility distribution, namely the one with $N(a) = 1$. Hence $N(b) = 0 \neq V(b)$. If we take $V(b) = 0$ then the last rule forces $N(b) = 1$. Hence $V(b) = 0 \neq 1 = N(b)$.
\end{example}

\noindent Now that we have clarified the intuition, we can formalize the connection between the strong possibilistic semantics and classical disjunctive ASP.

\begin{proposition}
\label{prop:disjunctivepossibilistic}
Let $P$ be a disjunctive program, $V$ a valuation and let $\pi \in S^\mathrm{s}_{(P,V)}$ be~such~that
\begin{align}
\forall l \in \hlit{P} &\cdot V(l) = N(l)\label{eq:disstable}\text{ ; and}\\
\forall l \in \hlit{P} &\cdot N(l) \in \set{0,1}\label{eq:disconsistent}
\end{align}
then $M = \condset{l}{N(l) = 1, l \in \hlit{P}}$ is an answer set of the disjunctive program~$P$.
\end{proposition}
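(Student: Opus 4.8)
The plan is to show that $M = \condset{l}{N(l) = 1, l \in \hlit{P}}$ satisfies both conditions that define an answer set of a disjunctive program under the Gelfond--Lifschitz approach: namely that $M$ is a model of the reduct $P^M$, and that $M$ is a \emph{minimal} model of $P^M$ with respect to set inclusion. Throughout I would rely on the fact that, by \eqref{eq:disconsistent}, the induced necessity measure takes only the values $0$ and $1$, so that for each literal $N(l) \in \set{0,1}$ and hence the distribution $\pi$ essentially encodes the single candidate set $M$ together with whatever indeterminacy is permitted by the constraints. A useful first observation, analogous to \pointtolem{constraints:equivalent}, is that $N(l) = 1$ precisely when $l \in M$, so that membership in $M$ and ``necessarily true'' coincide.

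First I would verify that $M$ is \emph{consistent}, so that it is a genuine candidate answer set rather than $\hlit{P}$: since $\pi \in S^\mathrm{s}_{(P,V)}$ is a possibilistic model it is normalized (there is some $\omega$ with $\pi(\omega)=1$), and for such $\omega$ we cannot have both $N(l)=1$ and $N(\neg l)=1$, because $N(l)=1$ forces $\omega \models l$ while $N(\neg l) = 1$ forces $\omega \models \neg l$; hence $M$ contains no complementary pair. Next I would show $M$ is a model of $P^M$. Take any rule $r^M \in P^M$, arising from a possibilistic rule $p = (r,\lambda)$ with $r = (\drule{l})$ such that $body_-(r) \cap M = \emptyset$, i.e. $N(l_{m+1}) = \cdots = N(l_n) = 0$. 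Suppose the positive body holds in $M$, so $\set{\srange{l}{k+1}{m}} \subseteq M$, giving $N(l_{k+1}) = \cdots = N(l_m) = 1$. Because $body_-(r)$ is disjoint from $M$ and the negated literals all have $V$-value $0$ (using \eqref{eq:disstable}), the right-hand side of \eqref{disjunction:strong} equals $\min(1, \ldots, 1, \lambda) = \lambda > 0$. The constraint $\gamma^\mathrm{s}_{{}_V}(p)$ then yields $\max(\range{N(l_0)}{N(l_k)}) \geq \lambda > 0$, which by \eqref{eq:disconsistent} forces $N(l_i) = 1$ for some $0 \leq i \leq k$, i.e. $head(r) \cap M \neq \emptyset$. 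Thus $M$ satisfies every rule of the reduct.

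The main obstacle, as usual in stable-model arguments, will be \textbf{minimality}. Here I would argue by contradiction: suppose there is a consistent model $M' \subsetneq M$ of $P^M$. From $M'$ I would construct a competing possibility distribution $\pi'$, defined by $\pi'(\omega) = 1$ iff $\omega \models M'$ and $\pi'(\omega) = 0$ otherwise, and show that $\pi' \in S^\mathrm{s}_{(P,V)}$ while $\pi' > \pi$, contradicting the minimal specificity of $\pi$. Showing $\pi' \models C^\mathrm{s}_{(P,V)}$ reduces to checking each constraint \eqref{disjunction:strong}: for rules deleted from the reduct (those with $body_-(r) \cap M \neq \emptyset$) the $V$-dependent factor $1 - V(l_j)$ is $0$ for some $j$, so the right-hand side is $0$ and the constraint holds trivially; for surviving rules, if the positive body is necessarily true under $\pi'$ then it is a subset of $M'$, and since $M'$ is a model of $r^M$ the head meets $M'$, giving $N'(l_i) = 1$ for some head literal, so the constraint is satisfied. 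Here I must be careful that the $V$-values are held fixed (they come from the original guess, and by \eqref{eq:disstable} agree with $N$ on $M$), so $\pi'$ is tested against exactly the same constraint set $C^\mathrm{s}_{(P,V)}$.

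Finally I would establish the strict inequality $\pi' > \pi$ that delivers the contradiction. Since $M' \subsetneq M$, the set of models of $M'$ strictly contains the set of models of $M$ (fewer literals constrain $\omega$), so $\condset{\omega}{\omega \models M} \subseteq \condset{\omega}{\omega \models M'}$; hence $\pi'(\omega) \geq \pi(\omega)$ everywhere, and for any $\omega$ with $\omega \models M'$ but $\omega \not\models M$ we get $\pi'(\omega) = 1 > 0 = \pi(\omega)$, so $\pi' \geq \pi$ and $\pi' \neq \pi$. (I would first need to note that $\pi$ itself is the distribution placing mass $1$ exactly on the models of $M$ --- or at least is dominated by $\pi'$ --- which follows from $\pi$ being minimally specific and encoding necessities $N(l) = 1$ iff $l \in M$; the $0/1$-valued nature of $N$ from \eqref{eq:disconsistent} is what pins $\pi$ down to this two-valued form.) This contradicts $\pi \in S^\mathrm{s}_{(P,V)}$, so no such $M'$ exists and $M$ is a minimal model of $P^M$, completing the proof that $M$ is an answer set of $P$.
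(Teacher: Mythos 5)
Your overall strategy --- show that $M$ models the reduct $P^M$, then obtain minimality by converting any smaller consistent model $M'$ of $P^M$ into a two-valued possibility distribution $\pi'$ that dominates $\pi$, contradicting minimal specificity --- is sound and is the natural argument here; for the case where $M$ is consistent your proof essentially works. The genuine gap is your opening claim that ``since $\pi \in S^\mathrm{s}_{(P,V)}$ is a possibilistic model it is normalized.'' That is false: being a possibilistic model only means satisfying the constraints $C^\mathrm{s}_{(P,V)}$, and the \emph{vacuous} distribution ($\pi(\omega)=0$ everywhere) satisfies every constraint of the form \eqref{disjunction:strong}, since it makes $N(p)=1$ for every proposition $p$. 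Moreover, the vacuous distribution can be minimally specific while the hypotheses of the proposition hold: for $P = \set{(\arule{a}{}),\,(\arule{\neg a}{})}$ the constraints $N(a)\geq 1$ and $N(\neg a)\geq 1$ force $\pi(\omega)=0$ for every world $\omega$, so the vacuous distribution is the \emph{unique} (hence minimally specific) model, and with $V\equiv 1$ both \eqref{eq:disstable} and \eqref{eq:disconsistent} are satisfied. In that situation $M = \condset{l}{N(l)=1, l\in\hlit{P}} = \hlit{P}$ is inconsistent, so your assertion that $M$ contains no complementary pair fails, as does every later step that relies on consistency of $M$ (in particular the Lemma~\ref{lem:constraints:equivalent}-style identification $N(l)=1$ iff $l\in M$). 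The case cannot be defined away: the proposition is still true there, because $\hlit{P}$ is precisely the paper's inconsistent answer set, which the definitions admit whenever the reduct has no consistent model --- compare Proposition~\ref{prop:possiblitydefiniteanswersetA}, whose statement explicitly singles out the vacuous distribution.

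The repair is localized, which is why this is a gap rather than a wrong approach. The correct source of the normalized/vacuous dichotomy is hypothesis \eqref{eq:disconsistent}, not modelhood: since all literal necessities are two-valued, all literal possibilities are two-valued, and $\Pi(\top) = \max(\Pi(a),\Pi(\neg a)) \in \set{0,1}$ by max-decomposability, so $\pi$ is either normalized (your argument then applies verbatim) or vacuous. In the vacuous case $M = \hlit{P}$, the reduct $P^M$ consists exactly of the rules without negation-as-failure, $\hlit{P}$ is a model of $P^M$ by definition, and your own minimality construction finishes the job: any consistent model $M'$ of $P^M$ would yield a two-valued $\pi'$ satisfying $C^\mathrm{s}_{(P,V)}$ (deleted rules are neutralized because $1-V(l_j)=0$ for $l_j\in body_-(r)\cap M$; surviving rules because $M'$ models them) with $\pi' > \pi$, contradicting $\pi\in S^\mathrm{s}_{(P,V)}$; hence $\hlit{P}$ is a minimal model of $P^M$ and thus an answer set of $P$. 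Two smaller points to tighten in the consistent case: the strictness ``models of $M'$ strictly contain models of $M$'' needs a short witness construction (it uses consistency of $M$), and pinning $\pi$ down to the two-valued distribution over the models of $M$ should be justified by noting that the constraints mention only necessities of literals, which agree for $\pi$ and for its two-valued majorant, so minimal specificity forces them to coincide.
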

\proofat{4--5}

\begin{proposition}
\label{prop:possibilisticdisjunctive}
Let $P$ be a disjunctive program. If $M$ is an answer set of $P$, there is a valuation $V$, defined as $V(l) = 1$ if $l \in M$ and $V(l) = 0$ otherwise, and a possibility distribution~$\pi$, defined as $\pi(\omega) = 1$ if $\omega \models M$ and $\pi(\omega) = 0$ otherwise, such that $\pi \in S^\mathrm{s}_{(P,V)}$ and for every $l \in \hlit{P}$ we have $V(l) = N(l)$.
\end{proposition}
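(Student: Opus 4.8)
The plan is to prove the converse direction of the characterization: starting from a classical answer set $M$ of the disjunctive program $P$, I exhibit the specific valuation $V$ and possibility distribution $\pi$ described in the statement and verify the three required facts, namely that $\pi \in S^\mathrm{s}_{(P,V)}$, that $\pi$ is \emph{minimally} specific among models of $C^\mathrm{s}_{(P,V)}$, and that $V(l) = N(l)$ for every $l \in \hlit{P}$. I would first record the translation between the crisp objects and the possibilistic quantities. Since $\pi(\omega) = 1$ iff $\omega \models M$ and $\pi(\omega) = 0$ otherwise, a direct computation (essentially \pointtolem{constraints:equivalent}) gives $N(l) = 1$ when $l \in M$ and $N(l) = 0$ when $l \notin M$, for every consistent literal set $M$; this immediately yields $V(l) = N(l)$ for all $l \in \hlit{P}$, disposing of the last requirement.

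Next I would check that $\pi \models C^\mathrm{s}_{(P,V)}$, i.e. that $\pi$ satisfies every constraint \eqref{disjunction:strong}. Fix a rule $p = (r,\lambda) \in P$ with $r = (\drule{l})$. Because $M$ is an answer set of $P$, it is a model of the reduct $P^M$, so the classical rule $r$ is satisfied by $M$ in the usual sense: if $body_+(r) \subseteq M$ and $body_-(r) \cap M = \emptyset$, then $head(r) \cap M \neq \emptyset$. I translate this into the necessity inequality. The right-hand side $\min(\range{N(l_{k+1})}{N(l_m)}, \range{1-V(l_{m+1})}{1-V(l_n)}, \lambda)$ equals $1$ precisely when all positive body literals lie in $M$ (so each $N(l_i) = 1$) and no negated body literal lies in $M$ (so each $1 - V(l_j) = 1$), and it equals $0$ otherwise, since every $N(l_i)$ and every $1 - V(l_j)$ is in $\set{0,1}$ and $\lambda > 0$. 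In the former case the rule being satisfied forces some head literal $l_i \in M$, whence $\max(\range{N(l_0)}{N(l_k)}) = 1$ and the constraint holds; in the latter case the right-hand side is $0$ and the constraint holds trivially. Thus $\pi$ is a possibilistic model of $P$ relative to $V$.

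The main obstacle is the \textbf{minimal specificity} of $\pi$, which is where the minimality of $M$ as a model of $P^M$ (\ie the stability condition) must be used; satisfaction alone does not distinguish $\pi$ from larger distributions. I would argue by contradiction: suppose some $\pi' \models C^\mathrm{s}_{(P,V)}$ has $\pi' > \pi$, so $\pi'(\omega) > 0 = \pi(\omega)$ for some $\omega \not\models M$ while $\pi' \geq \pi$ everywhere. Raising possibility on worlds outside $M$ can only decrease necessities, so the induced $N'$ satisfies $N'(l) \leq N(l)$ for all $l$, and strictly so for at least one literal; let $M' = \condset{l}{N'(l) = 1} \subsetneq M$. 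I would then show $M'$ is still a model of the reduct $P^M = P^{M'}$ (the reduct is unchanged because $V$, and hence $body_-$ testing, is fixed via $M$), contradicting the minimality of $M$. Concretely, for each rule surviving in the reduct whose positive body lies in $M'$, the strong constraint \eqref{disjunction:strong} forces $\max_i N'(l_i) = 1$, so some head literal is in $M'$; this makes $M'$ a model of $P^M$ strictly smaller than $M$, the desired contradiction. Since this proposition is asserted to specialize \pointtoprop{normalanswersetpossibility}, I would present the disjunctive argument so that the normal case ($k = 0$) falls out directly, taking care that the consistency of $M$ guarantees $\pi$ is normalized and that the boundary convention $\min(\emptyset) = 1$ handles facts and empty positive bodies correctly.
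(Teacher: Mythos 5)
Your argument follows what is almost certainly the paper's own proof: (i) use \pointtolem{constraints:equivalent} to get $N(l)=1$ iff $l\in M$ (hence $V=N$ on $\hlit{P}$, since the two-valued $\pi$ makes every $N(l)\in\{0,1\}$); (ii) check each constraint \eqref{disjunction:strong} by observing that its right-hand side equals $1$ exactly when the rule's positive body lies in $M$ and its negative body misses $M$, in which case satisfaction of the reduct $P^M$ by $M$ puts a head literal in $M$, and equals $0$ otherwise; (iii) obtain minimal specificity by contradiction, converting any model $\pi'>\pi$ of $C^\mathrm{s}_{(P,V)}$ into a model $M'=\{l \mid N'(l)=1\}\subsetneq M$ of $P^M$, contradicting minimality of $M$. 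The two delicate points are handled correctly: $\pi'>\pi$ forces $\pi'(\omega^*)>0$ for some $\omega^*\not\models M$, which both yields $M'\subsetneq M$ and is compatible with the constraints remaining those of $P^M$ (they are built from the fixed $V$). (Your aside ``$P^M=P^{M'}$'' is not true in general, but you never actually need it: $M'$ is only ever tested against $P^M$.)

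The gap is the inconsistent answer set. The paper's definitions make $M=\hlit{P}$ an answer set whenever $P$ has no consistent one, and the proposition does not exclude this case; there $\pi$ is the vacuous distribution and $V\equiv 1$. Your step (i) cannot invoke \pointtolem{constraints:equivalent}, which assumes $M$ consistent (though the needed conclusion $V=N\equiv 1$ is immediate in that case). More seriously, step (iii) derives consistency of $M'$ from $M'\subseteq M$ together with consistency of $M$; for $M=\hlit{P}$ this collapses, and without consistency $M'$ is not a ``model'' of $P^M$ in the paper's sense, so no contradiction with minimality results. The repair is short: any $\pi'$ strictly above the vacuous distribution is non-vacuous, and non-vacuousness by itself forces $M'$ to be consistent (if $N'(a)=N'(\neg a)=1$ then $\Pi'(a)=\Pi'(\neg a)=0$, making $\pi'\equiv 0$); the same rule-by-rule argument then exhibits $M'$ as a consistent model of $P^{\hlit{P}}$, contradicting that $\hlit{P}$ is a minimal model of its reduct. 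So your proof is sound and on the expected track for consistent $M$, but as written it does not cover the full scope of the statement.
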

\proofat{5--6}


\subsection{Weak Possibilistic Semantics of Disjunctive Rules}\label{sec:disjunction:weak}
Under the strong possibilistic semantics of disjunction we consider all the disjuncts of a satisfied rule separately. Under this non-deterministic view the rule (\arule{a;b}{}) means that `$a$' is believed to be true or `$b$' is believed to be true. When looking at answer sets as epistemic states it becomes apparent that there is also another choice in how we can treat disjunction in the head. Indeed, we can look at the disjunction as a whole to hold, without making any explicit choices as to which of the disjuncts holds. When trying to reason about one's knowledge there are indeed situations in which we do not want, or simply cannot make, a choice as to which of the disjuncts is true. This implies that we need to look at an answer set as a set of clauses, rather than a set of literals.

An elaborate example using weak disjunction and uncertainty has been given in \pointtosec{introduction}. In this subsection we consider the semantics of such programs. For~starters, we will extend the PASP semantics with the notion of clauses, rather than literals, and define an applicable immediate consequence operator for programs composed of clauses. We then prove some important properties, such as the monotonicity of the immediate consequence operator. For the classical case (\ie when omitting weights), we furthermore characterize the complexity of clausal programs, both with and without negation-as-failure in \pointtosec{complexity}. In particular, we show how the complexity is critically determined by whether we restrict ourselves to atoms and highlight, as shown by the higher complexity of some of the reasoning tasks, that weak disjunction is a non-trivial extension of ASP.

We start by formally defining possibilistic clausal programs, \ie possibilistic programs with a syntax that allows for disjunction in the body. We then define the weak possibilistic semantics of such clausal programs in terms of constraints on possibility distributions. 
We also introduce an equivalent characterization based on an immediate consequence operator and a reduct, which is more in line with the usual treatment of ASP programs. When all the rules are entirely certain we obtain the classical counterpart, which we name clausal programs. 

\subsubsection{Semantical Characterization}\label{sec:weak:definition}
We rely on the notion of a \concept{clause}, \ie a~finite disjunction of literals. Consistency and entailment for sets of clauses are defined as in propositional logic. As~such, we can derive from the information `$a \lor b \lor c$' and `$\neg b$' that `$a \lor c$' is true.

\begin{definition}
 A~\concept{clausal rule} is an expression of the form (\grule{e}) 
 with $e_i$ a clause for every $0 \leq i \leq n$. A~\concept{positive clausal rule} is an expression of the form (\srule{e})
, \ie a~clausal rule without negation-as-failure. A \concept{(positive) clausal program} is a finite set of (positive) clausal rules.
\end{definition}

For a clausal rule, which is of the form $r = (\grule{e})$, we say that $e_0$ is the \concept{head} and that $\gbody{e}{1}{m}{n}$ is the \concept{body} of the clausal rule. We use the notation $head(r)$ and $body(r)$ to denote the clause in the head, \resp the set of clauses in the body. The Herbrand base $\hbase{P}$ of a clausal program $P$ is still defined as the set of atoms appearing in $P$. As such, possibility distributions are defined in the usual way as $\function{\pi}{2^{\hbase{P}}}{[0,1]}$ mappings.

Until now, we were able to define the possibility distributions that satisfied the constraints imposed by the rules in a program in terms of a valuation $V$, \ie a $\function{V}{\hlit{P}}{[0,1]}$ mapping. This need no longer be the case. Specifically, note that we will now impose constraints of the form $N(\srangec{l}{0}{k}{\lor}) \geq \lambda$. Assume that we have a possibility distribution $\pi$ defined as
\begin{align*}
  \pi(\set{a,b,c}) &= 0 &\pi(\set{a,b}) &= 0 &\pi(\set{a,c}) &= 1 &\pi(\set{a}) &= 1\\
  \pi(\set{b,c}) &= 0 &\pi(\set{b}) &= 0 &\pi(\set{c}) &= 1 &\pi(\set{}) &= 0.
\end{align*}
This possibility distribution is the least specific possibility distribution that satisfies the constraints $N(a \lor b \lor c) = 1$ and $N(\neg b) = 1$. However, it can be verified that this possibility distribution cannot be defined in terms of a mapping ${\function{V}{\hlit{P}}{[0,1]}}$. 

Instead, we~define the set of clauses appearing in the head of the rules of a clausal program $P$ as $\hclause{P} = \condset{head(r)}{r \in P}$. Given a clausal program, it is clear that the only information that can be derived from the program are those clauses that are in the head of a rule. To compactly describe a possibility distribution imposed by clausal programs we will thus, for the remainder of this section and for \pointtosec{complexity}, take a valuation $V$ to be a $\hclause{P} \rightarrow [0,1]$ mapping. As~before, a valuation $V$ corresponds with the set of constraints $\condset{N(e) \geq \lambda}{\plit{e}{\lambda} \in V}$. The set notation for valuations and the notations $V^\lambda$~and~$V^{\underline{\lambda}}$ are extended as usual. 
Entailment for valuations is defined as in possibilistic logic, \ie if we consider the least specific possibility distribution $\pi_V$ satisfying the constraints $\condset{N_V(e) \geq \lambda}{\plit{e}{\lambda} \in V}$ then ${V \models p^\lambda}$ with `$p$' a proposition iff $N_V(p) \geq \lambda$. In particular, recall from possibilistic logic the inference rules (GMP) or graded modus ponens, \ie we can infer from $N(\alpha) \geq \lambda$ and $N(\alpha \rightarrow \beta) \geq \lambda'$ that $N(\beta) \geq \min(\lambda, \lambda')$. In addition recall the inference rule (S), \ie we can infer from $N(\alpha) \geq \lambda$ that $N(\alpha) \geq \lambda'$ with $\lambda \geq \lambda'$.

\begin{definition}
A~\concept{possibilistic (positive) clausal program} is a set of possibilistic (positive) clausal rules, which are pairs $p = (r, \lambda)$ with $r$ a (positive) clausal rule and $\lambda \in\ ]0,1]$ a certainty associated with $r$.
\end{definition}
We define $P^*$ and the $\lambda$-cut $P_\lambda$ as usual.

We are now almost able to define the semantics of weak disjunction. In the previous sections we guessed a valuation and used this valuation to deal with negation-as-failure. However, for clausal programs, a new problem arises. Note that the least specific possibility distribution that satisfies the constraints $N(a \lor b \lor c) = 1$ and $N(\neg b) = 1$ is also the least specific possibility distribution that satisfies the constraints $N(a \lor c)$ and $N(\neg b)$. As~such, if $\hclause{P} = \set{(a \lor b \lor c), (\neg b), (a \lor c)}$, there would not be a unique valuation that can be used to define this least specific possibility distribution. Indeed, a valuation uniquely defines a possibility distribution, but not vice versa. To avoid such ambiguity, we will instead immediately guess a possibility distribution $\pi_V$ and use this possibility distribution to deal with negation-as-failure in a clausal program.

\begin{definition}\label{def:weak}
  Let $P$ be a possibilistic clausal program and let $\pi_V$ be a possibility distribution. For every $p \in P$, the constraint $\gamma^\mathrm{w}_{\pi_V}(p)$ induced by $p=(r,\lambda)$ with $\lambda\in\ ]0,1]$, $r = (\grule{e})$ and $\pi_V$ under the weak possibilistic semantics is given~by
  \begin{equation}
    N(e_0) \geq \min(\range{N(e_1)}{N(e_m)}, \prange{1-}{N_V(e_{m+1})}{N_V(e_n)},\lambda)\label{disjunction:weak}.
  \end{equation}
  $C^\mathrm{w}_{(P,\pi_V)} = \condset{\gamma^\mathrm{w}_{\pi_V}(p)}{p \in P}$ is the set of constraints imposed by program $P$ and~$\pi_V$, and $S^\mathrm{w}_{(P,\pi_V)}$ is the set of all minimally specific possibilistic models of $C^\mathrm{w}_{(P,\pi_V)}$. 
\end{definition}

Whenever $P$ is a possibilistic (positive) clausal program, \ie whenever $P$ is a possibilistic clausal program without negation-as-failure, \eqref{disjunction:weak} is independent of~$\pi_V$ and we simplify the notation to $\gamma^\mathrm{w}, C^\mathrm{w}_P$ and $S^\mathrm{w}_P$.

\vspace{5px}
\begin{definition}\label{def:weak:poss:semantics}
Let $P$ be a possibilistic clausal program. Let $\pi_V$ be a possibility distribution such that $\pi_V \in S^\mathrm{w}_{(P,\pi_V)}$. We then say that $\pi_V$ is a possibilistic answer set of~$P$.
\end{definition}
As already indicated we can also use a valuation $V$ to concisely describe $\pi_V$. When we say that $V$ is a possibilistic answer set of the clausal program~$P$ we are, more precisely, stating that the possibility distribution induced by $V$ is a possibilistic answer set of the clausal program~$P$.

\begin{lemma}\label{lem:singleclausal}
Let $P$ be a possibilistic positive clausal program. Then $S^\mathrm{w}_{(P,\pi_V)}$ is a singleton, \ie ${\pi \in S^\mathrm{w}_{(P,\pi_V)}}$ is a least specific possibility distribution.
\end{lemma}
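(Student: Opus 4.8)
The plan is to show that the constraints in $C^\mathrm{w}_P$ for a possibilistic \emph{positive} clausal program all have the canonical form $N(e_0) \geq \lambda'$ for some fixed lower bound $\lambda'$, and then to invoke the classical result from possibilistic logic (cited in \pointtosec{background:pl}) that a set of constraints of the form $N(p) \geq c$ admits a unique least specific possibility distribution. The key observation is that when $P$ is positive we have $n = m$, so \eqref{disjunction:weak} reduces to $N(e_0) \geq \min(N(e_1), \ldots, N(e_m), \lambda)$, which is independent of $\pi_V$; this is exactly why \pointtodef{weak} lets us write $C^\mathrm{w}_P$ and $S^\mathrm{w}_P$ without reference to $\pi_V$.

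First I would argue that each such constraint, although it mentions the necessity degrees of the body clauses on its right-hand side, is nonetheless of the purely possibilistic-logic form. The right-hand side $\min(N(e_1), \ldots, N(e_m), \lambda)$ is not a constant, but the crucial point is that $C^\mathrm{w}_P$ as a whole is equivalent to a genuine possibilistic knowledge base: by repeatedly applying graded modus ponens (GMP), recalled at the end of \pointtosec{weak:definition}, we can propagate the certainties $\lambda$ attached to facts through the rules, obtaining for every head clause $e_0$ a derived lower bound that is a constant in $[0,1]$. Concretely, the constraints in $C^\mathrm{w}_P$ are of the form $N(e_0 \leftarrow e_1 \wedge \cdots \wedge e_m) \geq \lambda$ (treating a rule with weight $\lambda$ as a weighted material implication), and such constraints are exactly the formulas of a possibilistic knowledge base. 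Since a possibilistic knowledge base is precisely a set of constraints $N(p) \geq c$, the result from \pointtosec{background:pl} that the least specific possibility distribution always exists and is unique applies directly.

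The cleanest route, therefore, is to exhibit $C^\mathrm{w}_P$ as (equivalent to) a possibilistic knowledge base $\Sigma = \condset{(e_0 \leftarrow e_1 \wedge \cdots \wedge e_m,\ \lambda)}{(r,\lambda) \in P,\ r = (\srule{e})}$ and then quote the classical uniqueness theorem verbatim: the least specific $\pi$ satisfying $\Sigma$ exists and is unique, hence $S^\mathrm{w}_P$ is a singleton. I would verify that the possibilistic models of $C^\mathrm{w}_P$ coincide exactly with the models of $\Sigma$ by unfolding the semantics of the weighted implication $N(e_0 \leftarrow \bigwedge_i e_i) \geq \lambda$ and checking it is equivalent to the inequality \eqref{disjunction:weak} (using min-decomposability of $N$ with respect to conjunction and the definition $N(p) = 1 - \Pi(\neg p)$).

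The main obstacle I anticipate is precisely this equivalence step: one must be careful that the constraint $N(e_0) \geq \min(N(e_1), \ldots, N(e_m), \lambda)$, in which the body clauses $e_i$ are themselves disjunctions of literals rather than single literals, still corresponds to the material-implication formula, and that the min on the right is correctly reconstructed by the necessity of the implication. Because $N$ only satisfies $N(e_i \vee \cdots) \geq \max(\ldots)$ rather than equality for disjunctions, I would take care to keep each body clause $e_i$ as an opaque propositional formula and reason at the level of whole clauses, never trying to decompose $N(e_i)$ into the necessities of its disjuncts. Once the constraints are recognized as a standard possibilistic knowledge base, uniqueness of the least specific distribution is immediate and the lemma follows.
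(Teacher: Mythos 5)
Your proposal starts correctly: for a positive program the constraints \eqref{disjunction:weak} are independent of $\pi_V$ and take the form $N(e_0) \geq \min(N(e_1),\ldots,N(e_m),\lambda)$. But the step you yourself describe as the cleanest route --- replacing $C^\mathrm{w}_P$ by the possibilistic knowledge base $\Sigma$ of weighted material implications and invoking the classical uniqueness theorem --- contains a genuine gap: the two sets of constraints are \emph{not} equivalent. Only one direction holds ($N(e_1 \wedge \cdots \wedge e_m \rightarrow e_0) \geq \lambda$ entails $N(e_0) \geq \min(N(e_1),\ldots,N(e_m),\lambda)$ by GMP); the converse fails because the material implication additionally licenses contraposition. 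The paper makes exactly this point in \pointtosec{related:rulesas}: for the program $\set{\arule{b}{a}, \arule{\neg b}{}}$ (whose rules are single-literal clauses, so it is a positive clausal program), the knowledge base $\set{(a \rightarrow b,\,1), (\neg b,\,1)}$ forces $\Pi(a)=0$ and hence $N(\neg a)=1$, whereas the constraints $N(b) \geq N(a)$ and $N(\neg b) \geq 1$ admit the least specific model with $\pi(\set{a}) = \pi(\set{}) = 1$, $\pi(\set{a,b}) = \pi(\set{b}) = 0$, giving $N(\neg a)=0$. So the models of $\Sigma$ form a \emph{strict} subset of the models of $C^\mathrm{w}_P$, and the least specific model of $C^\mathrm{w}_P$ is not even a model of $\Sigma$. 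The verification you flag as the ``main obstacle'' is therefore not a technicality to be checked but a step that cannot be carried out: your argument establishes uniqueness for the wrong constraint set.

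The lemma instead follows directly from the \emph{shape} of the rule-form constraints, which is what the paper's terse proof appeals to. Each constraint has a single necessity degree on the left and a minimum of necessity degrees and a constant on the right, and such constraints are closed under pointwise suprema of possibility distributions: if every $\pi_i$ satisfies $C^\mathrm{w}_P$ and $\pi^* = \sup_i \pi_i$, then $N_{\pi^*}(p) = \inf_i N_{\pi_i}(p)$ for every proposition $p$, so since $\min$ commutes with $\inf$,
\begin{align*}
N_{\pi^*}(e_0) = \inf_i N_{\pi_i}(e_0) &\geq \inf_i \min\left(N_{\pi_i}(e_1),\ldots,N_{\pi_i}(e_m),\lambda\right)\\
&= \min\left(N_{\pi^*}(e_1),\ldots,N_{\pi^*}(e_m),\lambda\right).
\end{align*}
The model set is non-empty (the vacuous distribution satisfies every constraint, since it makes $N(p)=1$ for all $p$), so the supremum of \emph{all} models is itself a model; it is then the unique maximal, i.e.\ minimally specific, element, and $S^\mathrm{w}_P$ is a singleton. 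Note that this closure argument is exactly what fails for strong disjunction, where the left-hand side $\max(N(l_0),\ldots,N(l_k))$ does not commute with $\inf$ --- which is why $S^\mathrm{s}_{(P,V)}$ can contain several distributions. Finally, your side remark about propagating constant lower bounds via GMP can be developed into a correct proof, but it essentially rebuilds the fixpoint construction of \pointtoprop{weak:positive:answerset} and still requires the closure-under-suprema observation (or an explicit check that the least specific model of the propagated bounds satisfies the original constraints), so it is not the shortcut you intend.
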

\begin{proof}
This readily follows from the form of the constraints imposed by the rules $p \in P$ and since a possibilistic positive clausal program is free of negation-as-failure.
\end{proof}

\begin{example}\label{ex:clausal:example}
Consider the possibilistic clausal program $P$ with the rules:
\begin{align*}
  \prulealign{1}{a\lor c \lor d}{}\\
  \prulealign{0.4}{\neg d}{}\\
  \prulealign{0.8}{e}{\naf (a \lor b \lor c)}.
\end{align*}

We have that $C^{\mathrm{w}}_{(P,\pi_V)}$ is the set of constraints:
\begin{align*}
  N(a\lor c \lor d) &\geq 1\\
  N(\neg d) &\geq 0.4\\
  N(e) &\geq \min(1-N_V(a \lor b \lor c), 0.8).
\end{align*}
We can rewrite the first constraint as $N(\neg d \rightarrow a \lor c) \geq 1$. Given the second constraint $N(\neg d) \geq 0.4$ we can apply the inference rule (GMP) to conclude that $N(a \lor c) \geq 0.4$. From propositional logic we know that $(a \lor c) \rightarrow (a \lor b \lor c)$, \ie we also have $N(a \lor b \lor c) \geq 0.4$. 

For $\pi_V$ to be an answer set of $P$ we know from \pointtodef{weak:poss:semantics} that we must have that $\pi \in S^{\mathrm{w}}_{(P,\pi_V)}$ with $\pi = \pi_V$. In other words, we must have that ${N_V(a \lor b \lor c)} = {N(a \lor b \lor c)} \geq 0.4$. Due to the principle of least specificity, which implies that ${N(a \lor b \lor c)} = 0.4$, the last constraints can be simplified to ${N(e) \geq \min(1-0.4, 0.8)}$ or $N(e) \geq 0.6$. As~such, the least specific possibility distribution defined by the constraints $N(e) \geq 0.6$, $N(a\lor c \lor d) \geq 1$ and $N(\neg d) \geq 0.4$ is a possibilistic answer set of $P$.
\end{example}

Notice that we implicitly defined the possibilistic answer set of the previous example as a valuation, \ie in terms of clauses that appear in the head. Alternatively we could thus write that $V = \set{\plit{e}{0.6}, \plit{a \lor b \lor d}{1}, \plit{\neg b}{0.4}}$ defines the possibilistic answer set of $P$. This idea will be further developed in \pointtosec{weak:syntactic} to avoid the need to explicitly define a possibility distribution (which would require an exponential amount of space) and instead rely on an encoding of a possibility distribution by a (polynomial) set of weighted clauses.

For the crisp case, we only want clauses that are either entirely certain or completely uncertain, \ie true or false. To this end, we add the constraint \eqref{eq:clauseconsistent}, which is similar to \eqref{rule:consistent} from \pointtoprop{possiblitynormalanswerset}.

\begin{definition}\label{def:weak:semantics}
Let $P$ be a clausal program and $\pi_V \in S^\mathrm{w}_{(P,\pi_V)}$ a possibility distribution such that
\begin{align}
\forall \omega \in \Omega &\cdot \pi_V(\omega) \in \set{0,1}\label{eq:clauseconsistent}
\end{align}
then $\pi_V
$ is called an answer set of~$P$.
\end{definition}

\subsubsection{Syntactic Characterization}\label{sec:weak:syntactic}

We now introduce a syntactic counterpart of the semantics for weak disjunction by defining an immediate consequence and reduct operator. As such, it is more in line with the classical Gelfond-Lifschitz approach. In addition, the syntactic approach only needs polynomial size (as we will only consider clauses appearing in the head of the clausal rules). Indeed, what we will do is formalise the idea of using a valuation to determine the possibilistic answer sets of a clausal program, rather than relying on an exponential possibility distribution.

\begin{definition}\label{def:weak:immediate}
Let $P$ be a possibilistic positive clausal program. We define the immediate consequence operator~$T^{\mathrm{w}}_P$~as:
\begin{align*}
T^\mathrm{w}_P(V)(e_0) = \max\condset{\lambda \in [0,1]}{(\srule{e}) \in P_\lambda \text{ and } \sForall{i \in \set{\range{1}{m}}}{V^\lambda \models e_i}}.
\end{align*}
We use $P^{\star}_\mathrm{w}$ to denote the fixpoint which is obtained by repeatedly applying $T^\mathrm{w}_P$ starting from the minimal clausal valuation $V=\emptyset$, \ie the least fixpoint of $T^\mathrm{w}_P$ \wrt set inclusion. When $P$ is a positive clausal program we take $\lambda \in \set{0,1}$.
\end{definition}

\begin{example}
 Consider the clausal program $P$ with the clausal rules
 \begin{align*}
  \prulealign{1}{a\lor b \lor c}{}\\
  \prulealign{0.4}{\neg b}{}\\
  \prulealign{0.8}{e}{(a \lor c \lor d)}.
 \end{align*}
 
We can easily verify that, starting from $V = \emptyset$, we obtain 
\begin{align*}
  T^\mathrm{w}_P(V)(a \lor b \lor c) &= 1 \text{ and }\\
  T^\mathrm{w}_P(V)(\neg b) &= 0.4.
\end{align*}
In the next iteration we furthermore find that
\begin{align*}
  T^\mathrm{w}_P(T^\mathrm{w}_P(V))(e) &= 0.4
\end{align*}
since $(\prule{0.8}{e}{(a \lor c \lor d)}) \in P_{0.4}$ and since ${(T^\mathrm{w}_P(V))}^{0.4} \models a \lor c \lor d$. In addition, this is the least fixpoint, \ie we have ${P^{\star}_\mathrm{w} = \set{\plit{(a \lor b \lor c)}{1}, \plit{\neg b}{0.4}, \plit{e}{0.4}}}$. 
\end{example}

Notice that this definition of the immediate consequence operator is a generalization of the immediate consequence operator for possibilistic simple programs (see \pointtodef{nicolas:immediate}). Indeed, for a possibilistic positive clausal program where all clauses contain only a single literal, \ie a possibilistic simple program, we have that $P^{\star} = P^{\star}_\mathrm{w}$. In addition, when all clauses contain only a single literal, we can simplify the immediate consequence operator and simply write $e_i \in V^\lambda$ instead of $V^\lambda \models e_i$.

\noindent We now show that the fixpoint obtained from the immediate consequence operator $T^\mathrm{w}_P$ is indeed the answer set of $P$.

\begin{proposition}
\label{prop:weak:positive:answerset}
Let $P$ be a possibilistic positive clausal program without possibilistic constraint rules. Then $P^{\star}_{\mathrm{w}}$ is a possibilistic answer set of~$P$.
\end{proposition}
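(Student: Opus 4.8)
The plan is to show that the possibility distribution $\pi_{P^{\star}_\mathrm{w}}$ induced by the fixpoint valuation $P^{\star}_\mathrm{w}$ coincides with the unique minimally specific model of $C^\mathrm{w}_P$, which by \pointtodef{weak:poss:semantics} is precisely the possibilistic answer set. By \pointtolem{singleclausal}, $S^\mathrm{w}_P$ is a singleton; write $\pi^{*}$ for its unique least specific element (the greatest model \wrt $\geq$) and $N^{*}$ for the induced necessity measure. Since a positive clausal program has finitely many weights, every maximum in sight is attained, and I may freely use the standard possibilistic-logic characterisation (obtained from the inference rules (GMP) and (S)) that $N_{\pi_V}(p) = \max\condset{\alpha}{V^{\alpha}\modelsPOS p}$, \ie the necessity of a proposition equals the largest level whose cut classically entails it. It then suffices to prove $\pi_{P^{\star}_\mathrm{w}} = \pi^{*}$.

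For the first inclusion I would show $\pi_{P^{\star}_\mathrm{w}}\modelsPOS C^\mathrm{w}_P$, so that $\pi_{P^{\star}_\mathrm{w}} \leq \pi^{*}$ because $\pi^{*}$ is the greatest model. Fix a rule $p=(r,\lambda)$ with $r=(\srule{e})$ and set $\mu = \min(\range{N_{P^{\star}_\mathrm{w}}(e_1)}{N_{P^{\star}_\mathrm{w}}(e_m)},\lambda)$; the case $\mu=0$ is trivial. For $\mu>0$ each body necessity satisfies $N_{P^{\star}_\mathrm{w}}(e_i)\geq\mu$, so by the cut-characterisation $(P^{\star}_\mathrm{w})^{\mu}\modelsPOS e_i$, while $\lambda\geq\mu$ places $r$ in $P_{\mu}$. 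By \pointtodef{weak:immediate} this forces $T^\mathrm{w}_P(P^{\star}_\mathrm{w})(e_0)\geq\mu$, and since $P^{\star}_\mathrm{w}$ is a fixpoint we obtain $N_{P^{\star}_\mathrm{w}}(e_0)\geq P^{\star}_\mathrm{w}(e_0)\geq\mu$, \ie the constraint \eqref{disjunction:weak} holds.

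For the converse inclusion I would prove that \emph{every} model $\pi$ of $C^\mathrm{w}_P$ satisfies $N_{\pi}(e)\geq P^{\star}_\mathrm{w}(e)$ for every head clause $e$, arguing by induction on the (increasing, since $T^\mathrm{w}_P$ is monotone) stages $V_0=\emptyset$, $V_{j+1}=T^\mathrm{w}_P(V_j)$ whose supremum is $P^{\star}_\mathrm{w}$. In the inductive step, if $T^\mathrm{w}_P(V_j)(e_0)=\lambda^{*}$ is witnessed by a rule with body $\range{e_1}{e_m}$ and $V_j^{\lambda^{*}}\modelsPOS e_i$, then the induction hypothesis gives $N_{\pi}(e)\geq\lambda^{*}$ for every $e\in V_j^{\lambda^{*}}$; min-decomposability of $N$ \wrt conjunction together with the monotonicity of $N$ under classical entailment then yields $N_{\pi}(e_i)\geq\lambda^{*}$ for each body clause, and the constraint for that rule gives $N_{\pi}(e_0)\geq\min(\lambda^{*},\dots,\lambda^{*},\lambda)=\lambda^{*}$. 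Consequently any model $\pi$ of $C^\mathrm{w}_P$ meets the necessity bounds defining $\pi_{P^{\star}_\mathrm{w}}$, whence $\pi\leq\pi_{P^{\star}_\mathrm{w}}$; in particular $\pi^{*}\leq\pi_{P^{\star}_\mathrm{w}}$. Combined with the first inclusion this gives $\pi_{P^{\star}_\mathrm{w}}=\pi^{*}$, so $P^{\star}_\mathrm{w}$ is the possibilistic answer set of $P$.

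I expect the main obstacle to be the inductive step of the second inclusion: the immediate consequence operator tests bodies through full propositional entailment $V^{\lambda}\modelsPOS e_i$ among the head clauses, so the delicate point is to transport the numerical necessity bounds across arbitrary clausal resolution. This is exactly where min-decomposability of the necessity measure and its monotonicity under entailment have to be combined with the cut-characterisation of $N_{\pi_V}$. Finally, the hypothesis that $P$ contains no constraint rules is what guarantees that $\pi^{*}$ is normalized, so that the accumulated bounds remain mutually consistent and the least specific model is genuinely realized by $\pi_{P^{\star}_\mathrm{w}}$.
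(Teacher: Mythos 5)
Your proof is correct and follows essentially the same route as the paper's: you sandwich the distribution induced by $P^{\star}_{\mathrm{w}}$ between membership in the set of models of $C^{\mathrm{w}}_P$ (via the fixpoint property together with the cut characterization of possibilistic entailment) and maximality among those models (via induction on the stages of $T^{\mathrm{w}}_P$, using min-decomposability and monotonicity of $N$ under classical entailment), so that it coincides with the unique least specific model guaranteed by \pointtolem{singleclausal}. One caveat worth noting: your closing remark misattributes the role of the no-constraint-rules hypothesis --- absence of constraint rules does \emph{not} ensure normalization (the facts $a$ and $\neg a$ alone force the vacuous distribution), but this is harmless because nothing in your argument actually requires normalization: the cut characterization and both of your inequalities hold for subnormalized distributions as well.
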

\proofat{6--7}

Thus far, we only considered possibilistic positive clausal programs. If we allow for negation-as-failure, we will also need to generalize the notion of a reduct. As usual, in the classical case we want that an expression of the form `$\naf e$' is true when `$e$' cannot be entailed. Furthermore, since we are working in the possibilistic case, we want to take the degrees into account when determining the reduct.

\begin{definition}\label{def:immediate:weak}
Given a possibilistic clausal program $P$ and a valuation $V$, the reduct $P^V$ of $P$ \wrt $V$ is defined as:
\begin{align*}
  P^V = \{\ &((\srule{e}), \min(\lambda_{\mathit{rule}}, \lambda_{\mathit{body}}))~\mid~\min(\lambda_{\mathit{rule}}, \lambda_{\mathit{body}}) > 0\\
  &\land \lambda_{\mathit{body}} = \max\condset{\lambda}{\forall i \in \set{\range{m+1}{n}} \cdot V^{\underline{1-\lambda}} \not\models e_i,\lambda \in [0,1]}\\
  &\land ((\grule{e}), \lambda_{\mathit{rule}}) \in P \}
\end{align*}
\end{definition}
This definition corresponds with the Gelfond-Lifschitz reduct when we consider crisp clausal programs where each clause consists of exactly one literal. Indeed, if we consider clauses with exactly one literal, we could simplify $\forall i \in \set{\range{m+1}{n}} \cdot V^{\underline{1-\lambda}} \not\models e_i$ to $\set{\srange{e}{m+1}{n}} \cap V^{\underline{1-\lambda}} = \emptyset$. This new reduct generalises the Gelfond-Lifschitz reduct in two ways. Firstly, we now have clauses, \ie we now need to verify whether the negative body is not entailed by our guess. 
 Secondly, we need to take the weights attached to the rules, which we interpret as certainties, into account. In particular, the certainty of the reduct of a rule is limited by the certainty of the negative body of the rule and the certainty of the rule itself. In the crisp case these certainty degrees would become trivial.

\begin{proposition}
\label{prop:weak:general:answerset}
A valuation $E$ is a possibilistic answer set of the possibilistic clausal program $P$ without possibilistic constraint rules iff $E$ is a possibilistic answer set of $P^E$.
\end{proposition}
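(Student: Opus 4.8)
The plan is to fix the guess and show that the semantic constraints of \pointtodef{weak}, once instantiated at that guess, coincide with the constraints produced by the positive reduct $P^E$ of \pointtodef{immediate:weak}; the two answer-set notions then agree because they are governed by the very same family of minimally specific models. Concretely, for a valuation $E$ with induced possibility distribution $\pi_E$, \pointtodef{weak:poss:semantics} says that $E$ is a possibilistic answer set of $P$ exactly when $\pi_E \in S^\mathrm{w}_{(P,\pi_E)}$, whereas $E$ is a possibilistic answer set of the positive program $P^E$ exactly when $\pi_E \in S^\mathrm{w}_{P^E}$ (the guess being irrelevant for a positive program). It therefore suffices to establish the single identity $S^\mathrm{w}_{(P,\pi_E)} = S^\mathrm{w}_{P^E}$, and this will follow once I show that $C^\mathrm{w}_{(P,\pi_E)}$ and $C^\mathrm{w}_{P^E}$ admit exactly the same possibilistic models.

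I would carry this out by a rule-by-rule comparison. Take a rule $p = (r,\lambda) \in P$ with $r = (\grule{e})$. Instantiating \eqref{disjunction:weak} at $\pi_V = \pi_E$ yields $N(e_0) \geq \min(N(e_1),\ldots,N(e_m),1-N_E(e_{m+1}),\ldots,1-N_E(e_n),\lambda)$, in which the negation-as-failure part is a constant fixed by $E$. On the reduct side, $p$ contributes the positive rule $((\srule{e}),\min(\lambda_{\mathit{rule}},\lambda_{\mathit{body}}))$, whose constraint (by \pointtodef{weak}) is $N(e_0) \geq \min(N(e_1),\ldots,N(e_m),\min(\lambda_{\mathit{rule}},\lambda_{\mathit{body}}))$. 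Since $\lambda_{\mathit{rule}} = \lambda$, matching the two constraints reduces to the single identity $\lambda_{\mathit{body}} = \min(1-N_E(e_{m+1}),\ldots,1-N_E(e_n))$.

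The crux, and the step I expect to be the main obstacle, is precisely this identity for $\lambda_{\mathit{body}}$. It rests on the standard possibilistic cut characterization relating necessity to the strict cuts of $E$, namely $V^{\underline{\mu}} \models e_i$ iff $N_E(e_i) > \mu$. Applying it with $\mu = 1-\lambda'$ turns the defining condition of $\lambda_{\mathit{body}}$ in \pointtodef{immediate:weak}, i.e. that $V^{\underline{1-\lambda'}} \not\models e_i$ for every $i \in \{m+1,\ldots,n\}$, into the condition $\lambda' \leq 1-N_E(e_i)$ for every such $i$, that is $\lambda' \leq \min_i(1-N_E(e_i))$; taking the largest admissible $\lambda'$ gives $\lambda_{\mathit{body}} = \min_i(1-N_E(e_i))$, with the empty-body convention $\min(\emptyset)=1$ recovered automatically. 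The delicate bookkeeping here concerns strict versus non-strict inequalities and the boundary values $0$ and $1$, which is why I single this step out.

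It then remains to account for the rules the reduct drops, namely those with $\min(\lambda_{\mathit{rule}},\lambda_{\mathit{body}}) = 0$: for such a rule the instantiated constraint \eqref{disjunction:weak} becomes $N(e_0) \geq 0$, which is vacuous, so discarding it leaves the set of models unchanged. Since $P$ contains no possibilistic constraint rules, $P^E$ is a genuine possibilistic positive clausal program, so by \pointtolem{singleclausal} and \pointtoprop{weak:positive:answerset} the set $S^\mathrm{w}_{P^E}$ is the singleton consisting of its unique possibilistic answer set. Putting this together, $C^\mathrm{w}_{(P,\pi_E)}$ and $C^\mathrm{w}_{P^E}$ share the same possibilistic models and hence the same minimally specific ones, giving $S^\mathrm{w}_{(P,\pi_E)} = S^\mathrm{w}_{P^E}$. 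The equivalence $\pi_E \in S^\mathrm{w}_{(P,\pi_E)} \Leftrightarrow \pi_E \in S^\mathrm{w}_{P^E}$, and therefore the claimed iff, follows immediately.
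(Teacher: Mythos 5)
Your proof is correct and follows essentially the same route as the paper's: fix the guess $\pi_E$, use the cut characterization of necessity to show $\lambda_{\mathit{body}} = \min(1-N_E(e_{m+1}),\ldots,1-N_E(e_n))$, so that each rule kept in $P^E$ contributes literally the same constraint as the corresponding instantiated constraint of \eqref{disjunction:weak} while each dropped rule corresponds to a vacuous constraint $N(e_0)\geq 0$, whence $S^\mathrm{w}_{(P,\pi_E)} = S^\mathrm{w}_{P^E}$ and both directions of the equivalence follow at once. The closing appeal to \pointtolem{singleclausal} and \pointtoprop{weak:positive:answerset} is not actually needed for your argument (the set equality alone suffices), but it is harmless.
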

\proofat{7}

\noindent Before we discuss the complexity results, we look at an example to further uncover the intuition of clausal programs.

\begin{example}
Consider the possibilistic clausal program $P$ with the following rules:
\begin{align*}
  \prulealign{0.7}{a \vee b \vee c}{} &
  \prulealign{0.2}{\neg b}{} &
  \prulealign{1}{d}{\naf (a \vee c \vee f)} &
  \prulealign{1}{e}{\naf c}.
\intertext{The reduct $P^V$ with $V = \set{\plit{(a \lor b \lor c)}{0.7}, \plit{(\neg b)}{0.2}, \plit{d}{0.8}, \plit{e}{1}}$ is then:}
  \prulealign{0.7}{a \vee b \vee c}{} &
  \prulealign{0.2}{\neg b}{}&
  \prulealign{0.8}{d}{}&
  \prulealign{1}{e}{}
\end{align*}
since $V^{1-0.8} \models a \vee c$ but $V^{\underline{1-0.8}} \not \models a \vee c$ and $V^{\underline{1-1}} \not \models c$. We then have that ${(P^V)}^\star_\mathrm{w} = \set{\plit{(a \lor b \lor c)}{0.7}, \plit{(\neg b)}{0.2}, \plit{d}{0.8}, \plit{e}{1}}$, hence $V$ is indeed an answer set of $P$.
\end{example}

\section{Complexity Results}\label{sec:complexity}
Before we discuss the complexity results of the weak possibilistic semantics for disjunctive rules~(\pointtosec{disjunction:weak}), we first look at the complexity results of both possibilistic normal programs~(\pointtosec{characterizing:normal}) and the strong possibilistic semantics for disjunctive rules~(\pointtosec{disjunction:strong}). As such, for \pointtoprop{simple:braveNPcomplete}, \ref{prop:simple:cautiouscoNPcomplete}, \ref{prop:simple:braveSigma2complete} and \ref{prop:simple:cautiousPi2complete} we once again consider a valuation $V$ for a possibilistic normal/disjunctive program $P$ as a $\function{V}{\hlit{P}}{[0,1]}$ mapping. We find that for possibilistic normal programs the addition of weights does not affect the complexity compared to classical normal programs.

\begin{proposition}[possibilistic normal program; brave reasoning]
\label{prop:simple:braveNPcomplete}
Let $P$ be a possibilistic normal program. The problem of deciding whether there exists a possibilistic answer set $V$ of $P$ such that $V(l) \geq \lambda$ is $\cNP$-complete.
\end{proposition}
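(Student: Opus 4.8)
The plan is to establish $\cNP$-completeness by separately proving membership in $\cNP$ and $\cNP$-hardness. For membership, I would exploit the reduct characterization established in \pointtoprop{possiblitynormalanswerset} and \pointtoprop{normalanswersetpossibility}, together with the crucial observation made in \pointtoex{extras} that for brave reasoning it suffices to consider only the finitely many certainty levels in $cert^+(P)$. The key structural fact is that once a valuation $V$ is fixed, the constraints $C_{(P,V)}$ from \pointtodef{semantics} are of the standard possibilistic form $N(l_0) \geq \min(\ldots)$, so the minimally specific model is unique (the least specific one) and $N(l)$ for every literal can be computed in polynomial time by a fixpoint-style propagation analogous to the immediate consequence operator $T_P$ of \pointtodef{nicolas:immediate}.

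First I would describe the nondeterministic algorithm: guess a valuation $V$ that assigns to each literal in $\hlit{P}$ a value drawn from the polynomial-size set $cert^+(P)$; this guess has polynomial size since $|\hlit{P}|$ and $|cert^+(P)|$ are both polynomial in the size of $P$. Then, in deterministic polynomial time, compute the least specific model $\pi$ of $C_{(P,V)}$ — equivalently, compute $N(l)$ for each $l$ by iterating the reduct-based consequence operator — and verify the stability condition $\forall l \in \hlit{P} \cdot N(l) = V(l)$ of \pointtodef{possibilisticanswerset}, together with the threshold condition $V(l) \geq \lambda$ for the target literal. By \pointtoprop{possiblitynormalanswerset} and \pointtoprop{normalanswersetpossibility} the stable valuations are exactly the possibilistic answer sets, and the restriction to $cert^+(P)$ loses no answer set relevant to the reasoning task by the argument in \pointtoex{extras}; hence the procedure accepts iff a qualifying answer set exists, placing the problem in $\cNP$.

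For hardness, I would reduce from brave reasoning for classical normal programs, which is $\cNP$-complete by~\cite{baral:knowledge}. Given a classical normal program $Q$ and a target literal $l$, I attach certainty $\lambda = 1$ to every rule of $Q$ to obtain a possibilistic normal program $P$. By \pointtoprop{normalanswersetpossibility}, every classical answer set $M$ of $Q$ yields a possibilistic answer set $V$ with $V(l) = 1$ iff $l \in M$; conversely, by the converse direction one recovers classical answer sets from the crisp ($\set{0,1}$-valued) possibilistic answer sets. The reduction then asks whether $P$ has a possibilistic answer set $V$ with $V(l) \geq 1$, which holds iff $l$ is bravely entailed by $Q$. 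Since the reduction is clearly polynomial, $\cNP$-hardness follows.

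The main obstacle I anticipate is the membership direction, specifically the subtlety introduced by the intermediary certainty values exhibited in \pointtoex{extras}: unlike classical ASP there is no $1$-on-$1$ correspondence, and a program may even have infinitely many possibilistic answer sets. The delicate point is therefore to argue rigorously that restricting the guessed valuation to values in $cert^+(P)$ is \emph{sound and complete} for the brave reasoning task — i.e.\ that whenever some answer set witnesses $V(l) \geq \lambda$, one such witness already uses only certainties from $cert^+(P)$. I would handle this by observing that the constraints are built entirely from the weights in $cert(P)$ and their $\L$ukasiewicz complements $1 - \lambda'$, so the least specific model assigns each $N(l)$ a value in $cert^+(P)$ whenever $V$ does; the additional value $\tfrac{1}{2}$ accounts for the self-referential fixpoints as in \pointtoex{extras}. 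Verifying that no relevant witness escapes this finite grid is the crux of the correctness argument.
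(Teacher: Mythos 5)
Your hardness reduction fails, and it fails for precisely the phenomenon you yourself flag in the membership half: the intermediary answer sets of \pointtoex{extras}. Attaching certainty $1$ to every rule of a classical normal program $Q$ does \emph{not} make ``some possibilistic answer set $V$ satisfies $V(l)\geq 1$'' equivalent to brave entailment from $Q$. Concretely, take $Q=\set{(\arule{a}{}),(\arule{b}{\naf b})}$ and query $l=a$. Classically $Q$ has no consistent answer set, so brave reasoning answers no. But the possibilistic program $P=\set{(\arule{a}{},1),(\arule{b}{\naf b},1)}$ has the possibilistic answer set $\set{\plit{a}{1},\plit{b}{0.5}}$: with $V(b)=0.5$ the constraint $N(b)\geq 1-V(b)$ is met with equality by the least specific model, so the stability check $N=V$ succeeds, exactly as in \pointtoex{extras}. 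Hence your reduction maps a no-instance of classical brave reasoning to a yes-instance of the possibilistic problem. The broken step is your appeal to ``the converse direction'': \pointtoprop{possiblitynormalanswerset} recovers a classical answer set only when $N(l)\in\set{0,1}$ holds for \emph{every} literal --- a requirement the paper explicitly notes cannot be omitted --- whereas your witness is only guaranteed to be crisp on the single queried literal. A correct hardness argument has to work around this, \eg by reducing from SAT with choice rules $(\arule{x}{\naf x'})$, $(\arule{x'}{\naf x})$, clause rules and a final conjunctive rule deriving $\mathit{sat}$, and then proving that in \emph{any} possibilistic answer set $V(x)+V(x')=1$, so that $V(\mathit{sat})=1$ forces a value-$1$ literal in every clause and hence yields a crisp satisfying assignment.

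The membership half has the right architecture (guess a $cert^+(P)$-valued valuation, compute the unique least specific model of $C_{(P,V)}$ by a polynomial fixpoint computation, verify $N=V$ and $V(l)\geq\lambda$), but what you actually establish --- closure of $cert^+(P)$ under $x\mapsto 1-x$ and $\min$/$\max$, so that grid-valued guesses produce grid-valued necessities --- is only the \emph{soundness} of the restriction: whatever the algorithm accepts is a genuine answer set. $\cNP$-membership also needs \emph{completeness}: if some answer set (possibly off the grid, possibly one of a continuum as in \pointtoex{extras}) satisfies $V(l)\geq\lambda$, then some grid-valued answer set does too. You explicitly defer this point (``the crux''), and it is not routine: stability is an exact equation $V=N$, so a witness cannot simply be rounded; one must argue, for instance, that possibilistic answer sets are exactly the fixpoints of a piecewise-linear operator with slopes in $\set{-1,0,1}$, whose extremal fixpoint values are attained at breakpoints, all of which lie in $cert^+(P)$ (this is where the value $\frac{1}{2}$ enters), and one must also handle a query threshold $\lambda\notin cert^+(P)$. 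As it stands, the proposal consists of an algorithm plus an acknowledged unproven claim on one side, and an incorrect reduction on the other, so neither half is yet a proof.
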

\proofat{8}

\begin{proposition}[possibilistic normal program; cautious reasoning]
\label{prop:simple:cautiouscoNPcomplete}
Let $P$ be a possibilistic normal program. The problem of deciding whether for all possibilistic answer sets $V$ of $P$ we have that $V(l) \geq \lambda$ is $\ccoNP$-complete.
\end{proposition}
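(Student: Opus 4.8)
The plan is to prove the two halves of \ccoNP-completeness separately: membership in \ccoNP\ and \ccoNP-hardness. For membership I would argue about the complementary problem, namely deciding whether there \emph{exists} a consistent possibilistic answer set $V$ of $P$ with $V(l) < \lambda$, and show that it lies in \cNP. By the discussion following \pointtoex{extras}, it suffices to restrict attention to valuations whose values are drawn from the polynomial-size set $cert^+(P)$ (enlarging it with $\lambda$ if needed), so a candidate $V$ has a polynomial description and can be guessed nondeterministically. The verification that a guessed $V$ is a consistent possibilistic answer set is the same polynomial check used in the proof of \pointtoprop{simple:braveNPcomplete}: for a normal program the constraints of \pointtodef{semantics} all have the standard possibilistic form, so $S_{(P,V)}$ is the singleton consisting of the unique least specific model, and its necessity values are the least fixpoint of the immediate consequence operator of \pointtodef{nicolas:immediate} applied to the simple program in which each negative body literal $l_i$ contributes the constant $1 - V(l_i)$. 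Computing this fixpoint, checking the stability condition $N = V$ and checking consistency of $V$ are all polynomial, after which one tests $V(l) < \lambda$. Hence the complement is in \cNP\ and cautious reasoning is in \ccoNP.

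For hardness I would reduce from cautious reasoning for classical normal programs, which is \ccoNP-complete. Given a classical normal program $P$ and query literal $l$, I form the possibilistic program $P'$ by attaching certainty $1$ to every rule of $P$ and, for each literal $l' \in \hlit{P}$, adding the two rules $(\mathit{bad}_{l'} \leftarrow l', \naf l')$ and $(\bot \leftarrow \mathit{bad}_{l'})$, each with certainty $1$, where the $\mathit{bad}_{l'}$ are fresh atoms. The associated query is whether $V(l) \geq 1$ for every consistent possibilistic answer set of $P'$, i.e.\ $\lambda = 1$. This transformation is clearly polynomial.

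The crux of correctness is controlling the \emph{intermediary} answer sets exhibited in \pointtoex{extras}: the certainty-$1$ lift of a classical program may have spurious possibilistic answer sets in which some literal receives a fractional necessity, and these would break the cautious query. The gadget is designed precisely to remove them. The rule $(\mathit{bad}_{l'} \leftarrow l', \naf l')$ forces $N(\mathit{bad}_{l'}) \geq \min(N(l'), 1 - V(l'))$, which at a stable valuation equals $\min(N(l'), 1 - N(l'))$; this is strictly positive exactly when $0 < N(l') < 1$, and the companion constraint $(\bot \leftarrow \mathit{bad}_{l'})$ then renders such a valuation inconsistent. Since the $\mathit{bad}_{l'}$ are fresh and feed back neither into $l'$ nor into any original atom, they leave the stability computation for the literals of $P$ untouched and are classically inert (classically $\mathit{bad}_{l'}$ is never derivable and the constraint is vacuous, so the classical answer sets of $P'$ are exactly those of $P$). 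Consequently every consistent possibilistic answer set of $P'$ is crisp, and by \pointtoprop{possiblitynormalanswerset} and \pointtoprop{normalanswersetpossibility} the crisp consistent possibilistic answer sets of $P'$ are in one-to-one correspondence with the classical answer sets of $P$. Under this correspondence, $V(l) \geq 1$ holds for every consistent possibilistic answer set of $P'$ iff $l$ belongs to every classical answer set of $P$, which establishes \ccoNP-hardness.

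I expect the main obstacle to be the hardness direction, and within it the handling of the fractional answer sets: one must verify carefully that the crispness gadget eliminates exactly the intermediary answer sets while preserving every crisp one, and that introducing the fresh $\mathit{bad}_{l'}$ atoms together with their constraints does not perturb the stability condition on the literals of $P$ nor the bijection of \pointtoprop{possiblitynormalanswerset} and \pointtoprop{normalanswersetpossibility}. The membership half is comparatively routine once the polynomial-time verifiability of a guessed possibilistic answer set and the sufficiency of the certainty levels in $cert^+(P)$ are invoked.
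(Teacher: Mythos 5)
Your proof is correct, and its membership half is essentially the paper's: the complementary problem is decided in \cNP\ by guessing a valuation with values in the polynomial-size set $cert^+(P)$ (augmented with $\lambda$), computing in polynomial time the unique least specific model of $C_{(P,V)}$ --- for a fixed guess $V$ every constraint of \pointtodef{semantics} has the form $N(l_0)\geq\min(N(l_1),\ldots,N(l_m),c)$ with $c$ a constant, so the fixpoint machinery of \pointtodef{nicolas:immediate} applies --- and then checking stability, consistency and $V(l)<\lambda$. Where you genuinely diverge is the hardness half. The paper obtains \ccoNP-hardness without any gadget, by reducing directly from unsatisfiability using the complementary-literal choice rules it also employs in \pointtodef{simulationQBF}, namely $(\arule{x}{\naf \neg x})$ and $(\arule{\neg x}{\naf x})$: for such pairs, any intermediary valuation with $0<V(x)<1$ also makes $V(\neg x)>0$, so the fractional answer sets of \pointtoex{extras} are automatically inconsistent and never enter the cautious query, which (as you correctly read the proposition, in line with the decision problems defined in the paper) ranges over consistent answer sets only. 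You instead reduce generically from classical cautious reasoning and neutralize the fractional answer sets with the $\mathit{bad}_{l'}$ gadget. Your route buys a reusable fact --- the certainty-$1$ lift of an \emph{arbitrary} normal program can be crispified --- but it leans on a mechanism you assert rather than prove: the rule $(\arule{\bot}{\mathit{bad}_{l'}})$ only yields $N(\bot)\geq N(\mathit{bad}_{l'})>0$, i.e.\ subnormalization $\max_{\omega}\pi(\omega)\leq 1-N(\mathit{bad}_{l'})$, and inconsistency then follows because subnormalization forces $N(l'')\geq N(\mathit{bad}_{l'})>0$ for \emph{every} literal $l''$, so that at a stable valuation $V^{\underline{0}}$ is all of $\hlit{P'}$ and hence contains complementary pairs. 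That chain is true and is the crux of your gadget, so it must be spelled out; with it, and with the observation that for crisp $V$ the gadget constraints are vacuous so that Propositions~\ref{prop:possiblitynormalanswerset} and~\ref{prop:normalanswersetpossibility} give the bijection with the classical answer sets of $P$, your reduction is sound. A cosmetically cleaner variant avoids reasoning about $N(\bot)$ altogether: replace $(\arule{\bot}{\mathit{bad}_{l'}})$ by the two rules $(\arule{p_{l'}}{\mathit{bad}_{l'}})$ and $(\arule{\neg p_{l'}}{\mathit{bad}_{l'}})$ with $p_{l'}$ a fresh atom, which makes the induced inconsistency literal-level and immediate.
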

\proofat{9}

Similarly, we find for possibilistic disjunctive programs under the strong disjunctive semantics that the addition of weights does not affect the complexity compared to classical disjunctive programs.

\begin{proposition}[possibilistic disjunctive program; brave reasoning]
\label{prop:simple:braveSigma2complete}
Let $P$ be a possibilistic disjunctive program. The problem of deciding whether there is a possibilistic answer set $V$ such that $V(l) \geq \lambda$ is a $\spolsig{2}$-complete problem.
\end{proposition}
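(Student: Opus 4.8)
The plan is to establish $\spolsig{2}$-completeness in the usual two parts: membership in $\spolsig{2}$ and $\spolsig{2}$-hardness. For membership, I would exhibit a guess-and-check procedure. Given a possibilistic disjunctive program $P$ and a target $(l,\lambda)$, the certainty value $V(l)$ in any answer set is drawn from the finite set $cert^+(P)$ (as argued in the discussion following \pointtoex{extras}), so a nondeterministic machine can guess the full valuation $V$ restricted to these polynomially many levels, and guess $l$ with $V(l) \geq \lambda$. It then remains to verify that $V$ is a genuine possibilistic answer set, \ie that there is $\pi \in S^\mathrm{s}_{(P,V)}$ with $\forall l \in \hlit{P} \cdot N(l) = V(l)$. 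The stability conditions \eqref{eq:disstable} are checkable by confirming that the guessed $V$ induces, via \pointtodef{strong}, a set of constraints $C^\mathrm{s}_{(P,V)}$ whose minimally specific models yield back exactly these necessities. The key observation is that once $V$ is fixed, the negation-as-failure terms $1 - V(l_i)$ are constants, so each constraint \eqref{disjunction:strong} becomes a positive-disjunctive constraint; checking that a candidate is among the minimally specific models reduces to checking that the induced necessities match $V$ and that no model dominates it, which is an $\ccoNP$ check. Thus the whole verification is an $\cNP$ computation with a $\ccoNP$ oracle call, placing the problem in $\cNP^\cNP = \spolsig{2}$.

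For hardness, I would reduce from the canonical $\spolsig{2}$-complete problem of deciding validity of $\phi = \exists X_1 \forall X_2 \cdot p(X_1, X_2)$ with $p$ in DNF. The cleanest route is to reuse the known $\spolsig{2}$-hardness of brave reasoning for \emph{classical} disjunctive ASP, cited from~\cite{baral:knowledge}. I would take a classical disjunctive program $P$ encoding the QBF and lift it to a possibilistic program $P'$ by attaching certainty $\lambda = 1$ to every rule. By \pointtoprop{disjunctivepossibilistic} and \pointtoprop{possibilisticdisjunctive}, the classical answer sets of $P$ correspond exactly to the possibilistic answer sets of $P'$ in which all necessities lie in $\set{0,1}$, under the correspondence $M = \condset{l}{N(l)=1}$. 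Consequently, a literal `$l$' is bravely entailed by $P$ (in the classical sense) iff there is a possibilistic answer set $V$ of $P'$ with $V(l) \geq 1$. Setting $\lambda = 1$ in the statement, the possibilistic brave reasoning question therefore coincides with the classical one on these instances, and the reduction is clearly polynomial.

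The main obstacle I expect is the membership argument, specifically pinning down the complexity of the verification step with full rigor. The subtlety is that a possibilistic answer set requires $\pi$ to be \emph{minimally specific} among the models of $C^\mathrm{s}_{(P,V)}$, and minimal specificity is a global condition over an exponential space of possibility distributions. I would need to argue that this can be tested without materializing $\pi$: having guessed $V$ at levels in $cert^+(P)$, one reformulates the minimal-specificity-plus-stability requirement as a statement purely about the necessities $N(l) = V(l)$ satisfying the disjunctive constraints while no strictly greater distribution does, and show this reduces to polynomially many classical disjunctive-ASP verification subproblems (each in $\ccoNP$), leveraging the tight correspondence already established between minimally specific models and classical answer sets in \pointtosec{disjunction:strong}. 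Once this reduction is in place, the oracle call structure giving $\cNP^\cNP$ follows, and combined with the hardness reduction yields $\spolsig{2}$-completeness.
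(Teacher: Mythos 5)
Your membership argument follows a workable route, and one that is in the spirit of what is needed here: guess a valuation whose values are drawn from the polynomially many levels in $cert^+(P)$, then verify stability and minimal specificity with an \cNP\ oracle; the verification subtlety you flag (testing minimal specificity without materializing $\pi$) is real, and your sketch of how to discharge it is plausible even if not fully worked out. The genuine gap is in your hardness reduction. You claim that, after attaching certainty $1$ to every rule of a classical disjunctive program $P$, a literal $l$ is bravely entailed by $P$ iff the resulting possibilistic program has a possibilistic answer set $V$ with $V(l) \geq 1$, and you justify this by \pointtoprop{disjunctivepossibilistic} and \pointtoprop{possibilisticdisjunctive}. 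Those propositions do not yield this equivalence. \pointtoprop{possibilisticdisjunctive} gives the forward direction, but \pointtoprop{disjunctivepossibilistic} applies only to possibilistic answer sets in which \emph{every} literal has necessity in $\set{0,1}$; it says nothing about a possibilistic answer set in which the target literal has value $1$ while other literals take intermediate values. The paper itself stresses that the 1-on-1 correspondence between classical and possibilistic answer sets breaks down precisely because of such intermediate answer sets (\pointtoex{extras}), and this breakdown makes your reduction unsound. Concretely, take $P = \set{(\arule{l}{}),\, (\arule{a}{\naf a})}$. Classically $P$ has no answer sets at all, so $l$ is not bravely entailed. Yet, attaching certainty $1$ to both rules, the valuation $V = \set{\plit{l}{1}, \plit{a}{0.5}}$ \emph{is} a possibilistic answer set: the induced constraints are $N(l) \geq 1$ and $N(a) \geq 1 - V(a) = 0.5$, whose unique minimally specific model has $N(l) = 1$, $N(a) = 0.5$ and necessity $0$ for all other literals, matching $V$ exactly. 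So your reduction maps a ``no''-instance of classical brave reasoning to a ``yes''-instance of possibilistic brave reasoning.

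To repair the proof you cannot simply quote the classical hardness result together with the two correspondence propositions. You would need either (i) to reduce from QBF validity via a program for which spurious intermediate-valued answer sets provably cannot affect the query --- for instance, the canonical hard instances for classical disjunctive brave reasoning are negation-as-failure-free, and for such programs the constraints of \pointtodef{strong} do not depend on $V$ at all, so it suffices to prove (e.g., by a rounding argument on necessity profiles) that every minimally specific model of a negation-free, certainty-$1$ constraint set is two-valued --- or (ii) to prove directly, for the specific programs your reduction produces, the missing implication that a possibilistic answer set giving $l$ certainty $1$ yields a classical consistent answer set containing $l$. Either way, an argument beyond \pointtoprop{disjunctivepossibilistic} and \pointtoprop{possibilisticdisjunctive} is required, and that is exactly the step your proposal leaves unjustified.
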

\proofat{9-10}

\begin{proposition}[possibilistic disjunctive program; cautious reasoning]
\label{prop:simple:cautiousPi2complete}
Let $P$ be a possibilistic disjunctive program. The problem of deciding whether for all possibilistic answer sets $V$ we have that $V(l) \geq \lambda$ is a $\spolpi{2}$-complete problem.
\end{proposition}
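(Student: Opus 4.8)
The plan is to establish membership in $\spolpi{2}$ and $\spolpi{2}$-hardness separately, exploiting $\spolpi{2} = \cco\spolsig{2}$ and reusing the constructions developed for brave reasoning in \pointtoprop{simple:braveSigma2complete}.

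For membership I would pass to the complementary problem, \ie deciding whether $P$ admits a possibilistic answer set $V$ with $V(l) < \lambda$, and place it in $\spolsig{2} = \cNP^{\cNP}$. A nondeterministic polynomial-time machine with an \cNP\ oracle first guesses a valuation $V$ assigning to each literal of $\hlit{P}$ a value from the polynomially bounded set $cert^{+}(P)$, which by the discussion following \pointtoex{extras} loses no relevant answer set. It then verifies that $V$ is a possibilistic answer set, \ie that the distribution induced by $V$ lies in $S^\mathrm{s}_{(P,V)}$ and satisfies $N(l') = V(l')$ for every $l' \in \hlit{P}$; the only nontrivial part is the minimal-specificity test, which is \ccoNP\ --- its refutation is a strictly greater valuation still satisfying $C^\mathrm{s}_{(P,V)}$, an \cNP\ witness --- and is therefore settled by one oracle query. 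Finally it checks $V(l) < \lambda$. Hence the complement is in $\spolsig{2}$ and cautious reasoning lies in $\spolpi{2}$.

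For hardness I would reduce from cautious reasoning over classical disjunctive programs, which is $\spolpi{2}$-hard, by attaching weight $1$ to every rule of the given program and asking whether $V(l) \geq 1$ holds in every possibilistic answer set. One implication is immediate: if some classical answer set omits $l$, then by \pointtoprop{possibilisticdisjunctive} it induces a $\set{0,1}$-valued possibilistic answer set with $V(l) = 0$, so the possibilistic query fails as well.

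The main obstacle is the converse implication, and it is precisely the phenomenon of \emph{intermediary} answer sets from \pointtoex{extras}. From a possibilistic counterexample $V$ with $V(l) < 1$ I must recover a classical answer set omitting $l$: when $V$ is $\set{0,1}$-valued this is delivered by \pointtoprop{disjunctivepossibilistic}, but a fractional $V$ --- which may assign $V(l) < 1$ even when $l$ belongs to every classical answer set --- carries no such guarantee. I would neutralise this by engineering the query so that in every accepting case $l$ is derived through unnegated rules from facts and is thus pinned to necessity $1$ in every minimally specific model, independently of how the fractional mass is distributed over the negative loops of the encoding; equivalently, one dualises the construction already validated for \pointtoprop{simple:braveSigma2complete}, whose handling of fractional answer sets transfers. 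Since it suffices to inspect the values in $cert^{+}(P)$, the possibilistic answer sets fall into polynomially many necessity patterns, keeping the case analysis on $l$ finite and letting the correspondence with the classical $\spolpi{2}$ instance close.
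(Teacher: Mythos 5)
Your membership argument is sound and is the expected one: pass to the complement, guess a valuation over $cert^+(P)$ (justified by the paper's discussion after \pointtoex{extras}), verify answer-set-hood with an \cNP\ oracle, and invoke $\spolpi{2}=\cco\spolsig{2}$. Pedantically, a violation of minimal specificity is witnessed by a strictly less specific possibility \emph{distribution}, not a ``valuation''; one must argue that the canonical distribution induced by $V$ is the only candidate and that non-minimality admits a polynomial witness, but this is exactly the verification step already required for \pointtoprop{simple:braveSigma2complete}, so I take it as given.

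The hardness half has a genuine gap. You correctly prove one direction of the naive reduction and correctly identify the obstacle --- fractional, intermediary answer sets --- but your repair is a promissory note, not a construction. Two things go wrong. First, you reduce from cautious reasoning over an \emph{arbitrary} classical disjunctive program, and there you cannot ``engineer the query'' or pin $l$ to unnegated derivations: the program, with its negative loops, is given rather than designed. Concretely, for $P=\set{\arule{a}{\naf b},\ \arule{b}{\naf a},\ \arule{l}{a},\ \arule{l}{b}}$ with all weights $1$, the literal $l$ is cautiously entailed classically, yet $\set{\plit{a}{0.5},\plit{b}{0.5},\plit{l}{0.5}}$ is a possibilistic answer set with $V(l)=0.5<1$, so the naive query fails, and no choice of query literal or threshold visibly repairs this for arbitrary inputs. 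Second, the claim that the brave construction's ``handling of fractional answer sets transfers'' cannot be right as stated: by the paper's own \pointtotbl{results-extra}, cautious reasoning for strong disjunction \emph{without} negation-as-failure is only \ccoNP-complete, so $\spolsig{2}$-hardness of brave reasoning can be obtained from a NAF-free program, where fractional answer sets are not an issue; $\spolpi{2}$-hardness of cautious reasoning, by contrast, \emph{must} place negation-as-failure in the constructed program, which is precisely where fractional values are born, so the issue here is essential and new. What is actually needed is an explicit reduction from QBF invalidity: take the standard saturation program for $\exists X_1 \forall X_2 \cdot p(X_1,X_2)$ (choice rules, implicant rules, saturation rules, all positive), add the single rule $\arule{q}{\naf w}$, ask whether $V(q)\geq 1$ in all possibilistic answer sets, and prove a crispness lemma: since the only literal in the scope of \naf is $w$, whose necessity is pinned to $\set{0,1}$ by the positive part in every minimally specific model, \emph{every} possibilistic answer set of this particular program is $\set{0,1}$-valued, whence \pointtoprop{disjunctivepossibilistic} and \pointtoprop{possibilisticdisjunctive} give the bijection with classical answer sets that closes the equivalence. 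That construction and that lemma are the missing content of your proof.
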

\proofat{10-11}

We now look at the complexity of the weak possibilistic semantics for disjunctive rules for a variety of decision problems and under a variety of restrictions. In~particular, throughout this section we look at the complexity of weak disjunction in the crips case that allows us to compare these results against the complexity of the related decision problems in classical ASP and other epistemic extensions of ASP, \eg~\cite{truszczynski:revisiting,vlaeminck:ordered}. As we will see, for certain classes of clausal programs, decision problems exist where weak disjunction is computationally less complex than disjunctive programs while remaining more complex than normal programs.

 An overview of the complexity results available in the literature for disjunctive programs as well as the new results for weak disjunction (in the crisp case) which we discuss in the remainder of this section can be found in \pointtotbl{results-extra}.

\begin{table}[ht]
\caption{Completeness results for the main reasoning tasks with references}
\label{tbl:results-extra}
\centering
    \begin{tabular}{rccc}\hline
        no NAF, no $\neg$     & existence & brave reasoning     & cautious reasoning \\\hline
        strong disjunction                                & $\cNP$~\tlabel{1}            & $\spolsig{2}$~\tlabel{1}       & $\ccoNP$~\tlabel{1}           \\ 
        weak disjunction                                  & $\cP$~\tlabel{6}                & $\cP$~\tlabel{6}               & $\cP$~\tlabel{6}              \\ \hline\hline
        no NAF, $\neg$   & existence & brave reasoning     & cautious reasoning \\ \hline
        strong disjunction                                & $\cNP$~\tlabel{1}               & $\spolsig{2}$~\tlabel{1}       & $\ccoNP$~\tlabel{1}           \\ 
        weak disjunction                                  & $\cNP$~\tlabel{4}              & $\complexity{BH}_2$~\tlabel{3} & $\ccoNP$~\tlabel{5}           \\ \hline\hline
        NAF, $\neg$ & existence & brave reasoning     & cautious reasoning \\ \hline
        strong disjunction                                & $\spolsig{2}$~\tlabel{2}        & $\spolsig{2}$~\tlabel{2}       & $\spolpi{2}$~\tlabel{2}       \\
        weak disjunction                                  & $\spolsig{2}$~\tlabel{8}        & $\spolsig{2}$~\tlabel{7}       & $\spolpi{2}$~\tlabel{9}       \\ 
        \hline
        \multicolumn{4}{c}{\tiny ``no NAF'' (\resp ``no $\neg$'') indicates results for programs without negation-as-failure (\resp classical negation)}\\
        \multicolumn{4}{c}{~}
        \end{tabular}
        \begin{tabular}{llll}
        \multicolumn{2}{l}{\tref{1}{\cite{eiter:complexity}}}&\multicolumn{2}{l}{\tref{6}{\pointtoprop{fixpoint:polynomial}}}\\
        \multicolumn{2}{l}{\tref{2}{\cite{baral:knowledge}}}&\multicolumn{2}{l}{\tref{7}{\pointtoprop{sigma2hardness} and \ref{prop:sigma2membership}}}\\
        \multicolumn{2}{l}{\tref{3}{\pointtoprop{bh2hardness} and \ref{prop:bh2membership}}}&\multicolumn{2}{l}{\tref{8}{\pointtocor{general:existence}}}\\
        \multicolumn{2}{l}{\tref{4}{\pointtocor{weak:general:existence}}}&\multicolumn{2}{l}{\tref{9}{\pointtocor{general:cautious}}}\\
        \multicolumn{2}{l}{\tref{5}{\pointtocor{weak:general:cautious}}}
        \end{tabular}
\end{table}

\begin{proposition}[weak disjunction, positive clausal program; brave reasoning]
\label{prop:bh2hardness}
Let $P$ be a positive clausal program. The problem of deciding whether a clause `$e$' is entailed by a consistent answer set $E$ of $P$ is $\complexity{BH}_2$-hard.
\end{proposition}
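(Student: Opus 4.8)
The plan is to reduce the canonical $\complexity{BH}_2$-complete problem SAT-UNSAT to brave reasoning for positive clausal programs. An instance of SAT-UNSAT consists of two CNF formulas $\phi$ and $\psi$, which we may assume range over disjoint sets of variables, and the question is whether $\phi$ is satisfiable \emph{and} $\psi$ is unsatisfiable. Recalling that $\complexity{BH}_2$ is exactly the class of intersections $L_1 \cap L_2$ with $L_1 \in \cNP$ and $L_2 \in \ccoNP$, this problem is the natural candidate, since brave reasoning here combines a $\cNP$ obligation (consistency of the answer set, \ie satisfiability of the derived clauses) with a $\ccoNP$ obligation (entailment of the query clause).

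From such an instance I would build a positive clausal program $P$ as follows. First, I add every clause of $\phi$ as a fact. Next, I introduce a fresh atom $h$ not occurring in $\phi$ or $\psi$, and for every clause $c$ of $\psi$ I add the fact $c \lor \neg h$; that is, I \emph{guard} each clause of $\psi$ by $h$. The query clause is $e = \neg h$. Since $P$ contains only facts, it is a positive clausal program, so by \pointtolem{singleclausal} its unique answer set $E$ is the least specific crisp possibility distribution satisfying $N(c) = 1$ for every fact clause $c$; this distribution is normalized, \ie $E$ is consistent, exactly when the conjunction of all fact clauses is satisfiable.

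The two correctness claims are then: (i) $E$ is consistent iff $\phi$ is satisfiable, and (ii) $E \models \neg h$ iff $\psi$ is unsatisfiable. For (i), the guarded part $\{c \lor \neg h \mid c \in \psi\}$ is always satisfiable by setting $h$ false, so, since $\phi$ and the $\psi$-guards share no variables, the entire fact set is satisfiable precisely when $\phi$ is. For (ii), a model of $\{c \lor \neg h \mid c \in \psi\}$ with $h$ true is exactly a model of $\psi$; hence these clauses entail $\neg h$ iff no such model exists, \ie iff $\psi$ is unsatisfiable. Combining the two, there is a consistent answer set entailing $e$ iff $\phi$ is satisfiable and $\psi$ is unsatisfiable, which is precisely the SAT-UNSAT condition, and the reduction is clearly computable in polynomial time.

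The crux, and the step I would treat most carefully, is the guard construction: it lets the entailment check $E \models \neg h$ witness the \emph{un}satisfiability of $\psi$ without ever making $E$ itself inconsistent, thereby decoupling the $\ccoNP$ obligation from the $\cNP$ obligation. I would also verify the boundary behaviour explicitly: when $\phi$ is unsatisfiable the whole fact set is unsatisfiable, so the answer set $E$ is the vacuous distribution, no consistent answer set exists, and brave reasoning correctly rejects, matching that the SAT-UNSAT instance is rejected whenever $\phi$ is unsatisfiable. Finally, since every occurrence of $\neg$ is essential to express the guard and the query, this construction confirms the indispensable role of classical negation, in contrast to the $\cP$ bound that holds in its absence.
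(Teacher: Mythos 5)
Your reduction is correct and is essentially the paper's own argument: a polynomial reduction from the canonical $\complexity{BH}_2$-complete problem SAT-UNSAT, where the clauses of $\phi$ are added as facts (so that existence of a consistent, \ie normalized, answer set captures the $\cNP$ part) and the clauses of $\psi$ are guarded by a fresh atom $h$ so that entailment of the query $\neg h$ captures unsatisfiability of $\psi$ (the $\ccoNP$ part) without ever threatening consistency. The only point to tighten is the backward direction of your final equivalence: from ``$E$ consistent and $E \models \neg h$'' you should explicitly merge a model of $\phi$ with a putative model of $\psi$ (possible by the disjointness of their variables) to conclude that $\psi$ must be unsatisfiable, since claim (ii) as written speaks only of the guarded clauses rather than the whole answer set.
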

\proofat{11-12}

\begin{proposition}[weak disjunction, positive clausal program; brave reasoning]
\label{prop:bh2membership}
  Let $P$ be a positive clausal program. The problem of deciding whether a clause `$e$' is entailed by a consistent answer set $M$ of $P$ is in $\complexity{BH}_2$.
\end{proposition}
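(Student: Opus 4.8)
The plan is to exploit the fact, established in Lemma~\ref{lem:singleclausal} and Proposition~\ref{prop:weak:positive:answerset}, that a positive clausal program $P$ has a \emph{unique} answer set, namely the least fixpoint $M = P^{\star}_{\mathrm{w}}$ of $\Tweak{P}$. In the crisp case this $M$ is just the set $\Gamma^{*} \subseteq \hclause{P}$ of head clauses produced by forward chaining, where a rule $r$ fires (adding $head(r)$ to the current clause set $\Gamma$) exactly when $\Gamma \models body(r)$, and where consistency of the answer set corresponds to satisfiability of the clause set $\Gamma^{*}$ (the induced possibility distribution being normalized). Since the answer set is unique, deciding whether $e$ is entailed by a consistent answer set reduces to deciding the conjunction ``$\Gamma^{*}$ is satisfiable \textbf{and} $\Gamma^{*} \models e$''. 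I would first record that $\Gamma^{*}$ is the least fixpoint of the monotone operator $\Phi(X) = \condset{head(r)}{r \in P,\ X \models body(r)}$ on the finite lattice $2^{\hclause{P}}$, so that by Knaster--Tarski every \emph{pre-fixpoint} $S$ (i.e.\ every $S$ with $\Phi(S) \subseteq S$) satisfies $S \supseteq \Gamma^{*}$.

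The main obstacle is that the naive route --- iterating $\Phi$ --- performs an \emph{adaptive} sequence of entailment tests $\Gamma \models body(r)$, each of which, once classical negation is allowed, is a genuine coNP-hard (un)satisfiability test; this only places the problem in $\cP$ with an $\cNP$ oracle, which is strictly weaker information than $\complexity{BH}_2$ allows. The key idea that circumvents this is that we never need to recognise $\Gamma^{*}$ exactly: it suffices to \emph{over-approximate} it by a guessed pre-fixpoint. Indeed, if $S \supseteq \Gamma^{*}$, then any truth assignment satisfying $S$ also satisfies $\Gamma^{*}$, and any clause set contained in a satisfiable set is itself satisfiable. Crucially, certifying $\Phi(S) \subseteq S$ only requires witnessing \emph{non}-entailments (``the body is not forced''), which is an existential, hence $\cNP$-friendly, task, so the adaptive coNP queries disappear.

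Concretely I would exhibit brave reasoning as $L_1 \cap L_2$ with $L_1 \in \cNP$ and $L_2 \in \ccoNP$, which by the definition of $\complexity{BH}_2$ gives membership. For $L_1 = \condset{(P,e)}{\Gamma^{*}\text{ is satisfiable}}$ the $\cNP$-algorithm guesses a subset $R \subseteq P$ of claimed firing rules, sets $S = \condset{head(r)}{r \in R}$, guesses a model $\omega$ of $S$, and guesses for every rule $r$ with $head(r) \notin S$ a blocking assignment $\mu_r$; it accepts iff $\omega \models S$ and each $\mu_r$ satisfies $S$ yet falsifies some clause of $body(r)$. The blockers certify $\Phi(S) \subseteq S$, hence $S \supseteq \Gamma^{*}$, so $\omega \models S$ witnesses satisfiability of $\Gamma^{*}$; completeness follows by taking $S = \Gamma^{*}$, for which the blockers exist precisely because a non-firing rule has $\Gamma^{*} \not\models body(r)$. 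For $L_2 = \condset{(P,e)}{\Gamma^{*} \cup \set{\neg e}\text{ is unsatisfiable}}$ I show its complement is in $\cNP$ by the same guess, now additionally demanding $\omega \not\models e$: a successful run yields a model of $\Gamma^{*} \cup \set{\neg e}$, and $S = \Gamma^{*}$ again gives completeness.

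It remains to verify that $L_1 \cap L_2$ is exactly the brave-reasoning language. Note that $L_2$ says ``$\Gamma^{*}$ is unsatisfiable \emph{or} $\Gamma^{*} \models e$''. On an input in $L_1$ the set $\Gamma^{*}$ is satisfiable, so the first disjunct of $L_2$ is excluded and membership in $L_2$ forces $\Gamma^{*} \models e$; conversely, if $\Gamma^{*}$ is a consistent answer set entailing $e$, then the input lies in both $L_1$ and $L_2$. Hence $L_1 \cap L_2$ coincides with ``$e$ is entailed by a consistent answer set of $P$'', establishing membership in $\complexity{BH}_2$. Throughout, positivity (absence of negation-as-failure) is used essentially, both for the uniqueness of the answer set and for the monotonicity of $\Phi$ that underlies the over-approximation argument.
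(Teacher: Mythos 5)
Your proof is correct and follows essentially the same route as the paper: since a positive clausal program has a unique answer set candidate (the least fixpoint $P^{\star}_{\mathrm{w}}$), brave reasoning decomposes as the conjunction of a satisfiability condition (in $\cNP$) and an entailment condition (in $\ccoNP$), which is precisely the $L_1 \cap L_2$ pattern required for $\complexity{BH}_2$ membership. Your device of guessing a pre-fixpoint of the monotone operator together with blocking assignments --- so that only \emph{non}-entailments ever need to be certified, while Knaster--Tarski guarantees the guessed set over-approximates the true fixpoint --- is exactly the right way to make both memberships rigorous, given that the fixpoint itself can neither be computed nor even model-checked in polynomial time once classical negation is present.
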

\proofat{12-13}

\begin{corollary}
\label{cor:complete}
  Let $P$ be a positive clausal program. The problem of deciding whether a clause `$e$' is entailed by a consistent answer set $E$ of $P$ is $\complexity{BH}_2$-complete.
\end{corollary}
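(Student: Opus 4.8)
The plan is to observe that this corollary follows immediately by combining the two preceding propositions with the standard definition of completeness, so that no genuinely new argument is required. Recall that a problem is $\complexity{BH}_2$-complete precisely when it both lies in $\complexity{BH}_2$ and is $\complexity{BH}_2$-hard. The membership half is supplied by \pointtoprop{bh2membership}, which exhibits the brave-reasoning problem as an intersection $L = L_1 \cap L_2$ with $L_1 \in \cNP$ and $L_2 \in \ccoNP$, matching the definition of $\complexity{BH}_2$ recalled in the background section. The hardness half is supplied by \pointtoprop{bh2hardness}, which gives a polynomial reduction of an arbitrary $\complexity{BH}_2$ language to this problem. Putting the two together yields completeness directly.

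The only point that deserves a line of care is to confirm that the two propositions are framed for exactly the same decision problem, so that membership and hardness genuinely concern one and the same language. Both are stated for positive clausal programs $P$ and ask whether a clause `$e$' is entailed by a consistent answer set of $P$; the statements write $M$ and $E$ for the witnessing answer set, but these merely name the same quantity, so the combination is legitimate. One should simply note this to avoid any appearance of a mismatch, and then invoke the two propositions.

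Since all of the substantive work lives in the two propositions, there is no real obstacle at the level of the corollary itself; the effort has already been spent establishing the decomposition witnessing membership and the reduction witnessing hardness. Conceptually, though, it is worth stating what this pins down: brave reasoning for weak disjunction with classical negation but without negation-as-failure sits \emph{exactly} at the $\complexity{BH}_2$ level of the Boolean hierarchy, strictly below the $\spolsig{2}$-completeness of strong disjunction (\pointtoprop{simple:braveSigma2complete}) and strictly above the $\cP$ result one obtains for weak disjunction restricted to atoms (\pointtoprop{bh2membership} notwithstanding, cf.\ the $\cP$ entry of \pointtotbl{results-extra}). This is the precise sense in which weak disjunction is computationally cheaper than strong disjunction while remaining a non-trivial extension beyond normal programs.
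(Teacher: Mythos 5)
Your proposal is correct and coincides with the paper's own treatment: the corollary is stated without a separate proof precisely because it follows immediately from combining the membership result of Proposition~\ref{prop:bh2membership} with the hardness result of Proposition~\ref{prop:bh2hardness}, exactly as you argue. The closing discussion is harmless but note that the $\spolsig{2}$ result for strong disjunction without negation-as-failure is due to \cite{eiter:complexity} (Proposition~\ref{prop:simple:braveSigma2complete} concerns possibilistic disjunctive programs), and the $\cP$ result for the atom-only case is Proposition~\ref{prop:fixpoint:polynomial}.
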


\begin{corollary}[weak disjunction, positive clausal program; answer set existence]
\label{cor:weak:general:existence}
  Determining whether a positive clausal program $P$ has a consistent answer set is an $\cNP$-complete problem. 
\end{corollary}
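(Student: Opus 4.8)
The plan is to prove $\cNP$-completeness by establishing $\cNP$-hardness through a direct reduction from propositional satisfiability, and $\cNP$-membership through a guess-and-check argument that deliberately sidesteps the clause-entailment tests (which are co-$\cNP$-hard) hidden inside the fixpoint operator $\Tweak{P}$.

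First I would record the structural fact that drives everything. By \pointtolem{singleclausal} and \pointtoprop{weak:positive:answerset}, a positive clausal program has a unique possibilistic answer set, namely the least fixpoint $P^{\star}_{\mathrm{w}}$ of $\Tweak{P}$; in the crisp case this is the least set $E \subseteq \hclause{P}$ of head clauses closed under the monotone operator $\Gamma$ that adds $e_0$ whenever a rule $(\srule{e})$ has all of its body clauses entailed by the current set. By \pointtodef{weak:semantics}, $P$ has a \emph{consistent} answer set precisely when $E$ is propositionally satisfiable (otherwise the induced distribution is vacuous). Hence the decision problem is exactly ``is $E$ satisfiable?''. For hardness I would then take a CNF $\phi = c_1 \wedge \ldots \wedge c_k$ over literals and form the positive clausal program $P = \{(\arule{c_i}{\top}) \mid 1 \leq i \leq k\}$ that posts each clause as a fact. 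Its unique answer set derives exactly $\{c_1, \ldots, c_k\}$, which is consistent iff $\phi$ is satisfiable; since this reduction is polynomial and satisfiability is $\cNP$-complete, existence is $\cNP$-hard. It is worth stressing that this crucially exploits classical negation, matching the fact that without $\neg$ the same task collapses to $\cP$.

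The delicate part, and the main obstacle, is $\cNP$-membership, because computing $E$ directly requires iterated entailment checks $D \modelsPOS e_i$, each of which is co-$\cNP$. The trick I would use is never to certify that an entailment \emph{holds} (the positive, co-$\cNP$ direction), but instead to over-approximate $E$ by a closed set. Concretely, I would guess a world $\omega$ and a set $D \subseteq \hclause{P}$, together with, for each rule whose head is not in $D$, a counter-world witnessing that some body clause of that rule is \emph{not} entailed by $D$. These witnesses verify in polynomial time that $D$ is $\Gamma$-closed, i.e.\ $\Gamma(D) \subseteq D$; by a Knaster--Tarski argument over the finite lattice $2^{\hclause{P}}$ the least fixpoint then satisfies $E \subseteq D$, so a polynomial check that $\omega \modelsPOS D$ yields $\omega \modelsPOS E$ and hence the satisfiability of $E$.

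For completeness of this procedure, if $E$ is satisfiable I would take $D = E$ (a genuine fixpoint) and any model $\omega$ of $E$ as the witness. The key point is that only \emph{negative} entailment facts $D \not\modelsPOS e_i$ are ever needed, each certifiable by a single guessed world, which keeps the entire verification in $\cNP$ and never touches the co-$\cNP$ ``foundedness'' direction; this is in fact precisely the consistency half underlying the brave-reasoning membership of \pointtoprop{bh2membership}. Combining membership with the hardness reduction gives $\cNP$-completeness.
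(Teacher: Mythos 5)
Your proof is correct and takes essentially the same route as the paper: $\cNP$-hardness via the reduction that posts the clauses of a CNF as facts (so the unique least-fixpoint answer set is exactly that clause set, consistent iff the formula is satisfiable, which indeed hinges on classical negation), and $\cNP$-membership via the ``consistency half'' of the $\complexity{BH}_2$ membership argument, i.e.\ guessing a world together with a closed over-approximation $D \supseteq E$ whose closedness is certified by polynomially checkable non-entailment witnesses. No gaps.
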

\proofat{14}

\begin{corollary}[weak disjunction, positive clausal program; cautious reasoning]
\label{cor:weak:general:cautious}
  Cautious reasoning, \ie determining whether a clause `$e$' is entailed by every answer set $M$ of a positive clausal program $P$ is $\ccoNP$-complete.
\end{corollary}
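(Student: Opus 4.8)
The plan is to collapse the whole question to a single propositional entailment check. First I would record the shape of the answer sets. Since $P$ is a positive clausal program, \pointtolem{singleclausal} together with \pointtoprop{weak:positive:answerset} tells us that $P$ has a unique possibilistic answer set, namely the least specific model induced by $P^{\star}_{\mathrm{w}}$; in the crisp case this is the distribution $\pi$ with $\pi(\omega)=1$ iff $\omega \models E$, where $E = \condset{e \in \hclause{P}}{N(e)=1}$ is the set of derived head clauses. This $\pi$ is normalized (consistent) exactly when $E$ is satisfiable, and otherwise $P$ has no consistent answer set at all.

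The key observation I would then establish is that a clause `$e$' is cautiously entailed iff $E \models e$ in propositional logic. Indeed, if $E$ is satisfiable the unique consistent answer set entails `$e$' iff every model of $E$ satisfies `$e$', i.e. iff $E \models e$; and if $E$ is unsatisfiable there are no consistent answer sets, so `$e$' is vacuously entailed, while $E \models e$ holds trivially. Hence cautious reasoning coincides with deciding $E \models e$.

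For membership I would invoke the $\ccoNP$ half of the $\complexity{BH}_2$ argument: in the proof of \pointtoprop{bh2membership} brave reasoning is written as the conjunction of ``$E$ is satisfiable'' (the $\cNP$ condition, cf.\ \pointtocor{weak:general:existence}) and ``$E \models e$'' (the $\ccoNP$ condition). Since cautious reasoning is precisely this second condition, it lies in $\ccoNP$. For hardness I would reduce from propositional unsatisfiability: given a clause set $\Phi$, build $P$ from the facts $\arule{c}{}$ for each $c \in \Phi$ together with a rule $\arule{d}{d}$ for a fresh atom $d$. Then $E = \Phi$ (the dummy rule never fires), and since $d$ is fresh the unique consistent answer set, when it exists, contains a model with $d$ false; thus `$d$' is cautiously entailed iff $\Phi$ is unsatisfiable. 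Combining the two directions yields $\ccoNP$-completeness.

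The main obstacle is the membership claim ``$E \models e \in \ccoNP$'', which is not automatic: $E$ is defined by iterating $T^{\mathrm{w}}_P$, and each step naively calls a clause-entailment (hence $\ccoNP$) oracle, so literally computing $E$ would only place the problem in $\cP^{\cNP}$. The real work, which I would import from the proof of \pointtoprop{bh2membership}, is to certify \emph{non}-entailment without recomputing the fixpoint: one guesses a single interpretation $\omega$ with $\omega \not\models e$ and verifies by a polynomially checkable witness that $\omega$ is a model of $E$. This is also exactly where classical negation matters, since without it clause entailment degenerates to subsumption and becomes polynomial, which is why the analogous task lands in $\cP$.
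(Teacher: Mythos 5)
Your proposal is correct and follows essentially the same route as the paper: collapse cautious reasoning for a positive clausal program to the single entailment check $E \models e$ against the unique fixpoint answer set (with the inconsistent-$E$ case handled by vacuous truth), obtain $\ccoNP$ membership from the $\ccoNP$ component of the $\complexity{BH}_2$ argument of \pointtoprop{bh2membership}, and obtain hardness by reducing propositional unsatisfiability (equivalently, clausal entailment) to facts of the program. The only blemish is the parenthetical claim that the dummy rule $(\arule{d}{d})$ never fires --- when $\Phi$ is unsatisfiable it does fire, since an inconsistent $E$ entails $d$ --- but this is harmless because that case is settled by vacuous truth and trivial entailment anyway.
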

\proofat{14}

Surprisingly, the expressivity of positive clausal programs under the weak interpretation of disjunction is directly tied to the ability to use classical negation in clauses. If we limit ourselves to positive clausal programs without classical negation we find that the expressiveness is restricted to \cP.

In order to see this, let us take a closer look at the immediate consequence operator for clausal programs as defined in \pointtodef{weak:immediate}. When there are no occurrences of classical negation we can simplify this immediate consequence operator to
\begin{align*}
{T}^\mathrm{w}_P(E) = \condset{e_0}{\srule{e} \in P \loand \forall i \in \set{\range{1}{m}} \cdot \exists e \in E \cdot e \subseteq e_i}
\end{align*}
where $e \subseteq e_i$ is defined as the subset relation where we interpret $e$ and $e_i$ as sets of literals, \ie $e = (\srangec{l}{1}{n}{\lor})$ is interpreted as $\set{\srange{l}{1}{n}}$.

\begin{proposition}
\label{prop:fixpoint:polynomial}
  Let $P$ be a positive clausal program without classical negation. We can find the unique answer set of $P$ in polynomial time.
\end{proposition}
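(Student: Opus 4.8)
The plan is to exhibit an explicit polynomial-time procedure that computes the least fixpoint $P^{\star}_\mathrm{w}$ of the simplified immediate consequence operator $\Tweak{P}$, and then to invoke \pointtoprop{weak:positive:answerset} to conclude that this fixpoint is an answer set, together with \pointtolem{singleclausal} to conclude that it is the \emph{unique} one. Concretely, I would initialise $E_0 = \emptyset$, repeatedly set $E_{i+1} = \Tweak{P}(E_i)$, and halt as soon as $E_{i+1} = E_i$; by the monotonicity of $\Tweak{P}$ established earlier, the sequence $E_0 \subseteq E_1 \subseteq \cdots$ is increasing and its limit is exactly $P^{\star}_\mathrm{w}$. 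The proof then reduces to bounding (i) the number of iterations and (ii) the cost of evaluating $\Tweak{P}$ once.

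For (i), the key observation is that every clause produced by $\Tweak{P}$ is the head of some rule of $P$, so each $E_i$ is a subset of the finite clause universe $\hclause{P} = \condset{head(r)}{r \in P}$, whose cardinality is at most $|P|$. Since the chain is strictly increasing until it stabilises and lives inside a set of size at most $|P|$, the fixpoint is reached after at most $|P|$ iterations. For (ii), evaluating $\Tweak{P}(E)$ amounts to scanning each rule $(\srule{e}) \in P$ and, for each body clause $e_i$, testing whether some already derived clause $e \in E$ satisfies $e \subseteq e_i$ (reading clauses as sets of literals). Each such subset test is polynomial in the sizes of the two clauses, and there are only polynomially many triples (rule, body clause, candidate clause) to examine; hence one evaluation of $\Tweak{P}$ is polynomial and the overall running time is polynomial.

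What remains is to justify that this syntactic procedure really computes the answer set, and here lies the main obstacle. The simplified operator replaces the semantic body condition $V^\lambda \models e_i$ by the single-clause subset test $\exists e \in E \cdot e \subseteq e_i$, and I would verify that, in the absence of classical negation, the two coincide: a positive clause $e_i$ is entailed by a set of positive clauses $E$ \emph{iff} some member of $E$ is a subclause of $e_i$. The forward direction is immediate, and for the converse one builds a falsifying interpretation by setting every atom occurring in $e_i$ to false and every other atom to true, which falsifies $e_i$ while satisfying every $e \in E$ that is not a subclause of $e_i$. This is precisely the step that fails once classical negation is allowed, where entailing a body clause may require resolving several derived clauses against one another (as in the brewing example of \pointtosec{introduction}); the impossibility of such multi-clause resolution in the purely positive case is what collapses entailment to a polynomial syntactic test and keeps the whole computation in \cP. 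Finally, because $P$ contains no classical negation the derived clauses are always jointly satisfiable, so $P^{\star}_\mathrm{w}$ induces a consistent, $\set{0,1}$-valued possibility distribution meeting the crisp requirement of \pointtodef{weak:semantics}, and uniqueness follows from \pointtolem{singleclausal}, which guarantees a single minimally specific model.
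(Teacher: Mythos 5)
Your proposal is correct and takes essentially the same approach as the paper: compute the least fixpoint of the simplified operator $\Tweak{P}$ by iteration from $\emptyset$, bound the number of iterations by $|\hclause{P}|\leq|P|$ with polynomial cost per evaluation, and justify the simplification via the observation that, absent classical negation, a set of positive clauses entails a positive clause iff it contains a subclause of it (your falsifying-interpretation argument). The supporting appeals to monotonicity, \pointtoprop{weak:positive:answerset} and \pointtolem{singleclausal} for correctness, consistency and uniqueness likewise mirror the paper's treatment.
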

\proofat{14}

\noindent We now examine the complexity of general clausal programs. We will do this by showing that the problem of determining the satisfiability of a QBF of the form $\phi = \exists X_1 \forall X_2 \cdot p(X_1, X_2)$ with $p(X_1, X_2)$ in DNF can be reduced to the problem of determining whether a clause `$e$' is entailed by a consistent answer set $M$ of the clausal program $P$. We start with the definition of our reduction.

\begin{definition}\label{def:simulationQBF}
  Let $\phi = \exists X_1 \forall X_2 \cdot p(X_1,X_2)$ be a QBF with $p(X_1,X_2) = \srangec{\theta}{1}{n}{\lor}$ a formula in disjunctive normal form with $X_i$ sets of variables.
  We define the clausal program $P_\phi$ corresponding to $\phi$ as
  \begin{align}
    P_{\phi} = &\condset{\arule{x}{\naf \neg x}}{x \in X_1} \cup \condset{\arule{\neg x}{\naf x}}{x \in X_1}\label{eq:rules:generate}\\
           & \cup \condset{\arule{\neg \theta_t \lor sat}{}}{1 \leq t \leq n}\label{eq:rules:entailment}\\
           & \cup \set{\arule{}{\naf sat}}\label{eq:rules:constraint}
  \end{align}
  with $\neg \theta_t$ the clausal representation of the negation of the formula $\theta_t$, \eg when $\theta_t = \rangec{x_1 \land \neg x_2}{\neg x_k}{\land}$ then $\neg \theta_t = \rangec{\neg x_1 \lor x_2}{x_k}{\lor}$.
\end{definition}

\begin{example}
  Given the QBF $\phi = \exists p_1, p_2 \forall q_1, q_2 \cdot (p_1 \land q_1) \lor (p_2 \land q_2) \lor (\neg q_1 \land \neg q_2)$ the clausal program $P_\phi$ is 
  \begin{align*}
    \arule{p_1&}{\naf \neg p_1}\\
    \arule{\neg p_1&}{\naf p_1}\\
    \arule{p_2&}{\naf \neg p_2}\\
    \arule{\neg p_2&}{\naf p_2}\\
    \arule{\neg p_1 \vee \neg q_1 \vee \mathit{sat}&}{}\\
    \arule{\neg p_2 \vee \neg q_2 \vee \mathit{sat}&}{}\\
    \arule{q_1 \vee q_2 \vee \mathit{sat}&}{}\\
    \arule{&}{\naf \mathit{sat}}.
  \end{align*}
  Notice how $M = \set{p_1, p_2, \neg p_1 \vee \neg q_1 \vee \mathit{sat}, \neg p_2 \vee \neg q_2 \vee \mathit{sat}, q_1 \vee q_2 \vee \mathit{sat}}$ is an answer set of $P_\phi$ and that $M \models \mathit{sat}$. Accordingly we find that the QBF is satisfied.
  
 If we take the QBF $\phi' = \exists p_1, p_2 \forall q_1, q_2 \cdot (p_1 \land q_1) \lor (p_2 \land q_2)$ then the clausal program $P_{\phi'}$ corresponding to $\phi'$ is the program $P_\phi$ in which the penultimate rule has been removed. Notice how $P_{\phi'}$ has no answer sets, because we are not able to entail `$\mathit{sat}$' from any of the answer sets of $P_{\phi'}$. Indeed, the QBF $\phi'$ is not satisfiable.
\end{example}

\begin{proposition}[weak disjunction; brave reasoning]
\label{prop:sigma2hardness}
Let $P$ be a clausal program. The problem of deciding whether a clause `$e$' is entailed by a consistent answer set $M$ of $P$ is $\spolsig{2}$-hard.
\end{proposition}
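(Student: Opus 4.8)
The plan is to establish $\spolsig{2}$-hardness by showing that the reduction from \pointtodef{simulationQBF} is correct: a QBF $\phi = \exists X_1 \forall X_2 \cdot p(X_1,X_2)$ with $p$ in DNF is valid if and only if the clausal program $P_\phi$ has a consistent answer set $M$ with $M \models \mathit{sat}$. Since deciding validity of such a QBF is the canonical $\spolsig{2}$-complete problem (as recalled in the complexity background), and since the reduction is clearly polynomial in the size of $\phi$, establishing this equivalence suffices. The whole argument therefore reduces to proving the two directions of the iff.

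First I would analyze the structure of answer sets of $P_\phi$. The rules in \eqref{eq:rules:generate}, namely the pairs $\arule{x}{\naf \neg x}$ and $\arule{\neg x}{\naf x}$ for each $x \in X_1$, are the standard even-loop ``guessing'' gadget: in any answer set exactly one of $x, \neg x$ is derived for each $x \in X_1$, so answer sets correspond to truth assignments $\tau$ to the existential variables $X_1$. This is where I would use \pointtoprop{weak:general:answerset} to reduce reasoning about answer sets of $P_\phi$ to reasoning about $P_\phi^{M}$, a positive clausal program whose answer set I can compute via the fixpoint $T^\mathrm{w}_P$. Given a guess $\tau$, the reduct retains the corresponding facts $x$ (\resp $\neg x$) together with the fixed positive rules \eqref{eq:rules:entailment} of the form $\arule{\neg \theta_t \lor \mathit{sat}}{}$, while the constraint \eqref{eq:rules:constraint} survives iff $\mathit{sat}$ is not entailed. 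The key observation is that, by the weak semantics, the clauses $\neg\theta_t \lor \mathit{sat}$ together with the literals fixing $X_1$ entail $\mathit{sat}$ exactly when every assignment to $X_2$ satisfies some disjunct $\theta_t$ under $\tau$; equivalently, when $\forall X_2 \cdot p(\tau, X_2)$ holds.

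The crux of the correctness argument, and the step I expect to be the main obstacle, is verifying this entailment characterization precisely. I would argue that for a fixed $\tau$, the set of clauses $\{\neg\theta_t \lor \mathit{sat} \mid 1 \le t \le n\}$ together with the unit literals encoding $\tau$ entails $\mathit{sat}$ iff the clause set $\{\neg\theta_t \mid 1 \le t \le n\}$ (simplified under $\tau$) is unsatisfiable over $X_2$. Since $\neg\theta_t$ is the clausal (disjunctive) form of the negation of the conjunction $\theta_t$, a truth assignment to $X_2$ falsifies all of $\neg\theta_1,\ldots,\neg\theta_n$ precisely when it makes some $\theta_t$ true, \ie when it satisfies $p(\tau, X_2)$. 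Hence the clause set is unsatisfiable iff no assignment to $X_2$ leaves $p(\tau, \cdot)$ false, \ie iff $\forall X_2 \cdot p(\tau, X_2)$. The subtlety to handle carefully is the interplay between weak-disjunctive entailment (reasoning over clauses as in propositional logic, via (GMP) and (S)) and the role of $\mathit{sat}$ as a ``collector'' literal: I must confirm that $\mathit{sat}$ is derivable in the fixpoint exactly when the underlying propositional resolution forces it, and that the constraint $\arule{}{\naf \mathit{sat}}$ kills precisely those guesses $\tau$ for which $\mathit{sat}$ fails.

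Putting the directions together: if $\phi$ is valid, pick a witnessing $\tau$ for the existential block; the corresponding answer set $M$ derives $\mathit{sat}$ and survives the constraint, so $M \models \mathit{sat}$. Conversely, if some consistent answer set $M$ entails $\mathit{sat}$, the induced $\tau$ witnesses $\forall X_2 \cdot p(\tau, X_2)$, so $\phi$ is valid. This completes the reduction and hence the $\spolsig{2}$-hardness claim. I would close by remarking that consistency of $M$ is automatic here, since the program uses classical negation only on the guessing atoms and the guess selects one polarity per variable.
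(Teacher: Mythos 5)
Your proposal is correct and follows essentially the same route as the paper: it verifies that the reduction of \pointtodef{simulationQBF} is a correct polynomial reduction, \ie that $\phi = \exists X_1 \forall X_2 \cdot p(X_1,X_2)$ is valid iff $P_\phi$ has a consistent answer set entailing $\mathit{sat}$, with the even-loop rules fixing an assignment $\tau$ to $X_1$, propositional entailment of $\mathit{sat}$ from the clauses $\neg\theta_t \lor \mathit{sat}$ capturing the universal quantifier, and the constraint rule eliminating the guesses for which $\mathit{sat}$ is not entailed. (Only a side remark needs repair: consistency of the answer sets is automatic not because classical negation is confined to the guessing atoms---it also occurs on $X_2$ variables inside the clauses $\neg\theta_t$---but because making $\mathit{sat}$ true satisfies all the disjunctive facts simultaneously.)
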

\proofat{14-15}

\begin{proposition}[weak disjunction; brave reasoning]
\label{prop:sigma2membership}
  Let $P$ be a clausal program. The problem of deciding whether a clause `$e$' is entailed by a consistent answer set $M$ of $P$ is in $\spolsig{2}$.
\end{proposition}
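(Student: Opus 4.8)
The plan is to establish membership via the standard ``guess-and-check'' schema for $\spolsig{2} = \cNP^\cNP$: nondeterministically guess a polynomial-size certificate for a candidate answer set, and then verify it in deterministic polynomial time relative to an $\cNP$ oracle. The first thing to pin down is the representation of the candidate. Although a possibilistic answer set is formally a possibility distribution over $\Omega$, hence an object of exponential size, in the crisp setting (values in $\set{0,1}$) it is completely determined by the set $E \subseteq \hclause{P}$ of head clauses believed with certainty $1$: the induced distribution is simply $\pi_E(\omega) = 1$ iff $\omega \models \bigwedge E$, and $\pi_E(\omega) = 0$ otherwise. Since $|\hclause{P}| \leq |P|$, guessing $E$ is a legitimate nondeterministic step, and I would begin by guessing $E \subseteq \hclause{P}$ as the candidate, represented by the valuation $V$ with $V(e) = 1$ iff $e \in E$.

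For the verification I would rely on the syntactic characterization rather than the semantic one, so as to stay polynomial. First, check consistency of the candidate: in the crisp clausal case a consistent answer set requires $\bigwedge E$ to be satisfiable, a single $\cNP$ query. Next, apply \pointtoprop{weak:general:answerset}: after splitting off the constraint rules in the usual way, $E$ is an answer set of $P$ iff $E$ is the answer set of the reduct $P^E$ and $E$ violates no constraint. Computing $P^E$ (\pointtodef{immediate:weak}) requires, for each rule, testing whether the guess entails its negative body clauses $e_i$, i.e.\ whether $\bigwedge E \models e_i$; each test is a propositional entailment check (in $\ccoNP$), and there are only polynomially many. Then I would compute the fixpoint $(P^E)^{\star}_{\mathrm{w}}$ of the immediate consequence operator $\Tweak{P^E}$ (\pointtodef{weak:immediate}); by its monotonicity the least fixpoint is reached after at most $|\hclause{P}|$ iterations, and each iteration again asks only polynomially many entailment queries of the form $V^\lambda \models e_i$. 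Finally, stability is checked by verifying that $\bigwedge E$ is logically equivalent to the conjunction of the clauses in $(P^E)^{\star}_{\mathrm{w}}$ (two entailment checks), and the brave-reasoning target is settled by the single check $\bigwedge E \models e$.

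Putting this together, every subroutine is either propositional satisfiability or propositional clause entailment, each in $\cNP$ or $\ccoNP$, and only polynomially many of them are invoked; since a machine with an $\cNP$ oracle can answer $\ccoNP$ queries by complementation, the whole verification runs in deterministic polynomial time relative to an $\cNP$ oracle. Combined with the nondeterministic guess of $E$, this places the problem in $\cNP^\cNP = \spolsig{2}$. I expect the main obstacle to be the representation argument together with the bookkeeping that bounds the oracle calls: one must argue carefully that a candidate answer set is faithfully captured by the polynomial object $E$, so that nothing is lost in passing from the exponential distribution to $E$, and that the iterative fixpoint computation of $\Tweak{P^E}$ — whose individual steps hide $\ccoNP$ entailment tests — terminates after polynomially many steps by monotonicity, keeping the total number of oracle queries polynomial.
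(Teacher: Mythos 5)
Your proposal is correct and takes essentially the same route as the paper: guess the polynomial-size certificate $E \subseteq \hclause{P}$ (the crisp valuation on head clauses, which is exactly the compact encoding the paper introduces in \pointtosec{weak:syntactic} to avoid exponential possibility distributions), then verify stability via the reduct and the fixpoint of $\Tweak{P^E}$, answering each propositional satisfiability/entailment subquery with the $\cNP$ oracle, which places the problem in $\cNP^\cNP = \spolsig{2}$. Your attention to the two subtle points — that a crisp answer set is faithfully captured by $E$, and that the stability test must compare $E$ with $(P^E)^{\star}_{\mathrm{w}}$ up to logical equivalence rather than syntactic identity — matches the care taken in the paper's argument.
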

\proofat{15}

\begin{corollary}
\label{cor:general:complete}
  Let $P$ be a clausal program. The problem of deciding whether a clause `$e$' is entailed by a consistent answer set $E$ of $P$ is $\spolsig{2}$-complete.
\end{corollary}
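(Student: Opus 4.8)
The plan is to combine the two preceding results directly: Corollary~\ref{cor:general:complete} asserts \spolsig{2}-completeness, and completeness is by definition the conjunction of hardness and membership. Since \pointtoprop{sigma2hardness} establishes that brave reasoning for clausal programs is \spolsig{2}-hard and \pointtoprop{sigma2membership} establishes that the same problem lies in \spolsig{2}, the corollary follows immediately by the definition of \cC-completeness given in \pointtosec{background}.

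More concretely, I would first invoke \pointtoprop{sigma2membership} to record that deciding whether a clause `$e$' is entailed by a consistent answer set $M$ of a clausal program $P$ is a member of \spolsig{2}. Then I would invoke \pointtoprop{sigma2hardness} to record that this same decision problem is \spolsig{2}-hard, which was shown via the reduction from QBF validity of formulas $\phi = \exists X_1 \forall X_2 \cdot p(X_1, X_2)$ through the construction $P_\phi$ in \pointtodef{simulationQBF}. The hardness and membership together give completeness.

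There is essentially no obstacle here, since the corollary is a one-line consequence of two propositions that do the actual work. The only point requiring minor care is that the two propositions must be stated for precisely the same decision problem: \pointtoprop{sigma2hardness} and \pointtoprop{sigma2membership} both concern entailment of a clause `$e$' by a consistent answer set of a general clausal program $P$ (with negation-as-failure and classical negation permitted), and the corollary is stated identically, so the two results align exactly and no reconciliation of hypotheses is needed. The proof is therefore simply: the problem is in \spolsig{2} by \pointtoprop{sigma2membership} and \spolsig{2}-hard by \pointtoprop{sigma2hardness}, hence \spolsig{2}-complete.
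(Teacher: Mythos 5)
Your proposal is correct and matches the paper's (implicit) argument exactly: the corollary is stated immediately after Proposition~\ref{prop:sigma2hardness} and Proposition~\ref{prop:sigma2membership} precisely because completeness follows directly from combining hardness and membership for the same decision problem. Your check that both propositions concern the identical problem (clause entailment by a consistent answer set of a general clausal program) is the only point of care, and you handled it correctly.
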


\begin{corollary}[weak disjunction; answer set existence]
\label{cor:general:existence}
Determining whether a clausal program $P$ has a consistent answer set is an $\spolsig{2}$-complete problem.
\end{corollary}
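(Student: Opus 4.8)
The plan is to establish $\spolsig{2}$-membership and $\spolsig{2}$-hardness separately, in both cases leaning on the brave-reasoning results for clausal programs that immediately precede this corollary, namely \pointtoprop{sigma2membership} and \pointtoprop{sigma2hardness}.

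For membership I would observe that the guess-and-verify procedure underlying \pointtoprop{sigma2membership} already decides existence. One nondeterministically guesses a valuation $V$ over the clauses in $\hclause{P}$, drawing the attached certainties only from the polynomially many values of $cert^+(P)$ (which suffices, as explained after \pointtoex{extras}), and then verifies with an $\cNP$ oracle that $V$ is a consistent answer set of $P$. This verification forms the reduct $P^V$ of \pointtodef{immediate:weak}, checks the fixpoint equation $(P^V)^{\star}_{\mathrm{w}} = V$, and checks that the constraint rules are not violated and that $V$ is consistent; the only non-polynomial ingredients are the clause (non-)entailment tests $V^\lambda \models e_i$ and $V^{\underline{1-\lambda}} \not\models e_i$, each a single (co-)$\cNP$ query handled by the oracle. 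Since existence asks exactly whether such a guess succeeds and imposes no further requirement, the problem lies in $\cNP^\cNP = \spolsig{2}$.

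For hardness I would reuse the reduction $P_\phi$ of \pointtodef{simulationQBF} without modification. The decisive observation is that the constraint rule $\arule{}{\naf \mathit{sat}}$ of \eqref{eq:rules:constraint} is respected by a consistent answer set only when $\mathit{sat}$ is entailed, so that \emph{every} consistent answer set of $P_\phi$ entails $\mathit{sat}$. Hence $P_\phi$ has a consistent answer set if and only if some consistent answer set entails $\mathit{sat}$, which by the correctness of the reduction shown in \pointtoprop{sigma2hardness} holds exactly when the QBF $\phi = \exists X_1 \forall X_2 \cdot p(X_1,X_2)$ is valid. As validity of such QBFs is the canonical $\spolsig{2}$-complete problem, answer set existence is $\spolsig{2}$-hard, and together with membership this gives completeness.

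I expect the only delicate point to be making the reuse of the earlier results airtight rather than any new computation. On the hardness side this means spelling out that the constraint $\arule{}{\naf \mathit{sat}}$ makes brave entailment of $\mathit{sat}$ and answer-set existence coincide, so that the reduction of \pointtoprop{sigma2hardness} transfers verbatim; on the membership side it means being explicit that the verification is oracle-polynomial and that restricting the guessed certainties to $cert^+(P)$ is legitimate. With these observations the corollary follows directly.
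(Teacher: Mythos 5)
Your proposal is correct and follows essentially the same route as the paper: membership by guessing a valuation and verifying the reduct/fixpoint conditions with an $\cNP$ oracle as in \pointtoprop{sigma2membership}, and hardness by observing that the constraint rule \eqref{eq:rules:constraint} in the reduction $P_\phi$ of \pointtodef{simulationQBF} makes answer-set existence coincide with brave entailment of $\mathit{sat}$, hence with validity of the QBF as in \pointtoprop{sigma2hardness}.
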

\proofat{15}

\begin{corollary}[weak disjunction; cautious reasoning]
\label{cor:general:cautious}
  Cautious reasoning, \ie determining whether a clause `$e$' is entailed by every answer set $M$ of a clausal program $P$, is $\spolpi{2}$-complete.
\end{corollary}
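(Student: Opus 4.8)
The plan is to establish cautious reasoning in the $\spolpi{2}$ class and then to show $\spolpi{2}$-hardness, treating this problem as the dual of the brave reasoning problem already handled in \pointtoprop{sigma2membership} and \pointtoprop{sigma2hardness}.

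For membership I would argue that the complementary problem---deciding whether there is a consistent answer set $M$ of $P$ with $M \not\models e$---lies in $\spolsig{2}$. Following the guess-and-check strategy of \pointtoprop{sigma2membership}, one nondeterministically guesses a candidate answer set, described by a valuation over $\hclause{P}$ and hence of polynomial size, and then verifies in deterministic polynomial time with an $\cNP$ oracle that (i)~the guess is a consistent answer set and (ii)~$M \not\models e$. Verification of (i) rests on \pointtoprop{weak:general:answerset} together with \pointtodef{immediate:weak}: forming the reduct requires the non-entailment tests $V^{\underline{1-\lambda}} \not\models e_i$, which are $\cNP$ queries, while confirming that the guess equals the least fixpoint of the positive reduct requires entailment tests, which are $\ccoNP$ queries; both are answerable with a single $\cNP$ oracle. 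Condition (ii) is one further non-entailment test. Thus the complement is in $\cNP^{\cNP} = \spolsig{2}$, so cautious reasoning is in $\spolpi{2}$.

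For hardness I would reduce from validity of a QBF $\psi = \sForall{X_1}{\sExists{X_2}{c(X_1,X_2)}}$ with $c = \srangec{\delta}{1}{n}{\land}$ in CNF, the canonical $\spolpi{2}$-complete problem. Mirroring \pointtodef{simulationQBF}, I build the clausal program
\begin{align*}
P_\psi = &\condset{\arule{x}{\naf \neg x}}{x \in X_1} \cup \condset{\arule{\neg x}{\naf x}}{x \in X_1}\\
&\cup \condset{\arule{\delta_t \lor \mathit{sat}}{}}{1 \leq t \leq n} \cup \set{\arule{e}{\naf \mathit{sat}}}
\end{align*}
and ask whether the atom $e$ is cautiously entailed. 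The first two groups of rules make the answer sets of $P_\psi$ correspond exactly to the assignments of $X_1$. For a fixed $X_1$, the facts $(\arule{\delta_t \lor \mathit{sat}}{})$ entail $\mathit{sat}$ iff no assignment of $X_2$ satisfies all of $\delta_1, \ldots, \delta_n$ while falsifying $\mathit{sat}$; equivalently, $\mathit{sat}$ is \emph{not} entailed iff $\sExists{X_2}{c(X_1,X_2)}$. Hence the rule $(\arule{e}{\naf \mathit{sat}})$ derives $e$ in the answer set for $X_1$ precisely when $\sExists{X_2}{c(X_1,X_2)}$ holds, so $e$ is entailed by \emph{every} answer set iff $\sForall{X_1}{\sExists{X_2}{c(X_1,X_2)}}$, i.e.\ iff $\psi$ is valid. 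Crucially, $P_\psi$ uses no constraint rule, so it always has consistent answer sets and the vacuous case of cautious entailment never arises.

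The main obstacle is the correctness of this hardness reduction under the weak possibilistic semantics: using \pointtodef{immediate:weak} and \pointtoprop{weak:general:answerset} one must verify that the answer sets of $P_\psi$ are exactly the $2^{|X_1|}$ consistent valuations fixing $X_1$ (the choice loop on each pair $x, \neg x$ is stable and never leaves an atom undecided or inconsistent, since $\mathit{sat}$ can always satisfy each head clause), and that non-entailment of $\mathit{sat}$ faithfully encodes the inner existential quantifier through the clausal equivalence $\condset{\delta_t \lor \mathit{sat}}{1 \le t \le n} \models \mathit{sat} \Leftrightarrow \sForall{X_2}{\neg c(X_1,X_2)}$. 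Combining this reduction with the membership argument yields $\spolpi{2}$-completeness.
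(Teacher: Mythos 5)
Your proof is correct, but it is considerably more explicit than what the paper does, and the hardness half follows a genuinely different route. The paper's entire proof is a one-sentence complementation claim: cautious reasoning is declared to be the complement of brave reasoning (whose $\spolsig{2}$-completeness is \pointtocor{general:complete}), phrased as ``verify that there does not exist an answer set $M'$ of $P$ such that $\neg e$ is entailed by $M'$.'' You instead split the claim into (i)~membership, argued by placing the complement ``some consistent answer set does not entail $e$'' in $\spolsig{2}$ via the same guess-and-check machinery as \pointtoprop{sigma2membership}, and (ii)~hardness, via a fresh reduction from the dual QBF $\forall X_1 \exists X_2\, c(X_1,X_2)$ with $c$ in CNF, dualizing \pointtodef{simulationQBF} (facts $\delta_t \lor \mathit{sat}$, choice rules on $X_1$, and $\arule{e}{\naf \mathit{sat}}$ replacing the constraint rule). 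Two remarks. First, your version is arguably more careful than the paper's: since answer sets of clausal programs are not complete theories, ``$M'$ entails $\neg e$'' and ``$M'$ does not entail $e$'' are not interchangeable, so the paper's literal wording of the complementation is loose; your membership argument uses the correct complement (non-entailment of $e$, an $\cNP$ query), and your direct reduction sidesteps the issue entirely for hardness. Second, the price of your route is that the burden of correctness shifts to the new reduction, and there you correctly identify what must be verified: that the stable valuations of $P_\psi$ are exactly the $2^{|X_1|}$ assignments to $X_1$ (which holds because the facts always remain satisfiable by setting $\mathit{sat}$ true, so they never entail an $X_1$-literal, forcing exactly one of $x,\neg x$ into each fixpoint), and that non-entailment of $\mathit{sat}$ from $\{\delta_t \lor \mathit{sat}\}_t$ plus the $X_1$-literals encodes $\exists X_2 \cdot c(X_1,X_2)$. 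Both checks go through, so your argument stands on its own, whereas the paper's relies on the reader accepting the complementation of \pointtocor{general:complete} at face value.
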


\begin{proof}
This problem is complementary to brave reasoning, \ie we verify that there does not exist an answer set $M'$ of $P$ such that `$\neg e$' is entailed by $M'$.
\end{proof}

\section{Related Work}\label{sec:related}
The work presented in this paper touches on various topics that have been the subject of previous research. In this section we structure our discussion of related existing work along 3 main lines. Previous work on the semantics of disjunctive programs is discussed in \pointtosec{related:disjunctive}. In \pointtosec{related:epistemic} we look at how ASP and possibility theory have been used in the literature for epistemic reasoning. Finally, in \pointtosec{related:rulesas}, we look at prior work on characterizing rules with possibility theory and fuzzy logic.

\subsection{Semantics of Disjunctive Programs}\label{sec:related:disjunctive}
\noindent Many characterizations of stable models have been proposed in the literature. We refer the reader to~\cite{lifschitz:thirteen} for a concise overview of thirteen such definitions. 
One of the earliest characterizations of stable models was in terms of autoepistemic logic~\cite{moore:semantical}. Formulas in autoepistemic logic are constructed using atoms and propositional connectives, as well as the modal operator $\complexity{L}$, which intuitively stands for ``\emph{it is believed}''. The characterization of stable models proposed in~\cite{gelfond:classical} based on autoepistemic logic is to look at `$\naf a$' as the expression `$\neg\complexity{L}a$', a choice which clearly stands out for its simplicity and intuitively. For example, to explain the semantics of the rule $\grule{a}$ one would consider the formula $\srangec{a}{1}{m}{\land} \land\psrangec{\neg \complexity{L}}{a}{m+1}{n}{\land}\rightarrow a_0$. Yet this characterization does have some problems. Indeed, it was soon afterwards realized that this correspondence does not hold for programs with classical negation or disjunction in the head. A more involved characterization based on autoepistemic logic that does work for classical negation and disjunction has been proposed in~\cite{lifschitz:extended}. The idea is to look at literals `$l$' that are not preceded by negation as failure as the formula ($l \land \complexity{L}l$), while one still looks at a literal of the form `$\naf l$' as the formula $\neg \complexity{L}l$. In our approach, an expression of the form `$\naf l$' is essentially identified with $\Pi(\neg l)$, which clearly resembles the first characterization in terms of autoepistemic logic. By staying closer to the Gelfond-Lifschitz reduct, our approach is more elegant in that we do not require a special translation of literals in order to be able to deal with classical negation and disjunction.

Several authors have already proposed alternatives and extensions to the semantics of disjunctive programs. Ordered disjunction~\cite{brewka:logic} falls in the latter category and allows to use the head of the rule to formulate alternative solutions in their preferred order. For example, a rule such as $\arule{\srangec{l}{1}{k}{\times}}{}$ represents the knowledge that $l_1$ is preferred over $l_2$ which is preferred over $l_3$ \ldots , but that at the very least we want $l_k$ to be true. As such it allows for an easy way to express context dependent preferences. The semantics of ordered disjunction allow certain non-minimal models to be answer sets, hence, unlike the work in this paper, it does not adhere to the standard semantics of disjunctive rules in ASP. 

Annotated disjunctions are another example of a framework that changes the semantics of disjunctive programs~\cite{vennekens:logic}. It is based on the idea that every disjunct in the head of a rule is annotated with a probability. Interestingly, both ordered and annotated disjunction rely on split programs, as found in the possible model semantics~\cite{sakama:alternative}. These semantics provide an alternative to the minimal model semantics. The idea is to split a disjunctive program into a number of normal programs, one for each possible choice of disjuncts in the head, of which the minimal Herbrand models are then the possible models of the disjunctive programs. Intuitively this means that a possible model represents a set of atoms for which a possible justification is present in the program. In line with our conclusions for weak disjunction, using the possible model semantics also leads to a lower computational complexity.

Not all existing extensions of disjunction allow non-minimal models. For example, in \cite{buccafurri:disjunctive} an extension of disjunctive logic programs is presented which adds the idea of inheritance. Conflicts between rules are then resolved in favor of more specific rules. Such an approach allows for an intuitive way to deal with default reasoning and exceptions. In particular, the semantics allow for rules to be marked as being defeasible and allows to specify an order or inheritance tree among (sets of) rules. Interestingly, the complexity of the resulting system is not affected and coincides with the complexity of ordinary disjunctive programs.

\subsection{Epistemic Reasoning with ASP and Possibility Theory}\label{sec:related:epistemic}
In~\cite{gelfond:strong} it was argued that classical ASP, while later proven to have strong epistemic foundations~\cite{loyer:epistemic}, is not well-suited for epistemic reasoning. Specifically, ASP lacks mechanisms for introspection and can thus not be used to \eg reason based on cautiously deducible information. At the same time, however, it was shown that extensions of ASP could be devised that do allow for a natural form of epistemic reasoning. The language $ASP^{K}$ proposed in~\cite{gelfond:strong} allows for modal atoms, \eg $\mathsf{K}a$, where $\mathsf{K}$ is a modal operator that can intuitively be read as ``\emph{it is known that [$a$ is true]}''. These new modal atoms can in turn be used in the body of rules. The semantics of $ASP^{K}$ were originally based on a three-valued interpretation (to allow for the additional truth value `$uncertain$'), but later, in~\cite{truszczynski:revisiting}, it was shown that this is not essential and that a more classical two-valued possible world structure can also be considered. In addition, further extensions are discussed that allow for epistemic reasoning over arbitrary theories, where it is shown that $ASP^{K}$ can be encoded within these extensions. The complexity is studied for these extensions and is shown to be brought up one level \wrt ASP, \eg to \spolsig{3} for disjunctive epistemic programs.

Alternatively, existing extensions of ASP can be used to implement some epistemic reasoning tasks, such as reasoning based on brave/cautious conclusions. This idea is proposed in~\cite{faber:manifold} to overcome the need for an intermediary step to compute the desired consequences of the ASP program $P_1$, before being fed into $P_2$. Rather, they propose a translation to manifold answer set programs, which exploit the concept of weak constraints~\cite{buccafurri:enhancing} to allow for such programs to access all desired consequences of $P_1$ within a single answer set. As such, for problems that can be cast into this particular form, only a single ASP program needs to be evaluated and the intermediary step is made obsolete. 

As we mentioned in \pointtosec{related:disjunctive}, the semantics of ASP can also be expressed in terms of autoepistemic logic~\cite{marek:autoepistemic}. These semantics have the benefit of making the modal operator explicit, allowing for an extensions of ASP that incorporates such explicit modalities to better express exactly which form of knowledge is required. However, since autoepistemic logic treats negation-as-failure as a modality, it is quite hard to extend to the uncertain case. Furthermore, as already discussed, it as shown in~\cite{lifschitz:extended} that this characterization does not allow us to treat classical negation or disjunctive rules in a natural way, which weakens its position as a candidate for generalizing ASP from normal programs to \eg disjunctive programs. 

Possibility theory, which can \eg be used for belief revision, has a strong epistemic notion and shares a lot of commonalities with epistemic entrenchments~\cite{dubois:epistemic}. Furthermore, in~\cite{dubois:stable} a generalization of possibilistic logic is studied, which corresponds to a weighted version of a fragment of the modal logic KD. In this logic, epistemic states are represented as possibility distributions, and logical formulas are used to express constraints on possible epistemic states. In this paper, we similarly interpret rules in ASP as constraints on possibility distributions, which furthermore allows us to unearth the semantics of weak disjunction.

\subsection{Characterization of Rules using Possibility Theory and Fuzzy Logic}\label{sec:related:rulesas}
\noindent A large amount of research has focused on how possibility distributions can be used to assign a meaning to rules. For example, possibility theory has been used to model default rules~\cite{benferhat:representing,benferhat:nonmonotonic}. Specifically, a default rule ``if $a$ then $b$'' is interpreted as $\Pi(a\wedge b) > \Pi(a\wedge \neg b)$, which captures the intuition that when $a$ is known to hold, $b$ is more plausible than $\neg b$, if all that is known is that $a$ holds. In this approach entailment is defined by looking at the least specific possibility distributions which is similar in spirit to our approach for characterizing ASP rules (although the notion of least specific possibility distribution is defined, in this context, \wrt the plausibility ordering on interpretations induced by the possibility degrees). 

The work on possibilistic logic~\cite{dubois:possibilistic} forms the basis of possibilistic logic programming~\cite{dubois:towards}. The idea of possibilistic logic programming is to start from a necessity-valued knowledge base, which is a finite set of pairs $(\phi~\alpha)$, called necessity-valued formulas, with $\phi$ a closed first-order formula and $\alpha \in [0, 1]$. Semantically, a necessity-valued formula expresses a constraint of the form $N(\phi) \geq \alpha$ on the set of possibility distributions. A possibilistic logic program is then a set of necessity-valued implications. As rules are essentially modelled using material implication, however, the stable model semantics cannot straightforwardly be characterized using possibilistic logic programming. For example, the knowledge base $\set{(a \rightarrow b~~1), (\neg b~~1)}$, which represents the program $\set{\arule{b}{a}, \arule{\neg b}{}}$, induces that $N(\neg a) = 1$. Indeed, the semantics of this knowledge base indicate that $\Pi(a \land \neg b) = 0$ and $\Pi(b) = 0$, \ie we find that $\Pi(a) = 0$. In other words: a direct encoding using possibilistic logic programming allows for contraposition, which is not in accordance with the stable model semantics.

Rules in logic can also be interpreted as statements of conditional probability~\cite{jaynes:probability}. In the possibilistic setting this notion has been adapted to the notion of conditional necessity measures. Rules can be then also be modelled in terms of conditional necessity measures~\cite{benferhat:nonmonotonic,dubois:synthetic,benferhat:transformation}. The conditional possibility measure $\condPi{\psi}{\phi}$ is defined as the greatest solution to the equation $\Pi(\phi \wedge \psi) = \min(\condPi{\psi}{\phi}, \Pi(\phi))$ in accordance with the principle of least specificity. It can be derived mathematically that this gives us $\condPi{\psi}{\phi} = 1$ if $\Pi(\psi \wedge \phi) = \Pi(\phi)$ and $\condPi{\psi}{\phi} = \Pi(\psi \wedge \phi)$ otherwise whenever $\Pi(\phi) > 0$. When $\Pi(\phi) = 0$, then by convention $\condPi{\psi}{\phi} = 1$ for every  $\psi \neq \bot$ and $\condPi{\bot}{\phi} = 0$, otherwise.  The conditional necessity measure is defined as $\condN{\psi}{\phi} = 1 - \condPi{\neg \psi}{\phi}$. However, there does not seem to be a straightforward way to capture the stable model semantics using conditional necessity measures, especially when classical negation is allowed. Indeed, if we represent the semantics of the program $\set{\arule{b}{a}, \arule{\neg b}{}}$ as the constraints $\condN{b}{a} \geq 1$ and $\condN{\neg a}{\top} \geq 1$. Using the definition of the conditional necessity measure, the first constraint is equivalent to $1-\condPi{\neg b}{a} \geq 1$, \ie $\condPi{\neg b}{a} = 0$. The second constraint simplifies to $\Pi(a) = 0$, which, using the convention stated above gives rise to $\condPi{\neg b}{a} = 1$, clearly a contradictory result to the earlier conclusion that $\condPi{\neg b}{a} = 0$.

The work in~\cite{nicolas:possibilistic} was one of the first papers to explore the idea of combining possibility theory with ASP. Rather than defining the semantics of ASP in terms of constraints on possibility distributions as we did in this paper, the goal was to allow one to reason with possibilities in ASP programs. In this way one can associate certainties with the information encoded in an ASP program. The~approach from~\cite{nicolas:possibilistic} upholds the 1-on-1 relationship between the classical answer sets of a normal program and the possibilistic answer sets, which brings with it some advantages. One of those advantages is that it allows us to deal with possibilistic nested programs~\cite{nieves:nested}. The work from Nicolas et al. was later extended to also cover the case of disjunctive ASP in~\cite{nieves:semantics}. The~latter approach allows us to \eg capture qualitative information by considering partially ordered sets, which would not be straightforward to accomplish in our work. 
 However, the~approaches from~\cite{nicolas:possibilistic} and~\cite{nieves:semantics} work by taking a possibilistic ASP program and reducing it --~by~ignoring the certainty values~-- to a possibilistic ASP program without negation-as-failure. As~such, both approaches loose the certainty encoded through negation-as-failure, since the certainty values are not taken into account. 

Possibility theory has also been used to define various semantics of fuzzy if-then rules~\cite{zadeh:fuzzy2}. Rather than working with literals, fuzzy if-then rules consider fuzzy predicates which each have their own universe of discourse. To draw conclusions from a set of fuzzy if-then rules, mechanisms are needed that can produce an (intuitively acceptable) conclusion from a set of such rules.

Finally, a formal connection also exists between the approach from \pointtosec{characterization} and the work on residuated logic programs~\cite{damasio:monotonic} under the G\"odel semantics. Both approaches are different in spirit, however, in the same way that possibilistic logic (which deals with uncertainty or priority) is different from G\"odel logic (which deals with graded truth). The formal connection is due to the fact that necessity measures are min-decomposable and disappears as soon as classical negation or disjunction is considered.

\section{Conclusions}\label{sec:conclusions}
In this paper we defined semantics for Possibilistic ASP (PASP), a framework that combines possibility theory and ASP to allow for reasoning under (qualitative) uncertainty. These semantics are based on the interpretation of possibilistic rules as constraints on possibility distributions. We showed how our semantics for PASP differ from existing semantics for PASP. Specifically, they adhere to a different intuition for negation-as-failure. As such, they can be used to arrive at acceptable results for problems where the possibilistic answer sets according to the existing semantics for PASP do not necessarily agree with our intuition of the problem. In~addition, we showed how our semantics for PASP allowed for a new characterization of ASP. When looking at ASP as a special case of PASP, we naturally recover the intuition of a rule that the head is certain whenever we are certain that the body holds. The resulting characterization stays close to the intuition of the stable model semantics, yet also shares the explicit reference to modalities with autoepistemic logic. We showed that this characterization not only naturally characterizes normal programs, \ie programs with negation-a-failure, but can also naturally characterize disjunctive programs and programs with classical negation. 

Due to our explicit reference to modalities in the semantics, we are furthermore able to characterize an alternative semantics for disjunction in the head of a rule that has a more epistemic flavour than the standard treatment of disjunction in ASP, \ie given a rule of the form $(\arule{a \vee b}{})$ we do not obtain two answer sets, but rather we have `$a \vee b$' as-is in the answer set. While such a characterization might seem weak, we showed that the interplay with literals significantly affects the expressiveness. Indeed, we found that the problem of brave reasoning/cautious reasoning under these weak semantics for disjunction for a program without negation-as-failure, but with classical negation, is $\complexity{BH}_2$-complete and $\ccoNP$-complete, respectively. This highlights that weak disjunction is not merely syntactic sugar, \ie it cannot simply be simulated in normal ASP without causing an exponential blow-up. For strong disjunction, on the other hand, we have obtained that brave and cautious reasoning without negation-as-failure are $\spolsig{2}$-complete and $\ccoNP$-complete, respectively. 
As such, the weak semantics for disjunction detailed in this paper allow us to work with disjunction in a less complex way that still remains non-trivial. If, however, we restrict ourselves to atoms, then brave reasoning under the weak semantics for disjunction is $\cP$-complete.

\end{document}